\newcommand{\adfnode}[5]{
\node[afnode] (#3) #2 {#4};
\node[#1 of=#3, node distance=5.5mm] {#5};
}
\tikzstyle{afnode} = [draw,thick,shape=circle,minimum size=6mm,font=\normalsize]
\tikzstyle{adflink} = [->,draw,thick]
\newcommand{\asp}{ASP\xspace}
\newcommand{\adf}{ADF\xspace}
\newcommand{\adfs}{ADFs\xspace}
\newcommand{\grappa}{GRAPPA\xspace}
\newcommand{\pname}[1]{\texttt{#1}\xspace}
\newcommand{\diam}{\pname{DIAMOND}}
\newcommand{\qadf}{\pname{QADF}}
\newcommand{\adm}{\textit{adm}}
\newcommand{\com}{\textit{com}}
\newcommand{\prf}{\textit{prf}}
\newcommand{\pef}{\prf}
\newcommand{\ground}{\textit{grd}}
\newcommand{\grd}{\ground}
\newcommand{\stb}{\textit{stb}}
\newcommand{\stable}{\stb}
\newcommand{\mdl}{\textit{mod}}
\newcommand{\truthvals}{\{\tvt,\tvf,\tvu\}}
\newcommand{\lag}{LAG\xspace}
\newcommand{\PNAME}[1]{\texttt{#1}}
\newcommand{\nop}[1]{}
\newcommand{\myIndex}[1]{\index{#1}} %
\newcommand{\tvt}{1} %
\newcommand{\tvf}{0} %
\newcommand{\tvu}{\ensuremath{\textbf{u}}\xspace} %
\newcommand{\ac}[1]{\varphi_{#1}}
\newcommand{\set}[1]{\ensuremath{\left\{#1\right\}}}
\newcommand{\comp}[1]{\ensuremath{\lbrack #1 \rbrack_c}}
\newcommand{\mtt}[1]{\text{\texttt{#1}}}
\newcommand{\ccfont}[1]{\protect\mathsf{#1}}
\newcommand{\NP}{\ensuremath{\ccfont{NP}}\xspace}
\newcommand{\PP}{\ensuremath{\ccfont{P}}\xspace}
\newcommand{\NEXP}{\ensuremath{\ccfont{NEXP}}\xspace}
\newcommand{\Ptime}{\ccfont{P}}
\newcommand{\co}{\ensuremath{\ccfont{co}}}
\newcommand{\phs}[1]{\ensuremath{\Sigma_{#1}^\Ptime}}
\newcommand{\php}[1]{\ensuremath{\Pi_{#1}^\Ptime}}
\newcommand{\ileq}{\leq_i}
\newcommand{\ilt}{<_i}
\newcommand{\parent}{\textit{par}}
\newcommand{\prts}[1]{\mathit{par}_{D}(#1)}
\newcommand{\ie}{i.e.\xspace}
\newcommand{\wrt}{w.r.t.\xspace}
\newcommand{\resp}{resp.\xspace}
\newcommand{\txt}[1]{\textit{#1}}
\newcommand{\cpl}[1]{[#1]_2}
\newcommand{\ext}[1]{\cpl{#1}}
\newcommand{\extension}[1]{\cpl{#1}}
\newcommand{\tvtt}{1}
\newcommand{\tvft}{0}
\newcommand{\tvut}{\textbf{u}}
\newcommand{\leftarr}{\derive}
\newcommand{\adfints}{V}
\newcommand{\adfint}{v}
\newcommand{\aspints}{\mathcal{I}}
\newcommand{\aspint}{I}
\newcommand{\AS}{\mathcal{A\hspace{-1mm}S\hspace{-0.45mm}}}
\newcommand{\agrd}[1]{Gr(#1)}
\newcommand{\ared}[2]{#1^{#2}}
\newcommand{\projection}[2]{{#1}\downharpoonright_{#2}}
\newcommand{\satint}[1]{#1^{\triangle}}
\newcommand{\PassP}{\ensuremath{\textit{asg}}}
\newcommand{\PargP}{\ensuremath{\textit{arg}}}
\newcommand{\PleqP}{\ensuremath{\textit{lt}}}
\newcommand{\PsatP}{\ensuremath{\textit{sat}}}
\newcommand{\PunsatP}{\ensuremath{\textit{inv}}}
\newcommand{\PassTP}{\ensuremath{\textit{asg2}}}
\newcommand{\PsatTP}{\ensuremath{\textit{sat2}}}
\newcommand{\PunsatTP}{\ensuremath{\textit{inv2}}}
\newcommand{\PsaturateP}{\ensuremath{\textit{saturate}}}
\newcommand{\PcmP}{\ensuremath{\textit{cm}}}
\newcommand{\PlneP}{\ensuremath{\textit{lne}}}
\newcommand{\PleqSP}{\ensuremath{\textit{lt2}}}
\newcommand{\PpropP}{\ensuremath{\textit{prop}}}
\newcommand{\PpropSP}{\ensuremath{\textit{prop2}}}
\newcommand{\Pass}[1]{\ensuremath{\textit{asg}(#1)}}
\newcommand{\Parg}[1]{\ensuremath{\textit{arg}(#1)}}
\newcommand{\Pleq}[1]{\ensuremath{\textit{lt}(#1)}}
\newcommand{\Psat}[1]{\ensuremath{\textit{sat}(#1)}}
\newcommand{\Punsat}[1]{\ensuremath{\textit{inv}(#1)}}
\newcommand{\Pnot}{\ensuremath{\textit{not}}}
\newcommand{\derive}{\ensuremath{\texttt{:-}}}
\newcommand{\PassT}[1]{\ensuremath{\textit{asg2}(#1)}}
\newcommand{\PsatT}[1]{\ensuremath{\textit{sat2}(#1)}}
\newcommand{\PunsatT}[1]{\ensuremath{\textit{inv2}(#1)}}
\newcommand{\Psaturate}{\ensuremath{\textit{saturate}}}
\newcommand{\Plne}[1]{\PlneP(#1)}
\newcommand{\Pprop}[1]{\PpropP(#1)}
\newcommand{\Pcm}{\PcmP}
\newcommand{\PleqS}[1]{\PleqSP(#1)}
\newcommand{\PpropS}[1]{\PpropSP(#1)}
\newcommand{\B}{\Omega}
\newcommand{\BT}[1]{\B^{#1}}
\newcommand{\Bl}{\omega}
\newcommand{\Ml}{\mu}
\newcommand{\LE}{\lambda}
\newcommand{\CMl}{\kappa}
\newcommand{\aneq}{\neq}
\def\Lplus{\texttt{+}}
\def\Lminus{\texttt{-}}
\newtheorem{defn}{Definition}
\newtheorem{propn}{Proposition}
\newtheorem{theorem}{Theorem}
\newtheorem{example}{Example}
  \title[Solving Advanced Argumentation Problems with ASP]
        {Solving Advanced Argumentation Problems with Answer Set Programming\footnote{
This research has been supported
by
DFG
(projects BR 1817/7-2 as well as 389792660 - TRR 248) and
FWF
(projects I2854, Y698, P32830, S11409-N23, and W1255-N23).
The authors also thank J\"org P\"uhrer for his helpful observations regarding the encodings.  Also thanks to Wolfgang Dvo{\v{r}}{\'{a}}k and Ringo Baumann for guiding the paper through the editing process. 
}}
\author[Gerhard Brewka et. al.]
{Gerhard Brewka\\
  Universit\"at Leipzig, Leipzig, Germany
\and Martin Diller\\
TU Dresden, Dresden, Germany  
\and Georg Heissenberger, Thomas Linsbichler, Stefan Woltran\\
TU Wien, Vienna, Austria}
\begin{document}

\label{firstpage}

\maketitle
\begin{abstract}
Powerful formalisms for abstract argumentation
have been proposed, among them 
abstract dialectical frameworks (ADFs) that allow
for a succinct and flexible specification of the 
relationship between arguments,
and the GRAPPA framework which 
allows argumentation scenarios to be represented as arbitrary edge-labelled graphs.
%
The complexity of ADFs and GRAPPA is located beyond NP and
ranges up to the third level of the polynomial hierarchy.
The combined complexity of Answer Set Programming (ASP)
exactly matches this complexity when programs are restricted
to predicates of bounded arity. In this paper, we exploit
this coincidence and present novel efficient translations from
ADFs
and GRAPPA to ASP.  
More specifically, we provide reductions for the five main
ADF semantics of admissible, complete, preferred, grounded, and
stable interpretations, and exemplify how these reductions need
to be adapted for GRAPPA for the admissible, complete and preferred semantics.  
Under consideration in Theory and Practice of Logic Programming (TPLP).
\end{abstract}

  \begin{keywords}
  ADFs, GRAPPA, encodings, ASP
  \end{keywords}

  \tableofcontents

  \section{Introduction}

Argumentation is an active area of research with applications in legal reasoning \cite{Bench-CaponD05}, decision making \cite{AmgoudP09}, e-governance \cite{CartwrightA09} and multi-agent systems \cite{ConfArgmas2011}.
Dung's
argumentation frameworks \cite{DBLP:journals/ai/Dung95}, AFs for short, are widely used in argumentation.
They focus entirely on conflict resolution among arguments, treating the latter as abstract items without
logical structure. Although AFs are quite popular,
various generalizations aiming for easier and more natural representations %
have been proposed; see \cite{DBLP:journals/expert/BrewkaPW14} for an overview.

We focus on two such generalizations, namely
\adfs \cite{DBLP:conf/kr/BrewkaW10,DBLP:conf/ijcai/BrewkaSEWW13} and
\grappa \cite{DBLP:conf/ecai/BrewkaW14},
which are expressive enough to capture many of the other available frameworks;
see also the recent handbook article~\cite{BrewkaESWW18} which surveys both formalisms.
Reasoning in \adfs
spans the first three levels
of the polynomial hierarchy \cite{DBLP:journals/ai/StrassW15}.
These results carry over to \grappa ~\cite{DBLP:conf/ecai/BrewkaW14}.
ADFs, in particular, have received increasing attention recently,
see e.g.\
\cite{GagglS14,Booth2015} including also practical applications 
in fields such as 
legal reasoning~\cite{Al-Abdulkarim2016,AtkinsonB18},
text exploration~\cite{CabrioV16} or
discourse analysis~\cite{Neugebauer18}.

Two approaches to implement ADF reasoning
have been proposed in the literature.
\qadf~\cite{DBLP:conf/comma/DillerWW14,DilWalWol2015} encodes problems as quantified Boolean formulas (QBFs)
such that a single call of a QBF solver delivers the result. 
The
\diam family of systems~\cite{EllmauthalerS2014,EllmauthalerS2016,StrassE17}, on the other hand, employ %
Answer Set Programming (ASP).
Since the \diam systems rely on \emph{static} encodings, \ie the encoding does not change for different framework instances,
this approach is limited by the \emph{data complexity} of ASP
(which only reaches the second level
of the polynomial hierarchy~\cite{DBLP:journals/amai/EiterG95,DBLP:journals/tods/EiterGM97}).
Therefore, the preferred semantics in particular
(which comprise the hardest problems
for ADFs and GRAPPA)
needs a more complicated treatment
involving two consecutive calls to ASP solvers
with a possibly exponential blowup for the input of the second call.
A GRAPPA interface has been added to \diam  \cite{Berthold16},
but
we are not aware of any systems for \grappa
not employing a translation to ADFs as an intermediate step.

In this paper,
we introduce a new method for
implementing
reasoning tasks related to both ADFs and GRAPPA
such that even the hardest among the problems
are treated with a \emph{single} call to an ASP solver
(and avoiding any exponential blow-up in data or program size). 
The reason for choosing ASP is that the rich syntax of \grappa
is captured much more easily by ASP than by other formalisms
like QBFs.
Our approach makes use of the fact that
the \emph{combined complexity} of ASP
for programs %
with predicates
of bounded arity~\cite{DBLP:journals/amai/EiterFFW07}
exactly matches the complexity of ADFs and GRAPPA.
This approach is called \emph{dynamic},
because the encodings are generated individually for every instance.
This allows to generate rules of arbitrary length that can take care
of NP-hard subtasks themselves.
This particular method has been advocated in
\cite{BichlerMW2016} in combination with tools that decompose such 
long rules whenever possible in order to avoid huge groundings 
\cite{BichlerMW16c}.
To the best of our knowledge, our work is the first to apply this 
technique in the field of argumentation.

More specifically,
we provide encodings
for the admissible, complete, preferred, grounded and stable semantics
for \adfs  
and discuss how such encodings can be adapted to \grappa.
Depending on the semantics
(and their complexity)
the encodings yield normal or,
in the case of preferred semantics,
disjunctive programs.
We specify the encodings in a modular way,
which makes our approach amenable for extensions
to other semantics.
We further provide some details about the resulting
system \PNAME{YADF} (``Y'' standing for dYnamic) which is publicly available 
at \url{https://www.dbai.tuwien.ac.at/proj/adf/yadf/}.  Finally, we give an overview of recent empirical evaluations, including our own, comparing the performance of \PNAME{YADF} with the other main existing ADF systems.
%
%
%
%
%

This paper is an extended version of \cite{BrewkaDHLW17}, which did not
contain the encodings for the grounded and stable semantics. In addition,
we provide some prototypical proofs for the correctness of the encodings.  We also update the discussion on empirical evaluations. The paper is based on (Section 3.2 of) the second author's thesis~\cite{dil19}.

\section{Background}

\paragraph{ADFs.}
An \adf %
is a directed graph whose nodes represent statements. The links represent dependencies: the acceptance status of a node $s$ only depends on the acceptance status of its parents (denoted $par(s)$; often also with a subscript as in $\prts{s}$ to make the reference to the ADF $D$ explicit), that is, the nodes with a direct link to $s$. In addition, each node $s$ has an associated acceptance condition $C_s$ specifying the %
conditions under which $s$ is acceptable.

%

It is convenient to represent the acceptance conditions as a collection $C = \{\ac{s}\}_{s \in S}$ of propositional formulas. This leads to the logical representation of \adfs we will use in this paper where an \adf $D$ is a pair $(S,C)$ with the set of links $L$ implicitly given as $(a, b) \in L$ iff $a$ appears in $\ac{b}$.

Semantics assign to ADFs a collection of (3-valued) interpretations, i.e. mappings of the statements to truth values $\{\tvt,\tvf,\tvu\}$, denoting true, false and undecided, respectively.
The three truth values are partially ordered by $\ileq$ according to their information content:
we have \mbox{$\tvu \ilt \tvt$} and \mbox{$\tvu \ilt \tvf$} and no other pair in $\ilt$.
The information ordering $\ileq$ extends in a straightforward way to interpretations $v_1,v_2$ over $S$ in that
\mbox{$v_1 \ileq v_2$} iff \mbox{$v_1(s) \ileq v_2(s)$} for all \mbox{$s\in S$}.

An
interpretation $v$ is 2-valued if all statements are mapped to $\tvt$ or $\tvf$.
For %
interpretations $v$ and $w$, we say that %
$w$ extends $v$ iff \mbox{$v \ileq w$}.
We denote by $[v]_2$ the set of all completions of $v$, i.e.\ 2-valued interpretations that extend~$v$.

For an ADF $D=(S,C)$, $s\in S$ and an
interpretation $v$, the characteristic function $\Gamma_D(v) = v'$ is given by
\[
  v'(s) = \begin{cases}
          \tvt \text{ if }  w(\ac{s}) = \tvt \text{ for all } w\in[v]_2\\
          \tvf \text{ if }  w(\ac{s}) = \tvf \text{ for all } w\in[v]_2\\
          \tvu \text{ otherwise}.
          \end{cases}
\]

\noindent That is, the operator returns an
interpretation mapping a statement $s$
to $\tvt$ (resp.\ $\tvf$) iff all 2-valued interpretations extending $v$ evaluate $\ac{s}$ to true (resp.\ false).
Intuitively, $\Gamma_D$ checks which truth values can be justified based on the information in $v$ and the acceptance conditions. Note that
$\Gamma_D$
is defined on 3-valued interpretations, while we evaluate acceptance conditions under their 2-valued completions.%
Given
an \adf $D=(S,\{\ac{s}\}_{s \in S})$,
an
interpretation $v$ is \emph{admissible} \wrt $D$ if
$v \ileq \Gamma_D(v)$; it
is \emph{complete} \wrt $D$ if
$v = \Gamma_D(v)$;
it is \emph{preferred} \wrt $D$ if $v$ is maximal admissible \wrt $\ileq$.
An interpretation $v$ is the (unique) \emph{grounded} interpretation \wrt $D$ if $v$ is complete and there is no other complete interpretation $w$ for which $w \ilt v$.

Turning to semantics returning 2-valued interpretations, a 2-valued interpretation $v$ is a \emph{model} of $D$ if $v(s) = v(\ac{s})$ for every $s \in S$. The definition of the stable semantics for ADFs is inspired by the stable semantics for logic programs, its purpose being to disallow cyclic support within a model.  First of all, in order to be a \emph{stable} model of $D$ $v$ needs to be a model of $D$.  Secondly, $E_v = \{s \in S \mid v(s) = \tvt\}$ must equal the statements set to true in the grounded interpretation of the reduced ADF \mbox{$D^v = (E_v, \{\ac{s}^v\}_{s \in E_v})$}, where
  for \mbox{$s \in E_v$} we set
  \mbox{$\ac{s}^v := \ac{s}[b/\bot:v(b)=\tvf]$}. If $\projection{v}{E_v}$ is the interpretation $v$ projected on $E_v$, i.e. $\projection{v}{E_v}(s) = v(s)$ for $s \in E_v$ and undefined otherwise, then the latter amounts to the fact that $\projection{v}{E_v}$ be the grounded interpretation of $D^v$.

As shown in \cite{DBLP:conf/ijcai/BrewkaSEWW13}
these semantics generalize the corresponding notions defined %
for AFs.
For $\sigma \in \{\adm,\com,\prf,\ground, \mdl, \stable \}$, $\sigma(D)$ denotes the set of all
admissible (resp.\ complete, preferred, grounded, model, stable) interpretations \wrt $D$.

%
%

\begin{example}
\label{ex:adf}
In Figure~\ref{fig:adf_example} we see an example ADF $D=(\{a,b,c\},C)$ with the acceptance conditions $C$ given by $\varphi_a= b \lor \neg b$, $\varphi_b=b$ and $\varphi_c=c\rightarrow b$. The acceptance conditions are shown below the statements in the figure.

\begin{figure}
\begin{center}
\begin{tikzpicture}
\adfnode{below}{}{a}{$a$}{$b\lor \neg b$}
\adfnode{below}{at (2,0)}{b}{$b$}{$b$}
\adfnode{below}{at (4,0)}{c}{$c$}{$c \rightarrow b$}

\draw[adflink,<-] (a) -- (b);
\draw[adflink,->] (b) -- (c);
\draw[adflink,in=350,out=10,loop] (c) to (c);
\draw[adflink,in=100,out=80,loop] (b) to (b);

\end{tikzpicture}
\caption{ADF example}
\label{fig:adf_example}
\end{center}
\end{figure}
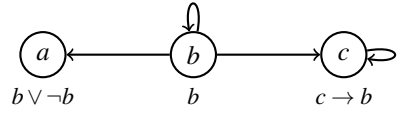

\begin{table}
  \centering
\begin{tabular}{c|ccc|c}
      & $a$    & $b$    & $c$    & \\
\hline
$v_1$ & $\tvu$ & $\tvu$ & $\tvu$ & $\adm$\\
$v_2$ & $\tvu$ & $\tvf$ & $\tvu$ & $\adm$\\
$v_3$ & $\tvu$ & $\tvt$ & $\tvu$ & $\adm$\\
$v_4$ & $\tvt$ & $\tvt$ & $\tvu$ & $\adm$\\
$v_5$ & $\tvu$ & $\tvt$ & $\tvt$ & $\adm$\\
$v_6$ & $\tvt$ & $\tvu$ & $\tvu$ & $\adm$, $\com$, $\ground$\\
$v_7$ & $\tvt$ & $\tvf$ & $\tvu$ & $\adm$, $\com$, $\pef$\\
$v_8$ & $\tvt$ & $\tvt$ & $\tvt$ & $\adm$, $\com$, $\pef$, $\mdl$
\end{tabular}
\caption{All admissible interpretations of the ADF from Figure~\ref{fig:adf_example}. The right most column shows further semantics the interpretations belong to. }
\label{tab:example}
\end{table}

The admissible interpretations of $D$ are shown in Table~\ref{tab:example}. Moreover, the right-most column shows further semantics the interpretations belong to. For instance the interpretation $v_8$ mapping each statement to true is admissible, complete and preferred in $D$ and a model of $D$. 
	The only model of $D$ is $v_8$, with the reduct of this model being $D^{v_8} = D$. The grounded interpretation of $D$ is $v_6$, which is different from $v_8$. Therefore $v_8$ is not a stable model. 
In fact, $D$ does not have a stable model.
\hfill$\diamond$

\end{example}

\paragraph{GRAPPA.}
\adfs are particularly useful as target formalism of translations from graph-based approaches. This raises the question whether an \adf style semantics can be directly defined for arbitrary labelled graphs, thus circumventing the need for any translations.
\grappa \cite{DBLP:conf/ecai/BrewkaW14} fulfills exactly that goal.

\grappa allows argumentation scenarios to be defined using arbitrary directed edge-labelled graphs. The nodes in $S$ represent statements, as before. Labels of links, which may be chosen as needed, describe the type of relationship between a node and its parents. As for \adfs, each node has its own acceptance condition, and the semantics of a graph is defined in terms of 3-valued interpretations. The major difference is that acceptance conditions are no longer specified in terms of the acceptance status of the parents of a node, but on the labels of its active incoming links, where a link is active if its source node is true and a label is active if it is the label of an active link. More precisely, since it can make an important difference whether a specific label appears once or more often on active links, the acceptance condition depends on the multiset of active labels of a node, that is, an acceptance condition is a function of the form $(L \rightarrow \mathbb{N}) \rightarrow \{\tvt,\tvf\}$, where $L$ is the set of all labels.

\grappa acceptance functions
are specified using
\emph{acceptance patterns} over a set of labels
$L$ %
defined
as follows:
\begin{itemize}
\item A \emph{term} over $L$ is of the form
$\#(l)$, $\#_t(l)$ %
(with $l \in L$), or
$min$, $min_t$, $max$, $max_t$, $sum$, $sum_t$, $count$, $count_t$.
\item A \emph{basic acceptance pattern} (over $L$) is of the form
$a_1 t_1 + \cdots + a_n t_n \,R\, a$,
	where the $t_i$ are terms over $L$, the $a_i$s and $a$ are integers and
	$R \in \{<,\leq, =, \neq, \geq, >\}$.
\item An \emph{acceptance pattern} (over $L$) is a basic acceptance pattern or a Boolean combination of acceptance patterns.
\end{itemize}

A \grappa \emph{instance} is a tuple $G = (S,E,L,\lambda,\alpha)$
where $S$ is a set of statements, $E$ a set of edges, $L$ a set of labels, $\lambda$ an assignment of labels to edges, and  $\alpha$ an assignment of acceptance patterns over $L$ to nodes.

For a multiset of labels $m: L \rightarrow \mathbb{N}$ and $s \in S$ the value function $val_s^m$ is:

\begin{tabbing}
xx \= xxxxxxxxxxxxxx \= \kill
\> $val_s^m(\#l) $ \> $ = m(l)$\\
\>$val_s^m(\#_tl) $ \> $ = |\{(e,s) \in E \mid \lambda((e,s)) = l\}|$ \\
\>$val_s^m(min) $ \> $ = \textbf{min} \{ l \in L \mid m(l) > 0 \}$ \\
\>$val_s^m(min_t) $ \> $ = \textbf{min} \{\lambda((e,s)) \mid (e,s) \in E \}$ \\
\>$val_s^m(max) $ \> $ = \textbf{max} \{ l \in L \mid m(l) > 0 \}$ \\
\>$val_s^m(max_t) $ \> $ = \textbf{max} \{\lambda((e,s)) \mid (e,s) \in E \}$ \\
\>$val_s^m(sum) $ \> $ = \sum_{l \in L} m(l)$ \\
\>$val_s^m(sum_t) $ \> $ = \sum_{(e,s) \in E} \lambda((e,s))$ \\
\>$val_s^m(count) $ \> $ = | \{ l \mid m(l)>0\}| $ \\
\>$val_s^m(count_t) $ \> $ = | \{ \lambda((e,s)) \mid (e,s) \in E \}|$ \\
\end{tabbing}

\noindent $min_{(t)}$, $max_{(t)}$, $sum_{(t)}$ are undefined in case of non-numerical labels.
For $\emptyset$
they yield the neutral element of the corresponding operation, \ie
$val_s^m(sum)=val_s^m(sum_t)=0$,
$val_s^m(min)=val_s^m(min_t)=\infty$, and
$val_s^m(max)=val_s^m(max_t)=-\infty$.  Let $m$ and $s$ be as before. %
For a basic acceptance pattern
$\alpha = a_1 t_1 + \cdots + a_n t_n R$
we define $\alpha(m,s) = \tvt$ if $\sum_{i=1}^n \big(a_i \;val_s^m(t_i)\big) \;R \;a$, while $\alpha(m,s) = \tvf$ otherwise.  The extension to the evaluation of Boolean combinations is as usual.



The characteristic function $\Gamma_G$ for a GRAPPA instance $G$, as is the case for the characteristic function for ADFs, takes a 3-valued interpretation $v$ and produces a new one $v'$. Again $v'$ is constructed by considering all 2-valued completions $w$ of $v$, picking a classical truth value only if all extensions produce the same result. But this time an intermediate step is needed to determine the truth value of a node $s$: one first has to determine the multiset of active labels of $s$ generated by $w$. The acceptance function then takes this multiset as argument and produces the truth value induced by $w$.

Let $v$ be a two valued interpretation.  The multi-set of active labels of $s\in S$ in $G$ under $v$, $m^{v}_s$, is defined as

\begin{align*}
  m^{v}_s(l)=|\{(p, s) \in E \mid v(p)=\tvt, \lambda((p, s))=l\}|
\end{align*}

\noindent for each $l\in L$.  Then the characteristic function $\Gamma_G(v) = v'$ for a GRAPPA instance $G$ is given by
\[
  v'(s) = \begin{cases}
          \tvt \text{ if }  \alpha(s)(m^{w}_s,s) = \tvt \text{ for all } w\in[v]_2\\
          \tvf \text{ if }  \alpha(s)(m^{w}_s,s) = \tvf \text{ for all } w\in[v]_2\\
          \tvu \text{ otherwise}.
          \end{cases}
\]

\noindent With this new characteristic function the semantics of a graph $G$ can be defined as for \adfs, that is,
an
interpretation $v$ is \emph{admissible} \wrt $G$ if
$v \ileq \Gamma_G(v)$; it
is \emph{complete} \wrt $G$ if
$v = \Gamma_G(v)$;
it is \emph{preferred} \wrt $G$ if $v$ is maximal admissible \wrt $\ileq$.
As before $\sigma(G)$ ($\sigma \in \{\adm,\com,\prf\}$) denotes the set of all respective interpretations.

\begin{example}
\label{ex:grappa}
Consider the GRAPPA instance $G$
with
$S=\{a,b,c\}$,
$E=\{(a,b),(b,b),(c,b),(b,c)\}$,
$L=\{\Lplus,\Lminus\}$,
all edges being labelled with $\Lplus$ except $(b,b)$ with $\Lminus$,
and
the
acceptance condition
$\#_t(\Lplus) - \#(\Lplus) = 0 \wedge \#(\Lminus) = 0$
(i.e.\ all $\Lplus$-links must be active and no $\Lminus$-link is active)
for each statement.
The following interpretations are admissible \wrt $G$:
$v_1 = \{a {\rightarrow} \tvu, b {\rightarrow} \tvu, c {\rightarrow} \tvu\}$,
$v_2 = \{a {\rightarrow} \tvu, b {\rightarrow} \tvf, c {\rightarrow} \tvf\}$,
$v_3 = \{a {\rightarrow} \tvt, b {\rightarrow} \tvu, c {\rightarrow} \tvu\}$,
$v_4 = \{a {\rightarrow} \tvt, b {\rightarrow} \tvf, c {\rightarrow} \tvf\}$.
Moreover,
$\com(G)=\{v_3,v_4\}$ and $\prf(G)=\{v_4\}$.
\hfill$\diamond$
\end{example}

%
\paragraph{ASP.}
In Answer Set Programming
\cite{DBLP:journals/tocl/LeonePFEGPS06,BrewkaET11}
problems are described using logic programs, which are sets of rules of the form

\[
a_1 \vee \ldots \vee a_n \derive b_1 ,\ldots, b_k , \textit{ not } b_{k+1} , \ldots , \textit{ not } b_m.
\]

\noindent
Here each
$a_i$ ($1 \leq i \leq n$) and $b_j$ ($1 \leq j \leq m$) is a ground atom.  The symbol $not$ stands for \emph{default negation}.
We call a rule a \emph{fact} if $n=0$. %
An \emph{(input) database} is a set of facts.
A rule $r$ is \emph{normal} if $n \leq 1$ and a \emph{constraint} if $n = 0$.  $B(r)$ denotes the body of a rule and $H(r)$ the head.  
A \emph{program} is a finite set of %
disjunctive rules.
If each rule in a program is normal %
we call the program normal, otherwise the program is disjunctive.

Each logic program $\pi$ induces a collection of so-called \emph{answer sets},
denoted as $\AS(\pi)$,
which are distinguished models of the program determined by the answer set semantics.
The answer sets of a program $\pi$ are the subset-minimal models satisfying the Gelfond-Lifschitz reduct $\ared{\pi}{\aspint}$ of $\pi$; see \cite{DBLP:journals/ngc/GelfondL91} for details.  

For non-ground programs, which we use here,
rules with variables are viewed as shorthand for the set of their ground instances. We denote by $\agrd{\pi}$ the ground instance of a program $\pi$.
Modern ASP solvers %
offer further additional language features such as built-in arithmetics and aggregates
which we make use of in our encodings
(we refer to \cite{web:potguide} for an explanation).

\begin{table}
\begin{center}
\caption{Complexity results for \adfs, \grappa and ASP.}
\label{tbl:complResGrappa}
\vspace{-2.5mm}
\begin{small}
\begin{tabular}{l | ccccc | cc }
\hline
& \multicolumn{5}{c|}{\adf and \grappa} & \multicolumn{2}{c}{ASP bounded arity} \\
         & \adm & \com & \prf & \ground & \stb & normal & disjunctive \\
\hline			
    cred & \phs{2} & \phs{2} & \phs{2} & \co\NP & \phs{2} & \phs{2} & \phs{3} \\ 			
    skept & trivial & \co\NP & \php{3} & \co\NP & \php{2} & \php{2} & \php{3}
\end{tabular}
\end{small}
\end{center}
\end{table}

\paragraph{Complexity.}
The
complexity results %
that are central for our work
are given in Table \ref{tbl:complResGrappa}.
Here credulous reasoning means deciding whether a statement (resp.\ atom) is true in at least one interpretation (resp.\ answer set) of the respective type,
skeptical reasoning whether it is true in all such interpretations (resp.\ answer sets).

The results %
for \adfs \cite{DBLP:journals/ai/StrassW15} carry over to \grappa, as argued in~\cite{DBLP:conf/ecai/BrewkaW14}.
The results for normal and disjunctive \asp-programs we use here refer to the combined complexity for
non-ground programs of \emph{bounded} predicate arity
(\ie there exists a constant $n \in \mathbb{N}$ such that the arity of every predicate occurring in the program is smaller than $n$) and
are due to \cite{DBLP:journals/amai/EiterFFW07}.
We recall that the combined complexity of arbitrary %
programs
is much higher ($\NEXP$-hard, see e.g. \cite{DBLP:journals/tods/EiterGM97})
while data complexity
(\ie the \asp-program is assumed to be static and only the database of the program is changing) is one level lower in the polynomial hierarchy
(follows from
\cite{DBLP:journals/amai/EiterG95}).

These results indicate that there exist efficient translations to non-ground normal programs
of bounded arity for credulous reasoning \wrt the admissible, complete, preferred, and stable semantics; skeptical reasoning for the stable semantics can be reduced to skeptical reasoning for normal programs.  Skeptical preferred reasoning needs to be
treated with disjunctive programs. We provide such reductions in
what follows.

\section{ADF encodings}
\label{sec:encadf}


We construct ASP encodings $\pi_{\sigma}$ for the semantics $\sigma \in \{\adm,\com,\pef,\ground, \stb\}$ such that there is a 
 correspondence between the $\sigma$ interpretations of an ADF $D = (S,C)$ and the answer sets of $\pi_{\sigma}(D)$ (the encoding function $\pi_{\sigma}$ applied to $D$).  More precisely, we will use atoms $\Pass{s,x}$ with
$s \in S, x \in \truthvals$  to represent ADF interpretations in our encodings.
An interpretation $\adfint$ of $D$ and a set of ground atoms (interpretation of an ASP program) $\aspint$ correspond to each other, $\adfint \cong \aspint$, whenever for every $s \in S$, $\adfint(s) = x$ iff  $\Pass{s,x} \in \aspint$. We overload $\cong$ to get the correspondence between
sets of interpretations and sets of answer sets
we aim for.

\begin{defn}
\label{defcorr}
Given a set of (ADF) interpretations $\adfints$ and a collection of sets of ground atoms (ASP interpretations) $\aspints$,
we say that $\adfints$ and $\aspints$ correspond, $\adfints \cong \aspints$, if
\begin{enumerate}
\item for every $\adfint \in \adfints$ there is an $\aspint \in \aspints$ s.t. $\adfint \cong \aspint$;
\item for every $\aspint \in \aspints$ there is a  $\adfint \in \adfints$ s.t. $\adfint \cong \aspint$.
\end{enumerate}
\end{defn}


Having encodings $\pi_{\sigma}$ for $\sigma \in \{\adm,\com,\pef,\ground, \stb\}$ for which $\sigma(D) \cong \AS(\pi_{\sigma}(D))$ for any ADF $D$ allows to enumerate the $\sigma$ interpretations of an ADF $D$ by reading the ADF interpretations that correspond (via $\cong$) to each $\aspint \in \AS(\pi_{\sigma}(D))$ off the predicates $\Pass{s,x} \in \aspint$  ($s \in S$, $x \in \truthvals$).  Results for credulous and skeptical reasoning for each of the semantics are obtained via the homonymous ASP reasoning tasks applied on our encodings.

\subsection{Encoding for the admissible semantics}
\label{sec::admadf}

In the course of presenting our dynamic ASP encodings for the admissible semantics we introduce several elements we will make use of throughout Section \ref{sec:encadf}.  Among these is that all encodings will assume a simple set of facts indicating the statements of the input ADF $D = (S, \{\ac{S}\}_{s \in S})$:

\begin{align*}
\pi_{\txt{arg}}(D) := \{ \Parg{s}. \mid s \in S \}. 
\end{align*}

\noindent Also, several of the encodings will need facts for encoding the possible truth values that can be assigned to a statement $s$ by a completion of an interpretation mapping $s$ to $\tvut$, $\tvtt$, and $\tvft$, respectively:

\begin{align*}
\pi_{\txt{lt}} :=
\{ \Pleq{\tvut,\tvft}. \ \Pleq{\tvut,\tvtt}. \ \Pleq{\tvtt,\tvtt}. \ \Pleq{\tvft,\tvft}. \}.
\end{align*}

\noindent Here, for instance the atoms $\Pleq{\tvut,\tvft}$ and $\Pleq{\tvut,\tvtt}$ together express that if an ADF interpretation maps a statement to the truth value $\tvut$ then a completion (of the interpretation in question) can map the same statement to the truth values $\tvft$ or $\tvtt$. Note in particular that $\Pleq{\tvut,\tvut} \not\in \pi_{\txt{lt}}$ since completions can map a statement only to the truth value $\tvft$ or $\tvtt$.

All of our encodings, including the one for the admissible semantics, follow the guess \& check methodology that is at the heart of the ASP paradigm \cite{JanhunenN16}. Here parts of a program delineate candidates for a solution to a problem.  These are often referred to as ``guesses''.  Other parts of the program, the ``constraints'', then check whether the guessed candidates are indeed solutions.  In the case of the encodings for ADFs the guessing part of the programs  outline possible assignments of truth values to the statements, i.e. an ADF interpretation.  For the three valued semantics, as the admissible semantics, the rules are as follows:

\begin{align*}
\pi_{\txt{guess}}%
			:= \{ \Pass{S,\tvft} \derive & \Pnot\;\Pass{S,\tvtt}, \Pnot\;\Pass{S,\tvut}, \Parg{S}.\\
                            \Pass{S,\tvtt} \derive & \Pnot\;\Pass{S,\tvut}, \Pnot\;\Pass{S,\tvft}, \Parg{S}. \\
                            \Pass{S,\tvut} \derive & \Pnot\;\Pass{S,\tvft}, \Pnot\;\Pass{S,\tvtt}, \Parg{S}.\}.
\end{align*}

We follow~\cite{BichlerMW2016} in encoding NP-checks in large non-ground rules having bodies with predicates of bounded arity.  In particular,  in all our encodings we will need rules encoding the semantic evaluation of propositional formulas; e.g. the evaluation of the acceptance conditions by completions of an interpretation. Given a propositional formula $\phi$, for this we introduce the function $\B$. For assignments of truth values ($\tvtt$ and $\tvft$) to the propositional variables in $\phi$, $\B(\phi)$ gives us a set of atoms corresponding to the propagation of the truth values to the subformulas of $\phi$ in accordance with the semantics of classical propositional logic.  The atoms make use of ASP variables $V_{\psi}$ where $\psi$ is a subformula of $\phi$.  The variables $V_{p}$, where $p$ is a propositional variable occurring in $\phi$, can be used by other parts of ASP rules employing the atoms in $\B(\phi)$ for purposes of assigning intended truth values to the propositional variables in $\phi$.  

For the definition of the atoms $\B(\phi)$ we rely on the ASP built in arithmetic functions $\texttt{\&}$ (bitwise AND), $\texttt{?}$ (bitwise OR), and $\texttt{-}$ (subtraction). We also use the built in comparison predicate $=$.  Let $\phi$ be a propositional formula over a set of propositional variables $P$; then the set of atoms in question is defined as

\[
\B(\phi) \hspace{-1.1pt}:=\hspace{-1mm}
    \left\{
        \begin{array}{ll}
                \hspace{-2mm}
		\B(\phi_1) \cup 	
		\B(\phi_2) \cup 	
		\{ V_\phi = V_{\phi_1} \texttt{\&} V_{\phi_2} \} &
			\hspace{-2mm}\mbox{if\ } \phi=\phi_1\wedge\phi_2\\
                \hspace{-2mm}
		\B(\phi_1) \cup 	
		\B(\phi_2) \cup 	
		\{ V_\phi = V_{\phi_1} \texttt{?} V_{\phi_2} \} &
			\hspace{-2mm}\mbox{if\ } \phi=\phi_1\vee\phi_2\\
                \hspace{-2mm}
		\B(\psi) \cup 	
		\{ V_\phi = 1 \texttt{-} V_{\psi} \} &
			\hspace{-2mm}\mbox{if\ } \phi=\neg \psi	\\
                \hspace{-2mm}
		\emptyset &
			\hspace{-2mm}\mbox{if\ } \phi= p\in P
        \end{array}
	\right.
\]

\noindent
where $V_\phi$,
$V_{\phi_1}$
$V_{\phi_2}$ and
$V_{\psi}$ are variables representing the subformulas of $\phi$.

Our encoding for the admissible semantics, $\pi_{\adm}$, is based on the fact that an interpretation $\adfint$ for an ADF $D$ is admissible iff for every $s \in S$ it is the case that

\begin{itemize}
\item if $v(s) = \tvtt$ then there is no $w \in \cpl{v}$ s.t. $w(\ac{s}) = \tvft$,
\item if $v(s) = \tvft$ then there is no $w \in \cpl{v}$ s.t. $w(\ac{s}) = \tvtt$.
\end{itemize}

\noindent This is a simple consequence of the definition of the admissible semantics. Any $w \in \cpl{v}$ which contradicts this simple observation (e.g. $v(s) = \tvtt$ and $w(\ac{s}) = \tvft$) is a ``counter-model'' to $\adfint$ being an admissible interpretation.  The constraining part of our encoding for the admissible semantics essentially disallows guessed assignments of truth values to the statements of an ADF corresponding to ADF interpretations which have counter-models to them being admissible.

To encode the constraints of our encoding we need auxiliary rules firing when the guessed assignments have counter-models to them being admissible.  These rules, two for each $s \in S$, make use of bodies $\Bl_s$ where $\B(\ac{s})$ is employed to evaluate the acceptance conditions by the completions.  The latter are obtained by setting variables $V_t$ for $t \in \prts{s}$ with the adequate truth values by using the predicates $\PassP$ and $\PleqP$ defined in $\pi_{\txt{guess}}$ and $\pi_{\txt{lt}}$:

\[
\Bl_s := \{ \Pass{t,Y_t}, \Pleq{Y_t,V_t} \mid
t\in\prts{s}\} \cup \B(\ac{s}).
\]

The two rules for every statement $s \in S$
have heads $\Psat{s}$ and $\Punsat{s}$
that %
fire
in case there is some completion of the interpretation corresponding to the assignments
guessed in the program fragment $\pi_{\text{guess}}$
such that the acceptance condition $\ac{s}$ evaluates to $\tvtt$ and $\tvft$, respectively:

\begin{align*}
\pi_{\txt{sat}}(D) := \{
& \Psat{s} \derive
\Bl_s, V_{\ac{s}} = 1. \\
& \Punsat{s} \derive
\Bl_s, V_{\ac{s}} = 0.
\mid
s\in S
\}.
\end{align*}

\noindent Here for an ADF interpretation $v$ ``guessed'' via the fragment $\pi_{\txt{guess}}$ (and encoded using the predicate ``$\PassP$'') and a $s \in S$, $\Psat{s}$ is derived whenever $v(s) = \tvf$ and there is a $w \in \ext{v}$ for which $w(\phi_s) = \tvt$.  On the other hand, $\Punsat{s}$ is derived whenever $v(s) = \tvt$ and there is a $w \in \ext{v}$ for which $w(\phi_s) = \tvf$. The atoms using predicates ``$\PassP$'' and ``$\PleqP$'' in $\Bl_s$ are used to encode possible assignments a completion $w \in \ext{v}$ can take, while the atoms in $\B(\ac{s})$ propagate such assignments to $\ac{s}$ in accordance with the semantics of classical propositional logic by making use of ASP built-ins and auxiliary variables $V_{\psi}$ for every subformula $\psi$ of $\ac{s}$.  

The encoding for the admissible semantics %
 now results from compounding the program fragments $\pi_{\txt{arg}}(D)$, $\pi_{\txt{lt}}$, $\pi_{\txt{guess}}$, and $\pi_{\txt{sat}}(D)$
together with ASP constraints which filter out guessed assignments of statements to truth values (via $\pi_{\txt{guess}}$) corresponding to interpretations of $D$ having counter-models to being admissible:

\begin{align*}
&\pi_{\adm}(D) := %
\pi_{\txt{arg}}(D) \ \cup \
\pi_{\txt{lt}} \ \cup \
\pi_{\txt{guess}} \ \cup \
\pi_{\txt{sat}}(D) \ \cup \\
&\{  \derive \Parg{S}, \Pass{S,\tvtt}, \Punsat{S}.
\;\,\derive \Parg{S}, \Pass{S,\tvft}, \Psat{S}. \}.
\end{align*}

\noindent For instance the last constraint in $\pi_{\adm}(D)$ disallows guessed interpretations $v$ for which there is a $s \in S$ and $w \in \ext{v}$ such that $v(s) = \tvft$ and $w(\ac{s}) = \tvtt$.  

Proposition \ref{thm:adm} formally states that $\pi_{\adm}$ is an adequate encoding function.  For the proof, which is prototypical for most of the proofs of correctness in this work, we use the notation

\begin{align*}
I_p := \{p(t_1,\ldots,t_n) \in \aspint\}.
\end{align*}

\noindent For an ASP interpretation $\aspint$ (set of ground atoms), $I_p$ represents $\aspint$ projected onto the predicate $p$ (with arity $n$).  

\begin{propn}
\label{thm:adm}
For every ADF $D$ it holds that $\adm(D) \cong \AS(\pi_{\adm}(D))$.
\end{propn}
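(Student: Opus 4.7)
The plan is to establish both directions of Definition~\ref{defcorr} by exploiting the modular structure of $\pi_{\adm}(D)$. First I would analyze the guess component in isolation: the three rules of $\pi_{\txt{guess}}$ together with the facts $\pi_{\txt{arg}}(D)$ guarantee that in any answer set $\aspint$ of $\pi_{\adm}(D)$, each $s \in S$ receives exactly one atom $\Pass{s,x}$ with $x \in \truthvals$. Hence $\aspint$ canonically induces a unique $3$-valued interpretation $\adfint$ with $\adfint \cong \aspint_{\PassP}$, and conversely every $3$-valued interpretation over $S$ arises as such a guess.

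Next I would prove the key lemma that $\pi_{\txt{sat}}(D)$ correctly reflects the behavior of $2$-valued completions: for each $s \in S$, $\Psat{s} \in \aspint$ iff some $w \in \cpl{\adfint}$ satisfies $w(\ac{s}) = \tvt$, and $\Punsat{s} \in \aspint$ iff some $w \in \cpl{\adfint}$ satisfies $w(\ac{s}) = \tvf$. This reduces to a structural induction on $\ac{s}$ showing that, for any grounding of the rule in $\pi_{\txt{sat}}(D)$ with variables $V_t$ ($t \in \prts{s}$) bound via the atoms $\Pass{t,Y_t}, \Pleq{Y_t,V_t}$, the arithmetic constraints in $\B(\ac{s})$ propagate Boolean values so that $V_{\ac{s}}$ equals $w(\ac{s})$, viewed as an integer in $\{0,1\}$. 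The crucial observation is that $\Pleq{\tvu,\tvu} \notin \pi_{\txt{lt}}$, so the pairs $(Y_t, V_t)$ range exactly over the assignments consistent with some $w \in \cpl{\adfint}$: if $\adfint(t) \in \{\tvt,\tvf\}$ then $V_t$ is forced, and if $\adfint(t) = \tvu$ then $V_t$ can take either classical value.

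Equipped with this lemma, the two directions follow. For the forward inclusion, given $\adfint \in \adm(D)$, I construct the candidate $\aspint$ consisting of $\pi_{\txt{arg}}(D) \cup \pi_{\txt{lt}} \cup \{\Pass{s,\adfint(s)} \mid s \in S\}$ together with all derivable $\Psat{s}$, $\Punsat{s}$, and the grounded arithmetic atoms from $\B$. Admissibility $\adfint \ileq \Gamma_D(\adfint)$ ensures, via the lemma, that neither ASP constraint fires: no $s$ with $\adfint(s)=\tvt$ has a completion invalidating $\ac{s}$, and no $s$ with $\adfint(s)=\tvf$ has a completion satisfying it. I then check that $\aspint$ is the subset-minimal model of the Gelfond--Lifschitz reduct $\ared{\pi_{\adm}(D)}{\aspint}$, which is a stratified normal program once the guess is fixed and hence admits a unique minimal model. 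For the reverse inclusion, given $\aspint \in \AS(\pi_{\adm}(D))$, the induced $\adfint$ must be admissible, since otherwise the witness $s$ and completion $w \in \cpl{\adfint}$ would---by the lemma---produce $\Punsat{s}$ or $\Psat{s}$ and violate one of the two constraints of $\pi_{\adm}(D)$.

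The main technical obstacle is the inductive verification that $\B(\ac{s})$ faithfully simulates classical propositional evaluation through the bitwise operations $\texttt{\&}$, $\texttt{?}$, and $1\,\texttt{-}\,\cdot$ when all inputs lie in $\{0,1\}$. While this is conceptually routine, it requires careful bookkeeping of how groundings of the body $\Bl_s$ interact with $\pi_{\txt{lt}}$ to realize exactly the completions in $\cpl{\adfint}$, and of how the auxiliary variables $V_\psi$ for subformulas $\psi$ of $\ac{s}$ are uniquely determined. This lemma will be reused throughout Section~\ref{sec:encadf}, so stating and proving it once in a reusable form is the investment that pays off for the remaining semantics.
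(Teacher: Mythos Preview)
Your proposal is correct and follows the same two-direction argument as the paper: construct the candidate answer set from an admissible $\adfint$, verify it satisfies the reduct and is minimal, and for the converse show that a non-admissible guess forces a constraint violation via a witnessing completion.

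Two places where your packaging differs slightly from the paper, both to your advantage. First, you isolate the correctness of $\pi_{\txt{sat}}(D)$ as a reusable lemma (``$\Psat{s}\in\aspint$ iff some $w\in\cpl{\adfint}$ has $w(\ac{s})=\tvt$'', and dually for $\PunsatP$), proved by structural induction on $\ac{s}$; the paper argues this inline and rather informally (``it is also simple to establish that\ldots''). Second, for minimality you observe that once the $\PassP$-atoms are fixed by the reduct the remaining program is stratified and hence has a unique minimal model, whereas the paper checks directly that any $\aspint'$ satisfying the reduct contains each layer of $\aspint$ (first $\PargP$ and $\PleqP$, then $\PassP$, then $\PsatP/\PunsatP$). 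Your stratification remark is the cleaner way to say the same thing. One small caution: the comparison atoms in $\B(\ac{s})$ are ASP built-ins rather than ordinary atoms to be placed in $\aspint$, so phrase your candidate $\aspint$ without ``the grounded arithmetic atoms from $\B$'' and instead note, as the paper does, that built-in comparisons are satisfied by definition.
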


\begin{proof} Let $D = (S, \{\ac{S}\}_{s \in S})$ be an ADF and $\adfint \in \adm(D)$.  Let also

\begin{align*}
\aspint :=& \{ \Parg{s} \mid s \in S \} \ \cup \\
& \{ \Pleq{\tvut,\tvft}, \Pleq{\tvut,\tvtt}, \Pleq{\tvtt,\tvtt}, \Pleq{\tvft,\tvft} \} \ \cup \\
& \{ \Pass{s,x} \mid s \in S, \adfint(s) = x\} \ \cup \\
& \{ \Psat{s} \mid \txt{if there is a } w \in \ext{\adfint} \txt{ s.t. } w(\phi_s) = \tvtt\} \ \cup \\
& \{ \Punsat{s} \mid \txt{if there is a } w \in \ext{\adfint} \txt{ s.t. } w(\phi_s) = \tvft\} \\
\end{align*}

\noindent be a set of ground atoms (such that $\adfint \cong \aspint$).  We prove now that $\aspint \in \AS(\pi_{\adm}(D))$.

We start by proving that $\aspint$ satisfies $\ared{\pi_{\adm}(D)}{\aspint}$.  First note that $\aspint$ satisfies $\ared{\pi_{\txt{arg}}(D)}{\aspint} = \pi_{\txt{arg}}(D)$ as well as $\ared{\pi_{\txt{lt}}}{\aspint} = \pi_{\txt{lt}}$ since all the atoms making up the facts in these two modules are in $\aspint$ (first two lines of the definition of $\aspint$).  $\aspint$ also satisfies    

\begin{align*}
\ared{\pi_{\txt{guess}}}{\aspint} =& \{ \Pass{s,x} \leftarr \Parg{s}. \mid s \in S, \\
& \ \ \Pass{s,y} \not\in \aspint,
\Pass{s,z} \not\in \aspint, x \in \truthvals, y,z \in (\truthvals \setminus \{x\})\}
\end{align*}

\noindent since, first of all, $\Parg{s} \in \aspint$ iff $s \in S$ by the first line of the definition of $\aspint$ (and the fact that the predicate $\PargP$ does not appear in the head of any rules other than $\ared{\pi_{\txt{arg}}(D)}{\aspint} = \pi_{\txt{arg}}(D)$).  Secondly, for any $s \in S$, $\Pass{s,x} \in \aspint$ whenever $\Pass{s,y} \not\in \aspint$ and $\Pass{s,z} \not\in \aspint$ for $x \in \truthvals$ and $y,z \in (\truthvals \setminus \{x\})$ by the fact that $\adfint \cong \aspint$ (third line of the definition of $\aspint$).

Now consider the rule $r \in \pi_{\txt{sat}}(D)$ with $H(r) = \Psat{S}$ and a substitution $\theta$ s.t.  $\theta r \in \ared{\pi_{\txt{sat}}(D)}{\aspint}$.  %
	This means that $\theta r$ is of the form 

\begin{align*}
\Psat{s} \derive \theta \Bl_s, \theta (V_{\ac{s}} = 1).
\end{align*}

\noindent with

\begin{align*}
\theta \Bl_s = \{ \Pass{t,y_t}, \Pleq{y_t,v_t} \mid t\in\prts{s} \} \cup \theta \B(\ac{s})
\end{align*}

\noindent and where $\theta(Y_t) = y_t$, $\theta(V_t) = v_t$.  If $B(\theta r) \in \aspint$, it must be the case that $y_t \in \truthvals$, $v_t \in \{\tvft,\tvtt\}$ for $t \in \prts{s}$ and $\theta \B(\ac{s}) \in \aspint$.  Now it should be easy for the reader to see that from the fact that  $\{ \Pass{t,y_t}, \Pleq{y_t,v_t} \mid t\in\prts{s} \} \subseteq \aspint$ and $\adfint \cong \aspint$ it is the case that $w \in \ext{v}$ for the ADF interpretation $w$ defined as $w(t) = v_t$ for every $t \in \prts{s}$.  It is also simple to establish that  $\theta \B(\ac{s}) \in \aspint$ and $\theta(V_{\ac{s}} = 1) \in I$ imply that $w(\ac{s}) = \tvtt$.  Hence, by the fourth line of the definition of $\aspint$ we have $\Psat{s} \in \aspint$, i.e. $\aspint$ satsifies $\theta r$.  In the same manner, by the fifth line of the definition of $\aspint$ it follows that $\aspint$ satisfies any grounding $\theta r \in \ared{\pi_{\txt{sat}}(D)}{\aspint}$ for the rule $r$ s.t. $H(r) = \Punsat{S}$.  In conclusion, $\aspint$ satisfies $\ared{\pi_{\txt{sat}}(D)}{\aspint}$.

Let us turn now to a ground instance $r \in \ared{\pi_{\adm}(D)}{\aspint}$

\begin{align*}
\leftarr \Parg{s},\Pass{s,\tvft},\Psat{s}.
\end{align*}

\noindent of the constraint

\begin{align*}
\leftarr \Parg{S},\Pass{S,\tvft},\Psat{S}. 
\end{align*}

\noindent $\in \pi_{\adm}(D)$. By the fourth line of the definition of $\aspint$,  $\Psat{s} \in \aspint$ iff there is a $w \in \ext{v}$ s.t. $w(\ac{s}) = \tvtt$.   But then by the fact that $v \in \adm(D)$ and $v \cong \aspint$, $\Pass{s,\tvft} \not\in \aspint$, i.e. $r$ can not be satisfied by $\aspint$.  In the same manner also any ground instance in $\ared{\pi_{\adm}(D)}{\aspint}$ of the constraint 

\begin{align*}
\leftarr \Parg{S},\Pass{S,\tvtt},\Punsat{S}.
\end{align*}

\noindent can not be satisfied by $\aspint$.

We have established that $\aspint$ satisfies $\ared{\pi_{\adm}(D)}{\aspint}$.  We continue our proof of $\aspint \in \AS(\pi_{\adm}(D))$ by now showing that there is no $\aspint' \subset \aspint$ that satisfies $\ared{\pi_{\adm}(D)}{\aspint}$.

In effect, consider any other $\aspint'$ that satisfies $\ared{\pi_{\adm}(D)}{\aspint}$.  Note first of all that then $\aspint'_{\PargP} \supseteq \aspint_{\PargP}$ and $\aspint'_{\PleqP} \supseteq \aspint_{\PleqP}$ because both $\aspint'$ and $\aspint$ satisfy $\ared{\pi_{\txt{arg}}(D)}{\aspint}$ as well as $\ared{\pi_{\txt{lt}}}{\aspint}$.  Hence also $\aspint'_{\PassP} \supseteq \aspint_{\PassP}$ because $\aspint'$ satisfies  $\ared{\pi_{\txt{guess}}}{\aspint}$ (see the proof of $\aspint$ satisfies $\ared{\pi_{\adm}(D)}{\aspint}$ for the structure of  $\ared{\pi_{\txt{guess}}}{\aspint}$) and $\aspint'_{\PargP}\supseteq \aspint_{\PargP}$, i.e. $B(r) \subseteq \aspint'$ for every $r \in \ared{\pi_{\txt{guess}}}{\aspint}$.  But then, since   $\aspint'_{\PargP} \supseteq \aspint_{\PargP}$ and  $\aspint'_{\PassP} \supseteq \aspint_{\PassP}$, and $\aspint'$ satisfies all the comparison predicates with arithmetic functions that $\aspint$ does by definition, $\aspint'$ satisfies all the rules in $\ared{\pi_{\txt{sat}}(D)}{\aspint}$ that $\aspint$ does (see again the proof of $\aspint$ satisfies $\ared{\pi_{\adm}(D)}{\aspint}$ for the form of such rules).  Hence, also $\aspint'_{\PsatP} \supseteq \aspint_{\PsatP}$ and $\aspint'_{\PunsatP} \supseteq \aspint_{\PunsatP}$.  In conclusion, $\aspint' \supseteq \aspint$.

Since $\aspint'$ was general we derive that there is no $\aspint' \subset \aspint$ that satisfies $\ared{\pi_{\adm}(D)}{\aspint}$.  Together with the fact that $\aspint$ satisfies $\ared{\pi_{\adm}(D)}{\aspint}$ we have that $\aspint \in \AS(\pi_{\adm}(D))$.

We now turn to proving that for any $\aspint \in \AS(\pi_{\adm}(D))$ it holds that $\adfint \in \adm(D)$ for $\adfint \cong \aspint$.  Note first that for such an $\aspint$, since $\aspint$ satisfies $\ared{\pi_{\txt{arg}}(D)}{\aspint} = \pi_{\txt{arg}}(D)$
as well as

\begin{align*}
\ared{\pi_{\txt{guess}}}{\aspint} =& \{ \Pass{s,x} \leftarr \Parg{s}. \mid s \in S, \\
& \ \ \Pass{s,y} \not\in \aspint,
\Pass{s,z} \not\in \aspint, x \in \truthvals, y,z \in (\truthvals \setminus \{x\})\},
\end{align*}

\noindent for every $s \in S$ there is a $x \in \truthvals$ such that $\Pass{s,x} \in \aspint$.  Also, $\Pass{s,x} \in \aspint$ whenever $\Pass{s,y} \not \in \aspint$ and $\Pass{s,z} \not \in \aspint$ for $y,z \in (\truthvals \setminus \{x\})$.  I.e. $v$ s.t. $v \cong \aspint$ is well defined.

Now assume that $\adfint \not \in \adm(D)$.  Then there are $s \in S$, $w \in \ext{\adfint}$ for which either i) $\adfint(s) = \tvtt$ and $w(\ac{s}) = \tvft$ or  ii) $\adfint(s) = \tvft$ and $w(\ac{s}) = \tvtt$.  Let us consider the case i).  In that case consider a substitution $\theta$ for the rule $r \in \pi_{\txt{sat}}(D)$

\begin{align*}
\Punsat{s} \derive \B_s, V_{\ac{s}} = 0.
\end{align*}

\noindent where

\[
\B_s = \{ \Pass{t,Y_t}, \Pleq{Y_t,V_t} \mid
t\in\prts{s}\} \cup \B(\ac{s}).
\]

\noindent The substitution $\theta$ is defined as $\theta(Y_t) = v(t)$ and $\theta(V_t) = w(t)$ for every $t \in \prts{s}$.  Since $\adfint \cong \aspint$ we have that $\Pass{t,\theta(Y_t)} \in \aspint$ for every $t \in \prts{s}$.  Also $\Pleq{\theta(Y_t),\theta(V_t)} \in \aspint$ since $\aspint$ satisfies $\ared{\pi_{\txt{lt}}}{\aspint}$.


\noindent Now, by definition $\theta \B(\ac{s}) \subseteq \aspint$ and from $w(\ac{s}) = \tvft$ it is easy to see that it follows that also $\theta(V_{\ac{s}} = 0) \in \aspint$, i.e. $\theta r \in \ared{\pi_{\txt{sat}}(D)}{\aspint}$ and $B(\theta r) \subseteq \aspint$.  This means that also $\Punsat{s} \in \aspint$.  As a consequence we have that $B(r') \subseteq \aspint$ for the constraint $r'$

\begin{align*}
\leftarr \Parg{s}, \Pass{s, \tvtt}, \Punsat{s}.
\end{align*}

\noindent in $\ared{\pi_{\adm}(D)}{\aspint}$.  This is a contradiction to $\aspint \in \AS(\pi_{\adm}(D))$. From the case  ii) $\adfint(s) = \tvft$ and $w(\ac{s}) = \tvtt$ a contradiction can be derived in analogous manner.  Hence, $v \in \adm(D)$ must be the case.     
\end{proof}

\begin{example}
\label{ex:adm}
Considering the ADF $D$ from Example~\ref{ex:adf},
%
$\pi_{\adm}(D)$ (as implemented by our system \PNAME{YADF} with minor formatting for purposes of readability; see Section \ref{sec:exper}) looks as follows:

\begin{verbatim}
arg(a).
arg(b).
arg(c).
leq(u,0).
leq(u,1).
leq(0,0).
leq(1,1).
asg(S,u) :- arg(S),not asg(S,0),not asg(S,1).
asg(S,0) :- arg(S),not asg(S,1),not asg(S,u).
asg(S,1) :- arg(S),not asg(S,u),not asg(S,0).
sat(a) :- asg(b,Y0),leq(Y0,V0),V1=1-V0,V2=V1?V0,V2=1.
sat(b) :- asg(b,Y0),leq(Y0,V0),V0=1.
sat(c) :- asg(c,Y0),leq(Y0,V0),asg(b,Y1),leq(Y1,V1),
                                  V3=1,V3=V2?V1,V2=1-V0.
inv(a) :- asg(b,Y0),leq(Y0,V0),V1=1-V0,V2=V1?V0,V2=0.
inv(b) :- asg(b,Y0),leq(Y0,V0),V0=0.
inv(c) :- asg(c,Y0),leq(Y0,V0),asg(b,Y1),leq(Y1,V1),
                                  V3=V2?V1,V3=0,V2=1-V0.
:- arg(S),asg(S,1),inv(S).
:- arg(S),asg(S,0),sat(S).
\end{verbatim}

\noindent
A possible output of an ASP solver (the current one is the simplified output of \PNAME{clingo} version 4.5.4) given this instance looks as follows (only showing $\PassP$, $\PsatP$, and $\PunsatP$ predicates):

\begin{verbatim}
Answer: 1
asg(c,u) asg(b,0) asg(a,u) sat(c) sat(a) inv(c) inv(b)
Answer: 2
asg(c,u) asg(b,0) asg(a,1) sat(c) sat(a) inv(c) inv(b)
Answer: 3
asg(c,u) asg(b,1) asg(a,u) sat(b) sat(c) sat(a)
Answer: 4
asg(c,u) asg(b,1) asg(a,1) sat(b) sat(c) sat(a)
Answer: 5
asg(c,u) asg(b,u) asg(a,1) sat(b) sat(c) sat(a) inv(c) inv(b)
Answer: 6
asg(c,u) asg(b,u) asg(a,u) sat(b) sat(c) sat(a) inv(c) inv(b)
Answer: 7
asg(c,1) asg(b,1) asg(a,u) sat(b) sat(c) sat(a)
Answer: 8
asg(c,1) asg(b,1) asg(a,1) sat(b) sat(c) sat(a)
SATISFIABLE
\end{verbatim}
\hfill$\diamond$
\end{example}

The encoding $\pi_{\adm}$ allows to enumerate the admissible interpretations of an ADF $D$ from the answer sets of $\pi_{\adm}(D)$ (as explained in the opening paragraphs of Section~\ref{sec:encadf}).  Skeptical reasoning for the admissible semantics is trivial (as the interpretation mapping every statement to $\tvut$ is always admissible), but note that via credulous reasoning for ASP programs we directly obtain results for credulous reasoning \wrt the admissible semantics from $\pi_{\adm}(D)$ (for any ADF $D$).  The latter translation and thus the encoding $\pi_{\adm}$ is adequate from the point of view of the complexity (see Table \ref{tbl:complResGrappa}) as $\pi_{\adm}(D)$ is a normal logic program for any ADF $D$.  Also, given our recursive definition of the evaluation of the acceptance conditions within ASP rules, the arity of predicates in our encodings are bounded (in fact, the maximum arity of predicates is two).

\subsection{Encoding for the complete semantics}

For the ASP encoding of the complete semantics we only need to add two constraints
to the encoding of the admissible semantics.
These express a further condition that an interpretation $\adfint$ for an ADF $D = (S, \{\ac{S}\}_{s \in S})$ has to fulfill to be complete, in addition to not having counter-models for being an admissible interpretation as expressed in Section \ref{sec::admadf}.  The condition in question is that for every $s \in S$:

\begin{itemize}
\item if $v(s) = \tvu$ then there are $w_1,w_2 \in \cpl{v}$ s.t. $w_1(\ac{s}) = \tvft$ and $w_2(\ac{s}) = \tvtt$.
\end{itemize}

\noindent Expressing this condition in the form of constraints gives us the encoding

\begin{align*}
\pi_{\com}(D) := &\pi_{\adm}(D) \ \cup \ \\ \{&\derive \Parg{S}, \Pass{S,\tvu}, \Pnot\;\Punsat{S}. \\
&\derive \Parg{S}, \Pass{S,\tvu}, \Pnot\;\Psat{S}. \}.
\end{align*}

\begin{propn}\label{thm:com} For every ADF $D$ it holds that $\com(D) \cong \AS(\pi_{\com}(D))$.
\end{propn}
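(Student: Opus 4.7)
The plan is to reduce correctness of $\pi_{\com}$ to correctness of $\pi_{\adm}$ (Proposition \ref{thm:adm}), together with a separate argument that the two newly added constraints precisely enforce the extra condition that distinguishes complete from admissible interpretations. Concretely, note that $\pi_{\com}(D)$ extends $\pi_{\adm}(D)$ only with constraints (heads empty), so no new atoms appear in any reduct head; this means that the analysis of which atoms belong to the minimal model of $\ared{\pi_{\com}(D)}{\aspint}$ is literally the same as in the proof of Proposition \ref{thm:adm}. In particular, I can reuse the key semantic correspondence established there: for any candidate $\aspint$ built from an interpretation $\adfint$ in the canonical way, $\Psat{s} \in \aspint$ iff some $w \in \ext{\adfint}$ satisfies $w(\ac{s}) = \tvtt$, and $\Punsat{s} \in \aspint$ iff some $w \in \ext{\adfint}$ satisfies $w(\ac{s}) = \tvft$.

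For the forward direction, I would take $\adfint \in \com(D)$, define $\aspint$ exactly as in the proof of Proposition \ref{thm:adm}, and first invoke that proof to conclude $\aspint$ satisfies $\ared{\pi_{\adm}(D)}{\aspint}$ minimally. Then I need only check the two new constraints. Since $\adfint$ is complete, $\adfint = \Gamma_D(\adfint)$, so whenever $\adfint(s) = \tvut$ there must exist $w_1, w_2 \in \ext{\adfint}$ with $w_1(\ac{s}) = \tvft$ and $w_2(\ac{s}) = \tvtt$; by the correspondence recalled above this yields $\Punsat{s}, \Psat{s} \in \aspint$, so both new constraints are satisfied. Minimality of $\aspint$ with respect to the reduct is inherited directly from Proposition \ref{thm:adm} because no rule heads were added.

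For the reverse direction, take $\aspint \in \AS(\pi_{\com}(D))$ and let $\adfint \cong \aspint$. Since $\aspint$ satisfies all rules of $\pi_{\adm}(D)$ (these are a subset of $\pi_{\com}(D)$) and minimality is determined by the same reduct-heads, $\aspint \in \AS(\pi_{\adm}(D))$, so by Proposition \ref{thm:adm} we already have $\adfint \in \adm(D)$, i.e.\ $\adfint \ileq \Gamma_D(\adfint)$. It remains to show the reverse inequality $\Gamma_D(\adfint) \ileq \adfint$, for which it suffices to rule out $\adfint(s) = \tvut$ while $\Gamma_D(\adfint)(s) \in \{\tvtt,\tvft\}$. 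Suppose $\adfint(s) = \tvut$ and, w.l.o.g., $\Gamma_D(\adfint)(s) = \tvtt$; then all $w \in \ext{\adfint}$ give $w(\ac{s}) = \tvtt$, so no $w$ witnesses $\Punsat{s}$, hence $\Punsat{s} \notin \aspint$ (by the ``only if'' direction of the sat/inv correspondence, which is exactly the minimality argument of Proposition \ref{thm:adm}). But then the ground instance $\derive \Parg{s}, \Pass{s,\tvut}, \Pnot\;\Punsat{s}.$ of the first new constraint has its body true in $\aspint$, contradicting $\aspint \in \AS(\pi_{\com}(D))$. The symmetric case with $\Gamma_D(\adfint)(s) = \tvft$ is handled by the second new constraint.

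The only mildly subtle point, and the one I would spell out carefully, is the ``only if'' direction of the sat/inv correspondence in the reverse direction: we need that $\Psat{s}$ and $\Punsat{s}$ are \emph{absent} from $\aspint$ whenever no completion witnesses them. This is not re-proved here but is an immediate consequence of the minimality part of the argument in Proposition \ref{thm:adm}, since $\PsatP$ and $\PunsatP$ only appear in heads of the rules of $\pi_{\txt{sat}}(D)$, whose grounded bodies faithfully encode the existence of an appropriate completion $w \in \ext{\adfint}$. Once this is in place, the extra constraints give exactly the one additional equation $\Gamma_D(\adfint) = \adfint$ needed to promote admissibility to completeness, and no further obstacle arises.
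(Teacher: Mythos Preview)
Your proof is correct and follows essentially the same approach as the paper, which merely gives a one-line sketch saying the proof extends that of Proposition~\ref{thm:adm} with only the additional constraints needing to be accounted for. You have faithfully spelled out the details the paper omits, including the one genuinely subtle point (the ``only if'' direction of the $\PsatP/\PunsatP$ correspondence via minimality), which is exactly what is needed.
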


\begin{proof}(sketch) The proof extends that of Proposition \ref{thm:adm}.  Only the additional constraints used in the encoding for the complete semantics (w.r.t. the encoding for the admissible semantics) need to be accounted for.  
\end{proof}


The encoding $\pi_{\com}$ allows to enumerate the complete interpretations of an ADF $D$ by applying the encoding on $D$ ($\pi_{\com}(D)$) and considering $\AS(\pi_{\com}(D))$.  
Since credulous acceptance for the complete semantics is equivalent to credulous acceptance for the admissible semantics, we obtain a complexity adequate means of computing credulous acceptance for the complete semantics via applying credulous (ASP) reasoning on $\pi_{\adm}(D)$ ($\pi_{\adm}$ being the encoding presented in Section \ref{sec::admadf}).  Applying credulous reasoning on $\pi_{\com}(D)$ is nevertheless also an option\footnote{Which may in fact (because of redundancy) be more efficient in practice.}.



\subsection{Saturation encoding for the preferred semantics}

For the encoding of the preferred semantics we make use of the saturation technique \cite{DBLP:journals/amai/EiterG95}; see  %
\cite{CharwatDGWW15} for its use in computing the preferred extensions of Dung AFs.
The saturation technique allows checking that a property holds
for a \emph{set} of guesses within a disjunctive ASP program,
by generating a unique
``saturated'' guess that ``verifies'' the property for any such guess.
Existence of a non-saturated guess hence implies that the property of
interest does not hold for the guess in question.  

In the encoding of the preferred semantics for an ADF $D$
we extend $\pi_{\adm}(D)$ by
making use of
the saturation technique
to verify that all interpretations of $D$
that are greater w.r.t.\ $\leq_i$ than the interpretation
determined by the assignments guessed in the program fragment $\pi_{\txt{guess}}$
are either identical to the interpretation in question or not admissible.
As a consequence, the relevant interpretation must be preferred according to the definition of this semantics for ADFs.

The module $\pi_{\text{guess2}}$ amounts to ``making a second guess'' (indicated by the predicate $\PassTP$)
extending the ``first guess'' ($\PassP$)
from
$\pi_{\text{guess}}$:

\begin{align*}
\pi_{\text{guess2}} :=  \{ &\PassT{S,\tvft} \derive \Pass{S,\tvft}.\\
&\PassT{S,\tvtt} \derive \Pass{S,\tvtt}. \\
& \PassT{S,\tvtt} \vee \PassT{S,\tvft} \vee \PassT{S,\tvut} \derive \Pass{S,\tvut}.
\}.
\end{align*}
%
%
%

\noindent Note that the first two rules express that if an ADF interpretation $v$ corresponding to the ``first guess'' (captured via the predicate $\PassP$) maps a statement to either $\tvf$ or $\tvt$ then so does a interpretation $v'$ corresponding to the ``second guess'' (captured via the predicate $\PassTP$).  The last rule, on the other hand, indicates that if the first guess maps a statement to $\tvut$ then the second guess can map the statement to either of the truth values $\tvut$, $\tvf$, or $\tvt$.  Thus $v' \geq_i v$ is guaranteed.  

The fragment $\pi_{\txt{sat2}}(D)$
will allow us
to check whether the second guess
obtained from $\pi_{\text{guess2}}$
is admissible:
%

\begin{align*}
\pi_{\txt{sat2}}(D) := \{
& \PsatT{s} \derive \Bl2_s, V_{\ac{s}} = 1. \\
& \PunsatT{s} \derive \Bl2_s, V_{\ac{s}} = 0.
\mid
s\in S \}
\end{align*}

\noindent with

\begin{align*}
\Bl2_s  := \{ & \PassT{t,Y_t}, \Pleq{Y_t,V_t} \mid t\in\prts{s}\} \cup \B(\ac{s}).
\end{align*}

\noindent
The only difference between the fragment $\pi_{\txt{sat2}}(D)$ and the fragment $\pi_{\txt{sat}}(D)$ that we introduced in Section~\ref{sec::admadf}
is that we now evaluate acceptance conditions \wrt completions of the second guess given via the predicate
$\PassTP$.

The following program fragment guarantees that the atom $\Psaturate$
is derived whenever the second guess (computed via $\pi_{\text{guess2}}$)
is either identical
(first rule of $\pi_{\txt{check}}(D)$)
to the first guess (computed via the module $\pi_{\text{guess}}$)
or is not admissible (last two rules of $\pi_{\txt{check}}(D)$).
We will say that in this case the second guess is \emph{not} a counter-example
to the first guess corresponding to a preferred interpretation of $D$.
We here assume that the statements $S$ of $D$ are numbered, i.e. $S = \{s_1,\ldots,s_k\}$ for a $k \geq 1$:

\begin{align*}
\pi_{\txt{check}}(D) := \{
 \Psaturate \derive & \Pass{s_1,X_1},\PassT{s_1,X_1},
\dots\\
& \Pass{s_k,X_k},\PassT{s_k,X_k}. \\
 \Psaturate \derive & \PassT{S,\tvtt}, \PunsatT{S}. \\
 \Psaturate \derive & \PassT{S,\tvft}, \PsatT{S}. \}.
\end{align*}

%
%
%
%
%

%
The module $\pi_{\txt{saturate}}$ now assures that whenever the atom $\Psaturate$ is derived,
first of all $\PassT{s,\tvft}$, $\PassT{s,\tvtt}$, and $\PassT{s,\tvut}$ are derived
for every $s \in S$ for which $\Pass{s,\tvut}$ has been derived.  Also, $\PsatT{s}$ and $\PunsatT{s}$
are derived for every $s \in S$:
%

\begin{align*}
\pi_{\txt{saturate}} := \{
 & \PassT{S,\tvft} \derive \Pass{S,\tvut}, \Psaturate. \\
 & \PassT{S,\tvtt} \derive \Pass{S,\tvut}, \Psaturate. \\
 & \PassT{S,\tvut} \derive \Pass{S,\tvut}, \Psaturate. \\
 & \PsatT{S} \derive \Parg{S}, \Psaturate.\; \\
 & \PunsatT{S} \derive \Parg{S}, \Psaturate. \}.
\end{align*}

%

\noindent The effect of this fragment is that whenever all the ``second guesses'' (computed via $\pi_{\text{guess2}}$)
are \emph{not} counter-examples to the first guess
(computed via $\pi_{\text{guess}}$)
corresponding to a preferred interpretation of $D$,
then all the answer sets will be saturated on the predicates $\txt{asg2}$, $\txt{sat2}$, and $\txt{inv2}$,
i.e.\ the same ground instances of these predicates will be included in any answer set.  Thus, all answer sets (corresponding to the ADF interpretation determined by the first guess) will be indistinguishable on the new predicates used for the encoding of the preferred interpretation; meaning: those not in $\pi_{\adm}(D)$.  On the other hand, were there to be a counter-example to the first guess corresponding to a preferred interpretation of $D$, then a non-saturated and hence smaller (w.r.t $\subseteq$) answer set could be derived.  We disallow the latter by adding to the program fragments $\pi_{\adm}(D)$, $\pi_{\txt{guess2}}$,  $\pi_{\txt{sat2}}(D)$, $\pi_{\txt{check}}(D)$, $\pi_{\txt{saturate}}$, a constraint filtering out precisely such answer sets.  The latter being those for which the atom $\PsaturateP$ is \emph{not} derived.   
%
%
%
We thus arrive at the following encoding for the preferred semantics:

\begin{align*}
\pi_{\pef}(D): = &\pi_{\adm}(D) \ \cup \
\pi_{\txt{guess2}} \ \cup \ 
\pi_{\txt{sat2}}(D) \ \cup \ \\
&\pi_{\txt{check}}(D) \ \cup \
\pi_{\txt{saturate}} \ \cup \
\{ \txt{\derive \Pnot\;\Psaturate.} \}.
\end{align*}

\begin{propn}
\label{thm:prf}
For every ADF $D$ it holds that $\pef(D) \cong \AS(\pi_{\pef}(D))$.
%
%
%
%
\end{propn}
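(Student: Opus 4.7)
(plan) The proof extends that of Proposition~\ref{thm:adm} by carefully handling the saturation module. I would proceed by proving the two directions of $\cong$ separately.

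For the forward direction, given $v \in \pef(D)$, the plan is to define a candidate interpretation $\aspint$ consisting of: the facts of $\pi_{\txt{arg}}(D)$ and $\pi_{\txt{lt}}$; the atoms $\Pass{s,v(s)}$ for every $s \in S$; the atoms $\Psat{s}$ and $\Punsat{s}$ corresponding to completions of $v$ (as in the proof of Proposition~\ref{thm:adm}); the atom $\Psaturate$; all atoms $\PassT{s,x}$ with $x \in \truthvals$ whenever $v(s) = \tvut$, together with $\PassT{s,v(s)}$ when $v(s) \in \{\tvtt,\tvft\}$; and all atoms $\PsatT{s}$, $\PunsatT{s}$ for $s \in S$. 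I would first verify that $\aspint$ satisfies the reduct $\ared{\pi_{\pef}(D)}{\aspint}$; the interesting case is $\pi_{\txt{check}}(D)$, where the key observation is that preferredness of $v$ means every $v' \geq_i v$ is either equal to $v$ (firing the first rule of $\pi_{\txt{check}}(D)$) or is not admissible (firing one of the other two rules), so $\Psaturate$ is derived for every grounding of the second guess, and the final constraint is satisfied. Then the saturation rules force inclusion of all the ``2''-predicate atoms.

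The main challenge is minimality. To show no $\aspint' \subsetneq \aspint$ satisfies $\ared{\pi_{\pef}(D)}{\aspint}$, I would argue that any such $\aspint'$ must still contain the $\PargP$, $\PleqP$, $\PassP$, $\PsatP$, and $\PunsatP$ atoms of $\aspint$ (by the argument of Proposition~\ref{thm:adm}), and must contain $\Psaturate$ (by the final constraint). The delicate step is showing that once $\Psaturate \in \aspint'$, the saturation rules force $\aspint'$ to contain all the saturated ``2''-atoms of $\aspint$; here the fact that $\pi_{\txt{guess2}}$ in the reduct still disjunctively requires at least one $\PassT{s,\cdot}$ per $s$ with $v(s) = \tvut$ is crucial, as is the fact that preferredness of $v$ rules out any non-saturated choice giving a model of the reduct (otherwise one could extract a proper $v' >_i v$ that is admissible, contradicting preferredness).

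For the backward direction, given $\aspint \in \AS(\pi_{\pef}(D))$, define $v$ by $v \cong \aspint$ (well-defined as in Proposition~\ref{thm:adm}). Admissibility of $v$ follows from the included fragment $\pi_{\adm}(D)$. For preferredness, suppose toward contradiction that some $v' >_i v$ is admissible. Then I would construct $\aspint' \subsetneq \aspint$ by replacing the saturated ``2''-atoms of $\aspint$ with only those $\PassT{s,x}$ with $v'(s) = x$, together with the correctly computed $\PsatT{s}$, $\PunsatT{s}$ for this specific $v'$. Since $v' \neq v$ and $v'$ is admissible, none of the three rules of $\pi_{\txt{check}}(D)$ fire for this particular grounding, so $\Psaturate$ would no longer be forced by $\pi_{\txt{check}}(D)$; the task then is to check that $\Psaturate$ is also not supportable otherwise, contradicting the final constraint via minimality — this is the subtlest point, since one must ensure that $\aspint'$ still satisfies the reduct while being strictly smaller. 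The hardest part throughout is making the saturation-based minimality argument fully rigorous, i.e.\ showing the exact correspondence between counter-examples to preferredness of $v$ and smaller models of the reduct.
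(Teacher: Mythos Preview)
Your overall plan matches the paper's proof closely, but there is one concrete error in the forward-direction minimality argument. You write that any $\aspint' \subsetneq \aspint$ satisfying the reduct ``must contain $\Psaturate$ (by the final constraint)''. This is false: since $\Psaturate \in \aspint$, the constraint $\derive\ \Pnot\;\Psaturate$ is \emph{deleted} when forming the reduct $\ared{\pi_{\pef}(D)}{\aspint}$, so it imposes nothing on models $\aspint'$ of the reduct. The paper handles this correctly by a case split on whether $\Psaturate \in \aspint'$. If $\Psaturate \not\in \aspint'$, then none of the bodies of the rules in $\pi_{\txt{check}}(D)$ can be satisfied by $\aspint'$; from the first rule this forces the $v'$ encoded by $\PassTP$ in $\aspint'$ to differ from $v$ (hence $v' >_i v$), and from the last two rules $v'$ has no counter-model to admissibility, contradicting $v \in \pef(D)$. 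Only in the remaining case $\Psaturate \in \aspint'$ do the saturation rules then force $\aspint' \supseteq \aspint$. You do gesture at this idea in your final parenthetical remark, but the argument as you structure it (first claiming $\Psaturate$ is forced by the constraint, then invoking saturation) does not go through.

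A smaller point in the backward direction: the contradiction you obtain is simply that $\aspint' \subsetneq \aspint$ is a model of the reduct, violating minimality of $\aspint$; the final constraint plays no role here either (it is again absent from the reduct). Your phrase ``contradicting the final constraint via minimality'' conflates two things---just drop the reference to the constraint.
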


\begin{proof} Let $D = (S, \{\ac{S}\}_{s \in S})$ be an ADF and $\adfint \in \pef(D)$.  Let also

\begin{align*}
\aspint' :=& \aspint \cup \{\PassT{s,\tvtt} \mid s \in S, \adfint(s) = \tvtt\} \cup \{\PassT{s,\tvft} \mid s \in S, \adfint(s) = \tvft\}  \cup \satint{\aspint}
\end{align*}

\noindent be a set of ground atoms where $\aspint$ is defined as in the ``only if'' direction of the proof of Proposition \ref{thm:adm} (hence, $\adfint \cong \aspint'$).  Moreover, $\satint{\aspint}$ is the set of ground atoms forming the ``saturation'' of the predicates $\PassTP$, $\PsatTP$, $\PunsatTP$, $\PsaturateP$ ($\PassTP$ is saturated only for $s \in S$ s.t. $v(s) = \tvut$) defined as

\begin{align*}
\satint{\aspint} :=&  \{\PassT{s,x} \mid s \in S, x \in \{\tvtt,\tvft,\tvut\}, \adfint(s) = \tvut\} \ \cup \\
& \{ \PsatT{s} \mid s \in S \} \ \cup \\
& \{ \PunsatT{s} \mid s \in S \} \ \cup \\
& \{ \Psaturate \}.
\end{align*}

Note first that since none of the predicates occurring in  $\aspint' \setminus \aspint$ appear in $\pi_{\adm}(D)$, we have that $\ared{\pi_{\adm}(D)}{\aspint'} = \ared{\pi_{\adm}(D)}{\aspint}$.  As thus also all of the atoms appearing in $\ared{\pi_{\adm}(D)}{\aspint'}$ that are in $\aspint'$ are those which are in $\aspint$, we have that $\aspint'$ and $\aspint$ satisfy the bodies and heads of the same rules in $\ared{\pi_{\adm}(D)}{\aspint'}$.  By the proof of the ``only if'' direction of Proposition \ref{thm:adm} (i.e. that $\aspint$ satisfies $\ared{\pi_{\adm}(D)}{\aspint} = \ared{\pi_{\adm}(D)}{\aspint'}$)  it then follows that $\aspint'$ satisfies $\ared{\pi_{\adm}(D)}{\aspint'}$.

$\aspint'$ also satisfies each of $\ared{\pi_{\txt{sat2}}(D)}{\aspint'} = \agrd{\pi_{\txt{sat2}}(D)}$, $\ared{\pi_{\txt{check}}(D)}{\aspint'} = \agrd{\pi_{\txt{check}}(D)}$  as the heads of all possible ground instances of the rules of each of the modules $\pi_{\txt{sat2}}(D)$ and $\pi_{\txt{check}}(D)$  are contained in $\satint{\aspint} \subset \aspint'$.   Moreover, $\aspint'$ satisfies all groundings of the first two rules of $\pi_{\txt{guess2}}$ (that are in $\ared{\pi_{\txt{guess2}}}{\aspint'} = \agrd{\pi_{\txt{guess2}}}$) as both $\Pass{s,x} \in \aspint'$ and $\PassT{s,x} \in \aspint'$ whenever $v(s) = x$ for $x \in \{\tvtt,\tvft\}$. $\aspint'$ also satisfies all groundings of the third rule of $\pi_{\txt{guess2}}$ as whenever $\Pass{s,u} \in \aspint'$ this means that $\adfint(s) = \tvut$ and then $\PassT{s,x} \in \satint{\aspint} \subset \aspint'$ for every $x \in \{\tvtt,\tvft,\tvut\}$.  For the same reason $\aspint'$ also satisfies all possible groundings of the first three rules of $\pi_{\txt{saturate}}$ (contained in  $\ared{\pi_{\txt{saturate}}}{\aspint'} = \agrd{\pi_{\txt{saturate}}}$).  Furthermore, $\aspint'$ satisfies all possible groundings of the last two rules of $\pi_{\txt{saturate}}$ since whenever $\Parg{s} \in \aspint'$ this means that $s \in S$ and then $\PsatT{s} \in \satint{\aspint} \subset \aspint'$ as well as $\PunsatT{s} \in \satint{\aspint} \subset \aspint'$.  Finally, since $\Psaturate \in \satint{\aspint} \subset \aspint'$ the constraint

\begin{align*}
\txt{\derive \Pnot\;\Psaturate.} 
\end{align*}

\noindent is deleted from $\pi_{\pef}(D)$ when forming the reduct $\ared{\pi_{\pef}(D)}{\aspint'}$. We thus have that $\aspint'$ satisfies all of the rules in $\ared{\pi_{\pef}(D)}{\aspint'}$; hence, $\aspint'$ satisfies $\ared{\pi_{\pef}(D)}{\aspint'}$.

Consider now that there is a $\aspint'' \subset \aspint'$ that satisfies $\ared{\pi_{\pef}(D)}{\aspint'}$.  Since $\aspint''$ satisfies $\ared{\pi_{\adm}(D)}{\aspint'} = \ared{\pi_{\adm}(D)}{\aspint}$, we have by the argument in the ``only if'' direction of the proof of Proposition \ref{thm:adm} that $\aspint \subseteq \aspint''$.  Note that then $\Pass{s,x} \in \aspint''$ for every $s \in S$ s.t. $\adfint(s) = x$ for $x \in \{\tvtt,\tvft\}$. On the other hand, $\aspint''$ satisfies the groundings of the first two rules in $\pi_{\txt{guess2}}$ (since $\ared{\pi_{\txt{guess2}}}{\aspint'} = \agrd{\pi_{\txt{guess2}}}$).  It hence follows that also $\PassT{s,x} \in \aspint''$ for every $s \in S$ s.t. $\adfint(s) = x$ for $x \in \{\tvtt,\tvft\}$.  Moreover, since $\aspint''$ satisfies the groundings of the last rule in $\pi_{\txt{guess2}}$ and $\{\Pass{s,\tvut} \mid s \in S, \adfint(s) = \tvut \} \subset \aspint \subset \aspint''$ it must be the case that there is some $x \in \{\tvut,\tvtt,\tvft\}$ s.t. $\PassT{s,x} \in \aspint''$ for every $s \in S$ s.t. $\adfint(s) = \tvut$.
We thus have that there is an ADF interpretation $\adfint' \geq_i \adfint$ s.t. there is an atom 
$\PassT{s,x} \in \aspint''$ whenever $\adfint'(s) = x$.

Assume now that $\Psaturate \not\in \aspint''$.  Since $\aspint''$ satisfies $\ared{\pi_{\txt{saturate}}}{\aspint'} = \agrd{\pi_{\txt{saturate}}}$ this means that $B(r) \not\subset \aspint''$ for every $r \in \agrd{\pi_{\txt{saturate}}}$.  Hence, in particular,  $B(r) \not\subset \aspint''$ for the rule $r$

\begin{align*}
\Psaturate \derive & \Pass{s_1,\adfint(s_1)},\PassT{s_1,\adfint'(s_1)},
\dots\\
& \Pass{s_k,\adfint(s_k)},\PassT{s_k,\adfint'(s_k)}.
\end{align*}

\noindent This amounts to $\adfint \neq \adfint'$ and, hence, $\adfint <_i \adfint'$.  Also, $B(r) \not\subset \aspint''$ for the rule $r$

\begin{align*}
\Psaturate \derive & \PassT{s,\tvtt}, \PunsatT{s}.
\end{align*}

\noindent for every $s \in S$.  This amounts to (since $\aspint''$ satisfies $\ared{\pi_{\txt{sat2}}(D)}{\aspint'} = \agrd{\pi_{\txt{sat2}}(D)}$; see proof of Proposition \ref{thm:adm}) there not being any $s \in S$ and $w \in \extension{\adfint'}$ for which $\adfint'(s) = \tvtt$ and $w(s) = \tvft$.  In the same manner the fact that $B(r) \not\subset \aspint''$ for the rule $r$

\begin{align*}
\Psaturate \derive & \PassT{s,\tvft}, \PsatT{s}.
\end{align*}

\noindent for every $s \in S$, means that there is no $s \in S$ and $w \in \extension{\adfint'}$ for which $\adfint'(s) = \tvft$ and $w(s) = \tvtt$.  But then $\adfint' \in \adm(D)$ which, together with the fact that $\adfint <_i \adfint'$, is a contradiction to $\adfint \in \pef(D)$. 

On the other hand if $\Psaturate \in \aspint''$, since $\aspint''$ satisfies all possible groundings of the first three rules of $\pi_{\txt{saturate}}$ (as $\ared{\pi_{\txt{saturate}}}{\aspint'} = \agrd{\pi_{\txt{saturate}}}$),  it would be the case that whenever $\Pass{s,\tvut} \in \aspint''$ and hence $v(s) = \tvut$ (since $\aspint \subset \aspint''$) also $\PassT{s,x} \in \aspint''$ for every $x \in \{\tvut,\tvft,\tvtt\}$.  Moreover, if $\Psaturate \in \aspint''$, since $\aspint''$ satisfies all possible groundings of the last two rules of $\pi_{\txt{saturate}}$, it would also follow that $\Psat{s} \in \aspint''$ as well as $\Punsat{s} \in \aspint''$ for every $s \in S$.  This means that if $\Psaturate \in \aspint''$, then $\aspint' \subseteq \aspint''$.  This is a contradiction to our assumption that $\aspint'' \subset \aspint'$.  In conclusion, there is no $\aspint'' \subset \aspint'$ that satisfies $\ared{\pi_{\pef}(D)}{\aspint'}$. Therefore $\aspint' \in \AS(\pi_{\pef}(D))$.

We turn now to proving that for any $\aspint \in \AS(\pi_{\pef}(D))$ it holds that $\adfint \in \pef(D)$ for
$\adfint \cong \aspint$.  Note first of all that since $\aspint$ satisfies $\ared{\pi_{\adm}(D)}{\aspint}$ by the proof of the ``if'' direction of Proposition \ref{thm:adm} we obtain that $\adfint$ is well defined and, moreover, $\adfint \in \adm(D)$.

Since $\adfint \in \adm(D)$, $\adfint \not\in \pef(D)$ would mean that there is a $\adfint' \in \adm(D)$ s.t. $\adfint' >_i \adfint$.  Now, notice first of all that since $\aspint \in \AS(\pi_{\pef}(D))$, $\Psaturate \in \aspint$ since otherwise the constraint

\begin{align*}
\txt{\derive \Pnot\;\Psaturate.} 
\end{align*}

\noindent would not be deleted from $\pi_{\pef}(D)$ (as must be the case) when forming the reduct $\ared{\pi_{\pef}(D)}{\aspint}$ and hence $\pi_{\pef}(D)$ would have no answer set.  We know from the proof of the ``only if'' direction of Proposition \ref{thm:prf} that from $\Psaturate \in \aspint$ it then follows that $\satint{\aspint} \subseteq \aspint$ where

\begin{align*}
\satint{\aspint} =  &\{\PassT{s,x} \mid s \in S, x \in \{\tvtt,\tvft,\tvut\}, \adfint(s) = \tvut\} \ \cup \\
& \{ \PsatT{s} \mid s \in S \} \ \cup \\
& \{ \PunsatT{s} \mid s \in S \} \ \cup \\
& \{ \Psaturate \}.
\end{align*}

\noindent Now let us define

\begin{align*}
\aspint' := &\cup_{p \in \{ \PargP,\PleqP, \PassP, \PsatP, \PunsatP \}} \aspint_{p} \cup \{\PassT{s,v'(s)} \mid s \in S\} \ \cup \\
& \{ \PsatT{s} \mid s \in S, \text{ there is a } w \in \ext{v'} \text{ s.t. } w(\ac{s}) = \tvtt \} \ \cup \\\
& \{ \PunsatT{s} \mid s \in S, \text{ there is a } w \in \ext{v'} \text{ s.t. } w(\ac{s}) = \tvtt \}
\end{align*}

\noindent for which by construction (and $\adfint' >_i \adfint$) $\aspint' \subset \aspint$ holds. Notice first of all that since all negative atoms of $\pi_{\pef}(D)$ occur in $\pi_{\adm}(D) \cup \{\txt{\derive \Pnot\;\Psaturate.}\}$ we have that

\begin{align*}
\ared{\pi_{\pef}(D)}{\aspint} =  \ared{\pi_{\adm}(D)}{\aspint} \cup 
\agrd{\pi_{\txt{guess2}}} \cup
\agrd{\pi_{\txt{sat2}}(D)} \cup
\agrd{\pi_{\txt{check}}(D)} \cup
\agrd{\pi_{\txt{saturate}}}.
\end{align*}

\noindent Now, since $\aspint$ and $\aspint'$ are the same when considering the atoms occurring in $\ared{\pi_{\adm}(D)}{\aspint}$ (meaning: $\cup_{p \in \{ \PargP,\PleqP, \PassP, \PsatP, \PunsatP \}} \aspint_{p} \subset \aspint'$) and $\aspint$ satisfies $\ared{\pi_{\adm}(D)}{\aspint}$ so does $\aspint'$.  Moreover, since $\adfint' >_i \adfint$ by construction $\PassT{s,x} \in \aspint'$ whenever $\Pass{s,x} \in \aspint$ for $x \in \{\tvtt,\tvft\}$ and there is a $y \in  \{\tvut, \tvtt,\tvft\}$ s.t. $\PassT{s,y} \in \aspint'$ whenever $\Pass{s,\tvut} \in \aspint$.  Hence $\aspint'$ also satisfies $\ared{\pi_{\txt{guess2}}}{\aspint} = \agrd{\pi_{\txt{guess2}}}$.

Using analogous arguments as in the ``only if'' direction of the proof of Proposition \ref{thm:adm}, from the fact that $\PassT{s,x} \in \aspint'$ iff $v'(s) = x$ (for $s \in S$ and $x \in \{\tvtt,\tvft,\tvut\}$) and the definition for when $\PsatT{s}$ and $\PunsatT{s}$ are in $\aspint'$, it follows that $\adfint'$ satisfies $\ared{\pi_{\txt{sat2}}}{\aspint} = \agrd{\pi_{\txt{sat2}}}$.  We have also seen in the proof of the ``only if'' direction of Proposition \ref{thm:prf} that $v' \neq v$ and $v' \in \adm(D)$ implies that $\aspint'$ does not satisfy the body of any of the rules in  $\ared{\pi_{\txt{check}}(D)}{\aspint} = \agrd{\pi_{\txt{check}}(D)}$.  Finally, since $\Psaturate \not\in \aspint'$ it is also the case that $\aspint'$ satisfies  $\ared{\pi_{\txt{saturate}}}{\aspint} = \agrd{\pi_{\txt{saturate}}}$.  In conclusion, we have that $\aspint'$ satisfies $\ared{\pi_{\pef}(D)}{\aspint}$ and $\aspint' \subset \aspint$ which contradicts $\aspint \in \AS(\pi_{\pef}(D)))$.  Hence, there cannot be a $v' >_i v$ s.t. $v' \in \adm(D)$ and therefore $\adfint \in \pef(D)$ must be the case.  
\end{proof}

\begin{example}
\label{ex:yadf:prf} The encoding $\pi_{\pef}(D)$ for the ADF $D$ from Example~\ref{ex:adf} as implemented by our system \PNAME{YADF}
%
looks as follows:

\begin{verbatim}
leq(u,0).
leq(u,1).
leq(0,0).
leq(1,1).
arg(a).
arg(b).
arg(c).
asg(S,u) :- arg(S),not asg(S,0),not asg(S,1).
asg(S,0) :- arg(S),not asg(S,1),not asg(S,u).
asg(S,1) :- arg(S),not asg(S,u),not asg(S,0).
sat(a) :- asg(b,Y0),leq(Y0,V0),V1=1-V0,V2=V1?V0,V2=1.
sat(b) :- asg(b,Y0),leq(Y0,V0),V0=1.
sat(c) :- asg(c,Y0),leq(Y0,V0),asg(b,Y1),leq(Y1,V1),
                                  V3=1,V3=V2?V1,V2=1-V0.
inv(a) :- asg(b,Y0),leq(Y0,V0),V1=1-V0,V2=V1?V0,V2=0.
inv(c) :- asg(c,Y0),leq(Y0,V0),asg(b,Y1),leq(Y1,V1),
                                  V3=V2?V1,V3=0,V2=1-V0.
inv(b) :- asg(b,Y0),leq(Y0,V0),V0=0.
:- arg(S),asg(S,1),inv(S).
:- arg(S),asg(S,0),sat(S).
asg2(S,0) :- asg(S,0).
asg2(S,1) :- asg(S,1).
asg2(S,0)|asg2(S,1)|asg2(S,u) :- asg(S,u).
sat2(a) :- asg2(b,Y0),leq(Y0,V0),V1=1-V0,V2=V1?V0,V2=1.
sat2(b) :- asg2(b,Y0),leq(Y0,V0),V0=1.
sat2(c) :- asg2(c,Y0),leq(Y0,V0),asg2(b,Y1),leq(Y1,V1),
                                  V3=1,V3=V2?V1,V2=1-V0.
inv2(b) :- asg2(b,Y0),leq(Y0,V0),V0=0.
inv2(c) :- asg2(c,Y0),leq(Y0,V0),asg2(b,Y1),leq(Y1,V1),
                                  V3=V2?V1,V3=0,V2=1-V0.
inv2(a) :- asg2(b,Y0),leq(Y0,V0),V1=1-V0,V2=V1?V0,V2=0.
saturate :- asg(c,X0),asg2(c,X0),asg(b,X1),asg2(b,X1),
                                  asg(a,X2),asg2(a,X2).
saturate :- arg(S),asg2(S,0),sat2(S).
saturate :- arg(S),asg2(S,1),inv2(S).
asg2(S,u) :- asg(S,u),saturate.
asg2(S,0) :- asg(S,u),saturate.
asg2(S,1) :- asg(S,u),saturate.
sat2(S) :- arg(S),saturate.
inv2(S) :- arg(S),saturate.
:- not saturate.
\end{verbatim}

An output of an ASP solver given this instance looks as follows (only showing $\PassP$ and $\PsaturateP$ predicates):
\begin{verbatim}
Answer: 1
asg(c,u) asg(b,0) asg(a,1) saturate 
Answer: 2
asg(c,1) asg(b,1) asg(a,1) saturate
SATISFIABLE
\end{verbatim}
\hfill$\diamond$
%
%
\end{example}

Note that $\pi_{\pef}$, in addition to providing a means of enumerating the preferred interpretations of any ADF, also gives us a complexity adequate means of deciding skeptical acceptance problems.  
  The latter via skeptical reasoning of ASP disjunctive programs with predicates of bounded arity (see Table \ref{tbl:complResGrappa}).  Credulous reasoning for the preferred semantics is equivalent to credulous reasoning for the admissible semantics; applying credulous reasoning on the encoding given in Section \ref{sec::admadf} hence provides a means of computation at the right level of complexity for this reasoning task.

\subsection{Encoding for the grounded semantics}
\label{sec:yadf:ground}

Our encoding for the grounded semantics is based on the fact that (see \cite{DBLP:journals/ai/StrassW15}) $v \in \grd(D)$ for an interpretation $v$ and an ADF $D = (S, \{\phi_s\}_{s \in S})$ iff $v$ is the (unique) $\leq_i$-minimal interpretation satisfying

\begin{itemize}
\item for each $s \in S$ such that $v(s) = \tvtt$ there exists an interpretation $w \in [v]_2$ for which $w(\phi_s) = \tvtt$,
\item for each $s \in S$ such that $v(s) = \tvft$ there exists an interpretation $w \in [v]_2$ for which $w(\phi_s) = \tvft$, and
\item for each $s \in S$ such that $v(s) = \tvut$ there exist interpretations $w_1 \in [v]_2$ and  $w_2 \in [v]_2$ such that $w_1(\phi_s) = \tvtt$ and $w_2(\phi_s) = \tvft$.
\end{itemize}

We say that an interpretation $v$ for the ADF $D$ that satisfies exactly one of the above for a specific $s \in S$ (e.g. $v(s) = \tvtt$ and there exists an interpretation $w \in [v]_2$ for which $w(\phi_s) = \tvtt$), that it satisfies the properties for being a candidate for being the grounded interpretation w.r.t. $s$.  The completion $w$, or alternatively the completions $w_1$ and $w_2$, verify this fact for $v$ and $s$.  If $v$ satisfies the properties w.r.t. every $s \in S$ then $v$ is a candidate for being the grounded interpretation of $D$.  An interpretation $v' <_i v$ that is also a candidate for being the grounded interpretation is a counter-model (alternatively, counter-example) to $v$ being the right candidate (for being the grounded interpretation).

Our encoding for the grounded semantics essentially consists first of all, once more in the guessing part $\pi_{\txt{guess}}$ where we guess assignments of truth values to the statements of the  ADF of interest $D$.  This corresponds to guessing an interpretation $v$ for $D$.  Constraints in our encoding filter out guessed interpretations which either are not candidates to being the grounded interpretation or which have counter-models to being the right candidate.  These constraints rely on the rules in $\pi_{\txt{sat}}(D)$ defined in Section~\ref{sec::admadf} and rules defining when an interpretation has a counter-model to being the right candidate respectively.


We start with a few facts needed for our encoding.  First of all, we use facts analogous to those in $\pi_{\text{lt}}$ defined in Section~\ref{sec::admadf} for encoding the truth values a possible counter-model to the interpretation guessed via $\pi_{\txt{guess}}$ (being the right candidate for the grounded interpretation) can assign to the statements.  Here we also need an additional argument (the first argument of the predicate $\PlneP$) allowing us to check whether the interpretation in question is distinct from the one determined by the predicate $\PassP$:

\begin{align*}
\pi_{\txt{lne}} :=& \{\Plne{\tvtt,\tvut,\tvtt}. \ \Plne{\tvtt,\tvut,\tvft}.  \} \ \cup \\
                &  \{\Plne{\tvft,\tvtt,\tvtt}. \ \Plne{\tvft,\tvft,\tvft}. \ \Plne{\tvft,\tvut,\tvut}. \}.
\end{align*}

\noindent Given a candidate for the grounded interpretation $v$ determined by the atoms $\PassP$, for instance the two first facts in $\pi_{\txt{lne}}$ express (using the last two arguments of the predicate $\PlneP$) that if for a statement $s$, $v(s) = x$ with $x \in \{\tvtt,\tvft\}$, then a counter-model $v'$ to $v$ being the right candidate (for the grounded interpretation) can map $s$ to the truth value $\tvut$.  Moreover, the first argument of the alluded to facts indicates that in this case $v'(s) \neq v(s)$.  

Secondly, we need a set of facts for checking whether an interpretation satisfies the properties required for candidates to being the grounded interpretation mentioned at the beginning of this section.  Specifically, given a statement $s$ of the ADF $D$, $\Pprop{z_1,z_2,z_3}$ can be used to check whether the correct relationship between $z_1 = v(s)$, $z_2 = w_1(\ac{s})$, and $z_3 = w_2(\ac{s})$ holds for an interpretation $v$ for $D$, and $w_1,w_2 \in \ext{v}$ (e.g. that if $v(s) = \tvut$ then there must be $w_1\in \ext{v}$, $w_2\in \ext{v}$ s.t. $w_1(\ac{s}) = \tvt$ and $w_2(\ac{s}) = \tvf$).  In particular, note that $w_1 = w_2$ is possible and hence $\Pprop{x,y,z}$ can also be used to check the properties for when $v(s) = x$ and $x \in \{\tvtt,\tvft\}$ (first two facts in $\pi_{\txt{prop}}$):      

\begin{align*}
\pi_{\txt{prop}} :=& \{\Pprop{\tvtt,\tvtt,\tvtt}. \ \Pprop{\tvft,\tvft,\tvft}. \ \Pprop{\tvut,\tvft,\tvtt}. \}. 
\end{align*}

The following module consists of constraints checking whether the interpretation corresponding to the assignments guessed via $\pi_{\txt{guess}}$ is a candidate (hence the use of the identifier ``$\txt{ca}$'') for being the grounded interpretation:

\begin{align*}
\pi_{\txt{ca}}(D) := 
&\{  \derive \Parg{S}, \Pass{S,\tvtt}, \Pnot\;\Psat{S}.
\;\,\derive \Parg{S}, \Pass{S,\tvft}, \Pnot\:\Punsat{S}. \} \ \cup \\
&\{\derive \Parg{S}, \Pass{S,\tvu}, \Pnot\;\Punsat{S}. \
\derive \Parg{S}, \Pass{S,\tvu}, \Pnot\;\Psat{S}. \}.
\end{align*}

\noindent The module $\pi_{\txt{ca}}(D)$ assumes, as we stated earlier, that the rules in $\pi_{\txt{sat}}(D)$ (and thus the facts in $\pi_{\txt{lt}}$) defined in Section~\ref{sec::admadf} are also part of the encoding for the grounded semantics. For instance the first constraint then checks that there be no $s \in S$ for which it holds that $v(s) = \tvt$ but there is no $w \in \ext{v}$ for which $w(\ac{s}) = \tvt$ for the interpretation $v$ guessed via the atoms constructed with the predicate $\PassP$.  

Now, note that, since, as we explained before, the grounded interpretation is the minimal (w.r.t. $\leq_i$) of the interpretations that are candidates for being the grounded interpretation, this interpretation can be obtained via choosing the minimal interpretation w.r.t. $\leq_i$ from all interpretations that correspond to some answer set of the encoding

\begin{align*}
&\pi_{\txt{ca-}\ground}(D) := %
\pi_{arg}(D) \cup
\pi_{\txt{lt}} \cup
\pi_{\txt{guess}} \cup
\pi_{\txt{sat}}(D)  \cup
\pi_{\txt{ca}}(D);
\end{align*}

\noindent i.e. what essentially boils down to a skeptical acceptance problem for $\pi_{\txt{ca-}\ground}(D)$.

In order to obtain an encoding not requiring (in the worst case) processing of all answer sets we need a rule defining when an interpretation is a counter-model to the interpretation determined via the predicate $\PassP$ being the right candidate.  For this we will need to make repeated use of the function $\B$ defined in Section~\ref{sec::admadf} within a single rule.  We therefore first of all make the symbol ranging over the ASP-variables representing subformulas of a propositional formula $\phi$ within $\B(\phi)$ an explicit parameter of the function.  This is straightforward, but for completeness we give the full definition of our parametrised version of the function $\B$.  Here $\phi$ is once more a propositional formula built from propositional variables in a set $P$, while now $V$ is an arbitrary (meta-) symbol used to refer to the variables introduced by the function:

\[
\BT{V}(\phi) \hspace{-1.1pt}:=\hspace{-1mm}
    \left\{
        \begin{array}{ll}
                \hspace{-2mm}
		\BT{V}(\phi_1) \cup 	
		\BT{V}(\phi_2) \cup 	
		\{ V_\phi = V_{\phi_1} \texttt{\&} V_{\phi_2} \} &
			\hspace{-2mm}\mbox{if\ } \phi=\phi_1\wedge\phi_2\\
                \hspace{-2mm}
		\BT{V}(\phi_1) \cup 	
		\BT{V}(\phi_2) \cup 	
		\{ V_\phi = V_{\phi_1} \texttt{?} V_{\phi_2} \} &
			\hspace{-2mm}\mbox{if\ } \phi=\phi_1\vee\phi_2\\
                \hspace{-2mm}
		\BT{V}(\psi) \cup 	
		\{ V_\phi = 1 \texttt{-} V_{\psi} \} &
			\hspace{-2mm}\mbox{if\ } \phi=\neg \psi	\\
                \hspace{-2mm}
		\emptyset &
			\hspace{-2mm}\mbox{if\ } \phi= p\in P.
        \end{array}
	\right.
\]

\noindent
Again, $V_\phi$,
$V_{\phi_1}$
$V_{\phi_2}$ and
$V_{\psi}$ are variables representing the subformulas of $\phi$.  From now on, whenever we introduce sets $\BT{V_1}(\phi_1)$ and $\BT{V_2}(\phi_2)$ for possibly identical formulas $\phi_1$ and $\phi_2$ but distinct symbols $V_1$ and $V_2$, we implicitly also assume that then $\BT{V_1}(\phi_1) \cap \BT{V_2}(\phi_2) = \emptyset$.   

Our rule for defining counter-models to an interpretation being the right candidate for the grounded interpretation requires first of all a part for ``generating'' an interpretation less informative (w.r.t $\leq_i$) and distinct from the interpretation determined by $\pi_{\txt{guess}}$; i.e. a candidate counter-model.  For this we use the atoms

\begin{flalign*}
\LE_D := \{ \Pass{s,X_s},\Plne{E_s,Y_s,X_s} \mid s \in S\} \cup \BT{E}(\vee_{s \in S}s) \cup \{E_{\vee_{s \in S}s}= 1\}.
\end{flalign*}

\noindent Here, given an interpretation $v$ determined by the atoms $\Pass{s,X_s}$ ($s \in S$), the atoms $\Plne{E_s,Y_s,X_s}$ are used to generate an assignment of truth values to the statements (via argument $Y_s$) corresponding to an interpretation $v' \leq_i v$.  Then $\BT{E}(\vee_{s \in S}s) \cup \{E_{\vee_{s \in S}s}= 1\}$ are used (via the arguments $E_s$ of the atoms $\Plne{E_s,Y_s,X_s}$) to check that there is a $s \in S$ for which $v'(s) \neq v(s)$ and hence in fact $v' <_i v$.  Thus, if $v$ is a candidate for the grounded interpretation, then $v'$ is a candidate counter-model for $v$ being the grounded interpretation.


We now introduce the following set of atoms to check whether the candidate counter-model is indeed a counter-model to the interpretation determined by $\PassP$ being the grounded interpretation.  We need to check the properties candidates for being the grounded interpretation need to satisfy for each of the statements $s$ of the ADF of interest $D$; therefore the need for having sets of atoms $\CMl_{s,D}$ defined for every statement $s$:

\begin{flalign*}
\CMl_{s,D} := &\{\Pleq{Y_t,V^{(t,s),1}} \mid t \in \prts{s}\} \cup \BT{V^{(t,s),1}}(\ac{s}) \ \cup \\
          &\{\Pleq{Y_t,V^{(t,s),2}} \mid t \in \prts{s}\} \cup \BT{V^{(t,s),2}}(\ac{s}) \ \cup \\
          & \{\Pprop{Y_s,V^{(t,s),1}_{\ac{s}},V^{(t,s),2}_{\ac{s}}}\}.
\end{flalign*}

\noindent Note here the use of the predicate $\PleqP$ (defined via the module $\pi_{\txt{lt}}$ from Section~\ref{sec::admadf}) for generating assignments to statements corresponding to completions.  The first two lines of the definition of $\CMl_{s,D}$ are used for generating completions  $w_1,w_2 \in \ext{v'}$ of an interpretation $v'$  and the outcome of the evaluation of $\ac{s}$ by the completions.  Then the third line is used to check that $w_1,w_2$ verify that $v'$ is a candidate for being the grounded interpretation (and hence a counter-model for a $v >_i v'$ being the right candidate for the grounded interpretation).

Putting all the above together we have quite a large rule defining when a candidate counter-model is indeed a counter-model to the interpretation determined by $\pi_{\txt{guess}}$ being the right candidate for the grounded interpretation:


\begin{flalign*}
\pi_{\txt{cm}}(D)%
             := \{\Pcm \derive & \LE_D \cup \bigcup_{s \in S} \CMl_{s,D} \}.
\end{flalign*}

\noindent The following is then an encoding allowing to compute the grounded interpretation for the ADF $D$ in one go:

\begin{align*}
&\pi_{\ground}(D) := %
\pi_{arg}(D) \cup
\pi_{\txt{lt}} \cup
\pi_{\txt{lne}} \cup
\pi_{\txt{prop}} \cup
\pi_{\txt{guess}} \cup
\pi_{\txt{sat}}(D) \cup
\pi_{\txt{ca}}(D) \cup
\pi_{\txt{cm}}(D) \cup
\{\derive \Pcm.\}.
\end{align*}

\noindent Note, in particular, the constraint $\{\derive \Pcm.\}$ disallowing interpretations having a counter-model to them being the right candidate for the grounded interpretation.  

\begin{propn}
\label{thm:ground}
For every ADF $D$ it holds that $\ground(D) \cong \AS(\pi_{\ground}(D))$,
\end{propn}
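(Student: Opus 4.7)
(sketch) The proof follows the same two-direction pattern as the proof of Proposition~\ref{thm:adm}, exploiting the characterisation of the grounded interpretation as the unique $\leq_i$-minimal candidate (in the sense defined at the beginning of Section~\ref{sec:yadf:ground}). The modules $\pi_{\txt{arg}}(D)$, $\pi_{\txt{lt}}$, $\pi_{\txt{guess}}$, and $\pi_{\txt{sat}}(D)$ behave as analysed in Proposition~\ref{thm:adm}, so the plan is to isolate the novel contributions of $\pi_{\txt{lne}}$, $\pi_{\txt{prop}}$, $\pi_{\txt{ca}}(D)$, $\pi_{\txt{cm}}(D)$, and the final constraint $\derive \Pcm.$, and show that together they precisely enforce the candidate property together with $\leq_i$-minimality.

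For the ``only if'' direction, given $v \in \ground(D)$ I would define $\aspint$ by taking the atoms used in the proof of Proposition~\ref{thm:adm} (i.e.\ the facts from $\pi_{\txt{arg}}(D)$ and $\pi_{\txt{lt}}$, the atoms $\Pass{s,v(s)}$, and the derived $\Psat{s}$, $\Punsat{s}$ atoms), and additionally all facts in $\pi_{\txt{lne}}$ and $\pi_{\txt{prop}}$. No $\Pcm$ atom is added. Satisfaction of $\ared{\pi_{\txt{ca}}(D)}{\aspint}$ is immediate from $v$ being a candidate: for each $s \in S$ with $v(s) \in \{\tvtt,\tvft\}$, the required completion witnesses derivation of $\Psat{s}$ or $\Punsat{s}$, and for $v(s) = \tvut$ both are derived. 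For the rule in $\pi_{\txt{cm}}(D)$ (and hence the constraint $\derive \Pcm.$), I would argue contrapositively: if its body were satisfied under some grounding~$\theta$, then the $\theta Y_s$ values together with $\theta E_{\vee_{s\in S}s} = 1$ would yield an interpretation $v' \leq_i v$ with $v' \neq v$ whose completions $w_1, w_2$ (read off the $\PleqP$ atoms) verify the candidate property at every $s$ via $\pi_{\txt{prop}}$; this $v' <_i v$ would contradict minimality of $v$. The subset-minimality of $\aspint$ with respect to $\ared{\pi_{\ground}(D)}{\aspint}$ is established exactly as in Proposition~\ref{thm:adm}: each predicate introduced by a module has its extension forced by the corresponding reduct rules.

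For the ``if'' direction, given $\aspint \in \AS(\pi_{\ground}(D))$ with $v \cong \aspint$, the argument of Proposition~\ref{thm:adm} shows $v$ is well-defined via $\pi_{\txt{guess}}$. The four constraints of $\pi_{\txt{ca}}(D)$, in conjunction with the semantics of $\Psat{s}$ and $\Punsat{s}$ already established in that proof, force $v$ to be a candidate. Finally, to conclude $v \in \ground(D)$ I would suppose for contradiction that some candidate $v' <_i v$ exists; then the substitution $\theta$ setting $\theta X_s = v(s)$ and $\theta Y_s = v'(s)$, together with appropriate $\theta V^{(t,s),j}$ reflecting completions $w_1,w_2 \in \ext{v'}$ witnessing the candidate property at each $s$, turns the body of the $\pi_{\txt{cm}}(D)$ rule into a subset of $\aspint$ (using $\pi_{\txt{lne}}$ for the $\Plne{E_s,Y_s,X_s}$ atoms, $\BT{E}$ for checking $v' \neq v$, $\pi_{\txt{lt}}$ for the completions, and $\pi_{\txt{prop}}$ for the candidate check). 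This forces $\Pcm \in \aspint$, contradicting $\derive \Pcm$.

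The main obstacle is verifying that the large body of $\pi_{\txt{cm}}(D)$ really captures ``exists a candidate strictly below $v$ w.r.t.\ $\leq_i$''. Three ingredients must be aligned: $\LE_D$ must simultaneously encode $v' \leq_i v$ (via the last two arguments of $\Plne{\cdot,\cdot,\cdot}$) \emph{and} strictness (via its first argument propagated through $\BT{E}(\vee_{s\in S}s)$ so that $E_{\vee_{s\in S}s} = 1$ holds iff $v'(s) \neq v(s)$ for at least one $s$); the two disjoint copies $\BT{V^{(t,s),1}}(\ac{s})$ and $\BT{V^{(t,s),2}}(\ac{s})$ must independently evaluate $\ac{s}$ under two completions of $v'$; and $\pi_{\txt{prop}}$ must correctly tie $v'(s)$ to the pair of truth values of $\ac{s}$ produced by these completions. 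Once these three alignments are verified, both directions follow by the same saturation/minimality bookkeeping as in Proposition~\ref{thm:adm}.
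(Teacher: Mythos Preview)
Your sketch is correct and follows essentially the same two-direction template that the paper uses throughout; the only presentational difference is that the paper's own proof is a one-line forward reference to the (fully spelled out) proof of Proposition~\ref{thm:stb}, noting that $\pi_{\stb}(D)$ is built on top of $\pi_{\ground}(D)$ with slightly more elaborate versions of $\pi_{\txt{lt}}$, $\pi_{\txt{lne}}$, $\pi_{\txt{prop}}$, $\pi_{\txt{cm}}(D)$ and with $\pi_{\txt{model}}(D)$ in place of $\pi_{\txt{ca}}(D)$. Your choice to argue directly from Proposition~\ref{thm:adm} and then handle the new modules $\pi_{\txt{ca}}(D)$ and $\pi_{\txt{cm}}(D)$ explicitly is equivalent and arguably clearer for this proposition in isolation; the paper's route has the advantage that the more intricate bookkeeping for $\LE_D$ and $\CMl_{s,D}$ is written out once (in the stable case) and reused.
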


\begin{proof} (sketch) The proof is similar to that of Proposition \ref{thm:stb}.  Indeed note that $\pi_{\stb}(D)$ (defined in Section~\ref{sec:yadf:stable}) essentially builds on $\pi_{\ground}(D)$, the main difference being the slightly more complex versions of $\pi_{\txt{lt}}$, $\pi_{\txt{lne}}$, $\pi_{\txt{prop}}$, $\pi_{\txt{cm}}(D)$ and the use of $\pi_{\txt{model}}(D)$ (see Section~\ref{sec:yadf:stable}) rather than $\pi_{\txt{ca}}(D)$ (and $\pi_{\txt{sat}}(D)$).\end{proof}

We do not obtain complexity sensitive means of deciding credulous and skeptical reasoning \wrt the grounded semantics via the encoding $\pi_{\ground}$. Nevertheless, the encoding offers an alternative strategy to deriving the grounded interpretation of an ADF to that of the static encodings at the basis of the \PNAME{DIAMOND} family of systems mentioned in the introduction to this work.  Also, the encoding forms the basis of the complexity adequate (w.r.t. the reasoning problems) encoding for the stable semantics we present in Section \ref{sec:yadf:stable}.

\subsection{Encoding for the stable semantics}
\label{sec:yadf:stable}

As already indicated and is to be expected given the definition of this semantics, our encoding for the stable semantics is based on the encoding for the grounded semantics.  Nevertheless, some modifications are required.  First of all we need to guess assignments to statements of an ADF $D$ corresponding to a two valued rather than a three valued interpretation $v$ for $D$.  Secondly, we need to check that $v$ is a model of $D$.  Third, we need to ensure that $v$ assigns the truth value $\tvtt$ to the same statements as the grounded interpretation of the reduct of $D^v$ (i.e. $\projection{v}{E_v} = \ground(D^v)$) rather than $D$ simpliciter.

To start we slightly modify some facts used in previous encodings.  Our encoding once more follows the guess \& check methodology and there will therefore be a part used to guess a candidate $v$ for being the stable interpretation of our ADF of interest $D$.  We will then need a modified version of $\pi_{\txt{lt}}$ (defined in Section~\ref{sec::admadf}) to set the right truth values for completions of a candidate counter-model $v'$ to $v$ (actually $\projection{v}{E_v}$) being the right candidate for the grounded interpretation (as explained in Section \ref{sec:yadf:ground}) of the reduct $D^v$:

\begin{align*}
\pi_{\txt{lt}}' :=& \{ \PleqS{\tvtt,\tvut,\tvft}. \ \PleqS{\tvtt,\tvut,\tvtt}. \ \PleqS{\tvtt,\tvft,\tvft}. \ \PleqS{\tvtt,\tvtt,\tvtt}. \} \ \cup \\
              & \{ \PleqS{\tvft,\tvut,\tvft}. \ \PleqS{\tvft,\tvft,\tvft}. \ \PleqS{\tvft,\tvtt,\tvft}.\}.
\end{align*}

The first four facts in $\pi_{\txt{lt}}'$ are as to those in $\pi_{\txt{lt}}$.  
  The only difference is in the first argument which we use to encode the assignment of a truth value to a statement $s$ by a guessed candidate for the grounded interpretation $v$ of the reduct $D^v$.  The other two values express possible values a completion $w$ of a a candidate counter-model $v'$ to $v$ (being the right candidate for the grounded interpretation of $D^v$) can assign to the statement $s$.  For instance, $\PleqS{\tvtt,\tvut,\tvft}$ expresses that if $v(s) = \tvt$ and $v'(s) = \tvut$ then one of the two possible assignments of a truth value to $s$ by $w \in \ext{v'}$ is $w(s) = \tvft$.  The three last facts in $\pi_{\txt{lt}}'$ now indicate possible assignments of truth values to a statement $s$ by a $w \in \ext{v'}$ when $v(s) = \tvft$.  In order to simulate the evaluation of the acceptance conditions of the reduct $D_v$ by the completions of $v'$  in other parts of our encoding we enforce that in this case $w(s) = \tvft$ whatever the value of $v'(s)$.  This amounts to replacing each statement $s$ for which $v(s) = \tvft$ within an acceptance condition $\ac{s'}$ in which the statement $s$ occurs by $\bot$ as is required by the definition of the reduct.    

The module $\pi_{\txt{lne}}$ defined in Section \ref{sec:yadf:ground} also needs to be modified (by one fact) to account for the fact that a counter-model $v'$ to an interpretation $v$ being the right candidate for satisfying $\projection{v}{E_v} = \grd(D^v)$
must be distinct from $v$ on the statements assigned the truth value $\tvtt$ (i.e. there must be at least one statement $s$ to which $v$ assigns the truth value $\tvtt$ and $v'$ the truth value $\tvut$): 

\begin{align*}
\pi_{\txt{lne}}' :=& \{\Plne{\tvtt,\tvut,\tvtt}. \ \Plne{\tvft,\tvut,\tvft}.  \} \ \cup \\
                &  \{\Plne{\tvft,\tvtt,\tvtt}. \ \Plne{\tvft,\tvft,\tvft}. \ \Plne{\tvft,\tvut,\tvut}. \}.
\end{align*}

\noindent The difference of $\pi_{\txt{lne}}'$ w.r.t $\pi_{\txt{lne}}$ is thus in the second fact ``$\Plne{\tvft,\tvut,\tvft}$.'' where the first argument in the corresponding fact in $\pi_{\txt{lne}}$ is $1$ rather than $0$.  

We also need to modify $\pi_{\txt{prop}}$ defined in Section \ref{sec:yadf:ground} adding an extra-argument (again, the first one) to indicate whether the property required per statement of an ADF for candidates for the grounded interpretation is verified or not:      

\begin{align*}
\pi_{\txt{prop}}' :=& \{\PpropS{\tvtt,\tvtt,\tvtt,\tvtt}. \ \PpropS{\tvtt,\tvft,\tvft,\tvft}. \ \PpropS{\tvtt,\tvut,\tvft,\tvtt}. \ \PpropS{\tvtt,\tvut,\tvtt,\tvft}.  \} \ \cup \\
&\{\PpropS{\tvft,\tvtt,\tvft,\tvtt}. \ \PpropS{\tvft,\tvtt,\tvtt,\tvft}. \ \PpropS{\tvft,\tvtt,\tvft,\tvft}. \} \ \cup \\
&\{\PpropS{\tvft,\tvft,\tvft,\tvtt}. \ \PpropS{\tvft,\tvft,\tvtt,\tvft}. \ \PpropS{\tvft,\tvft,\tvtt,\tvtt}. \} \ \cup \\
&\{\PpropS{\tvft,\tvut,\tvft,\tvft}. \ \PpropS{\tvft,\tvut,\tvtt,\tvtt}.\}.
\end{align*}

\noindent Here for an ADF of interest (in our encoding, the reduct $D^v$ of the interpretation $v$ guessed to be stable) we list all possible combinations of truth values of $v(s)$, $w_1(\ac{s})$, $w_2(\ac{s})$ for $w_1,w_2 \in \ext{v}$ (three last arguments in the facts) and indicate (first argument in the facts) whether the combination in question makes $w_1,w_2$ witnesses of $v$ being a candidate for the grounded interpretation w.r.t. the statement $s$ (as explained in Section \ref{sec:yadf:ground}). For instance $\PpropS{\tvtt,\tvft,\tvft,\tvft}$ indicates that $w_1(\phi_s) = w_2(\phi_s) = \tvft$ makes $w_1,w_2$ witnesses of $v$ being a candidate (w.r.t. $s$) when $v(s) = \tvft$.

As already indicated, also our encoding for the stable semantics builds on a module guessing possible assignments to the statements of the ADF $D$.  We only need to slightly modify $\pi_{\txt{guess}}$ as defined in Section~\ref{sec::admadf} to obtain a conjecture for the stable interpretation corresponding to a two valued rather than three valued interpretation for $D$:

\begin{flalign*}
\pi_{\txt{guess}}'
			:= \{ \Pass{S,\tvft} \derive & \Pnot\;\Pass{S,\tvtt}, \Parg{S}.\\
                            \Pass{S,\tvtt} \derive & \Pnot\;\Pass{S,\tvft}, \Parg{S}. \}.
\end{flalign*}

In order to check that the guessed interpretation is a model of $D$ we again need to evaluate the acceptance conditions of $D$ but this time by the guessed interpretation.  For this we make use of the following sets of atoms per statement $s$ of $D$:

\begin{flalign*}
\Ml_s := \{ \Pass{t,V_t} \mid t \in \prts{s}\} \cup \B(\ac{s}).
\end{flalign*}

\noindent Here, we again make use of the function $\B$ defined in Section~\ref{sec::admadf} but this time to evaluate the acceptance condition $\ac{s}$ by the interpretation guessed to be stable (and thus a model) of $D$ via $\pi_{\txt{guess}}'$. 

\noindent The following are then constraints, one per statement, filtering out guesses that are not models of $D$:

\begin{flalign*}
\pi_{\txt{model}}(D)%
			:= \{\derive & \ \Pass{s,V_s},\Ml_s, V_s \aneq V_{\ac{s}}. \mid s \in S\}.
\end{flalign*}

\noindent More to the point, the constraints filter out any guessed interpretation $v$ (via $\pi_{\txt{guess}}'$), for which $v(s) \neq v(\phi_s)$ for some statement $s$.

Now, note that for any $v \in \mdl(D)$, $\projection{v}{E_v}$ is a candidate to being the grounded interpretation of the reduct $D^v$.  The reason is first of all that $\ext{\projection{v}{E_v}} = \{\projection{v}{E_v}\}$ and, hence, for any $w \in \ext{\projection{v}{E_v}}$, $w(\ac{s}') = \projection{v}{E_v}(\ac{s}') = v(\ac{s})$ for the modified acceptance conditions $\ac{s}' = \phi_s[b/\bot : v(b) = \tvft]$ of $D^v$.  As a consequence, clearly whenever $\projection{v}{E_v}(s) = x$ for $x \in \{\tvtt,\tvft\}$ (in fact, $x = \tvtt$) there is a $w \in \ext{v}$, namely $w = \projection{v}{E_v}$, for which $w(\ac{s}) = x$.  In effect, the latter is the case by virtue of $v \in \mdl(D)$ and hence $v(\ac{s}) = \projection{v}{E_v}(\ac{s}') = x$  whenever $v(s) = x$. Also, there are no statements for which $\projection{v}{E_v}(s) = \tvut$.  The consequence for our encoding for the stable semantics is that $\pi_{\txt{model}}(D)$ suffices for checking whether our guessed interpretation, when projected on the statements to which it assigns the truth value $\tvtt$, is a candidate for being the grounded interpretation of $D^v$.



All that remains for our encoding of the stable semantics is therefore, as we have for the encoding of the grounded semantics, a constraint filtering out guessed interpretations which have counter-models to being the right candidate for being the grounded interpretation of $D^v$.  For this we introduce a slightly modified version of  $\pi_{\txt{cm}}(D)$ (defined in Section \ref{sec:yadf:ground}) accounting for the fact that we need to check for counter-models to $\projection{v}{E_v}$ being the right candidate for the reduct $D^v$ rather than $v$ and $D$.  This means that completions of potential counter-models need to set any statement set to the truth value $\tvft$ by $v$ also to $\tvft$.  To encode this we use the predicate $\PleqSP$ rather than $\PleqP$ in our modified version $\CMl_{s,D}'$ of the set of atoms $\CMl_{s,D}$ (from Section \ref{sec:yadf:ground}). Also, we need to check the properties that candidates of the grounded interpretation need to satisfy only for statements $s$ for which $v(s) = \tvtt$.   To encode this we make use of the predicate $\PpropSP$ rather than $\PpropP$ and add a corresponding check using ASP built in boolean arithmetic functions:

\begin{flalign*}
\CMl_{s,D}' := &\{\PleqS{X_t,Y_t,V^{(t,s),1}} \mid t \in \prts{s}\} \cup \BT{V^{(t,s),1}}(\ac{s}) \ \cup \\
          &\{\PleqS{X_t,Y_t,V^{(t,s),2}} \mid t \in \prts{s}\} \cup \BT{V^{(t,s),2}}(\ac{s}) \ \cup \\
          & \{\PpropS{P_s,Y_s,V^{(t,s),1}_{\ac{s}},V^{(t,s),2}_{\ac{s}}}\} \cup \{CX_s = 1 - X_s, O_s = P_s?CX_s, O_s = 1\}.
\end{flalign*}

\noindent Our modified module $\pi_{\txt{cm}}'(D)$ of $\pi_{\txt{cm}}(D)$ is then as follows:

\begin{flalign*}
\pi_{\txt{cm}}'(D)%
             := \{\Pcm \derive & \LE_D \cup \bigcup_{s \in S} \CMl_{s,D}' \}.
\end{flalign*}

\noindent Note that we here make use of the set of atoms $\LE_D$ as defined in Section \ref{sec:yadf:ground}, yet relying on the definition of the predicate $\PlneP$ as given by the module $\pi_{\txt{lne}}'$ rather than $\pi_{\txt{lne}}$. Putting everything together the encoding for the stable semantics has the following form:

\begin{align*}
\pi_{\stb}(D) := %
&\pi_{arg}(D) \ \cup \
\pi_{\txt{lt}}' \ \cup \
\pi_{\txt{lne}}' \ \cup \
\pi_{\txt{prop}}' \ \cup \\
&\pi_{\txt{guess}}' \ \cup \
\pi_{\txt{model}}(D) \ \cup \
\pi_{\txt{cm}}'(D) \ \cup \
\{\derive \Pcm.\}.
\end{align*}

\begin{propn}
\label{thm:stb}
For every ADF $D$ it holds that $\stable(D) \cong \AS(\pi_{\stable}(D))$.
\end{propn}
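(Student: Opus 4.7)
The plan is to follow the two-direction template established in Propositions~\ref{thm:adm} and~\ref{thm:prf}, adapting the arguments to the specific structure of the stable semantics. Let me outline the two directions and where I expect the real work to lie.

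For the ``only if'' direction, I would take $v \in \stb(D)$ and construct an ASP interpretation $\aspint$ containing: the facts from $\pi_{\txt{arg}}(D)$, $\pi_{\txt{lt}}'$, $\pi_{\txt{lne}}'$, $\pi_{\txt{prop}}'$; atoms $\Pass{s,v(s)}$ for each $s \in S$ (noting $v$ is two-valued); together with all the ground arithmetic atoms forced by these by the built-in evaluation rules. I would then verify $\aspint \in \AS(\pi_{\stb}(D))$ in two stages. First, I would check $\aspint$ satisfies $\ared{\pi_{\stb}(D)}{\aspint}$: the guess rules in $\pi_{\txt{guess}}'$ are satisfied because $v$ is two-valued; the model constraints in $\pi_{\txt{model}}(D)$ are satisfied because $v \in \mdl(D)$ implies $v(s) = v(\ac{s})$ for every $s$, which is exactly what the body $\Ml_s$ together with $V_s \aneq V_{\ac{s}}$ forbids; and the constraint $\derive \Pcm.$ is satisfied because $\Pcm \notin \aspint$ (no counter-model exists, which I justify below). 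Second, I would use an argument analogous to the one in the proof of Proposition~\ref{thm:adm} to show that no strict subset $\aspint' \subset \aspint$ can satisfy the reduct.

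For the ``if'' direction, I would take $\aspint \in \AS(\pi_{\stb}(D))$ and define $v$ by $v(s) = x \iff \Pass{s,x} \in \aspint$. From $\pi_{\txt{guess}}'$ together with the fact that $\aspint$ satisfies the reduct, $v$ is well-defined and two-valued. The constraints in $\pi_{\txt{model}}(D)$ ensure $v \in \mdl(D)$. The hard part is then showing $\projection{v}{E_v} = \grd(D^v)$. Assume for contradiction this fails. Since $\projection{v}{E_v}$ is always a candidate for the grounded interpretation of $D^v$ (as explained in the paragraph preceding $\pi_{\txt{cm}}'(D)$ in the excerpt), failure means there exists a candidate $v' <_i \projection{v}{E_v}$. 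I would exhibit a substitution making the body $\LE_D \cup \bigcup_{s \in S} \CMl_{s,D}'$ of the rule in $\pi_{\txt{cm}}'(D)$ true in $\aspint$, which would force $\Pcm \in \aspint$ and violate the constraint $\derive \Pcm.$

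The main obstacle, and the place requiring genuine care, is verifying that $\pi_{\txt{lt}}'$, $\pi_{\txt{prop}}'$, and $\CMl_{s,D}'$ together faithfully encode ``candidate for the grounded interpretation of the reduct $D^v$'' rather than of $D$. Concretely, I need two intertwined facts: (i) when $v(t) = \tvft$, the facts $\PleqS{\tvft,\cdot,\cdot}$ force any completion of a counter-candidate to send $t$ to $\tvft$, which correctly simulates the substitution $b/\bot$ in $\ac{s}^v$ when $\ac{s}$ is evaluated via $\BT{V^{(t,s),i}}(\ac{s})$; and (ii) when $v(s) = \tvft$, the added Boolean check $CX_s = 1 - X_s$, $O_s = P_s \;\texttt{?}\; CX_s$, $O_s = 1$ in $\CMl_{s,D}'$ short-circuits the grounded-candidate property for $s$, matching the fact that $s \notin E_v$ and hence the property is not required for $s$. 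Once these two correspondences are pinned down, the rest of the argument is a bookkeeping exercise paralleling the proofs of Propositions~\ref{thm:adm} and~\ref{thm:ground} (the latter being the sketch on which the present proof is said to be modelled).
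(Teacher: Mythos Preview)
Your proposal is correct and follows essentially the same approach as the paper: the same two-direction template, the same construction of $\aspint$ in the only-if direction, the same case split ($v \notin \mdl(D)$ versus $\projection{v}{E_v} \neq \grd(D^v)$) in the if direction, and the same identification of the two key technical points (that $\PleqSP$ forces $\tvft$ on parents with $v(t)=\tvft$ so as to simulate the reduct, and that the $CX_s,O_s$ check vacuously discharges the grounded-candidate property for $s\notin E_v$). The paper's proof is just your outline written out in full.
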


\begin{proof}Let $\adf$ be an ADF and $v \in \stable(D)$.  Let also

\begin{align*}
\aspint := &\pi_{arg}(D) \ \cup \ \pi_{\txt{lt}}' \ \cup \ \pi_{\txt{lne}}' \ \cup \ \pi_{\txt{prop}}' \ \cup \ \{\Pass{s,x} \mid s \in S, v(s) = x \}  
\end{align*}

\noindent be a set of ground atoms (such that $v \cong \aspint$). (We slightly abuse the notation here by using e.g. $\pi_{arg}(D)$ to refer to the set of atoms rather than the facts in the module.)   We prove now that $\aspint \in \AS(\pi_{\stable}(D))$.

We start by proving that $\aspint$ satisfies $\ared{\pi_{\stable}(D)}{\aspint}$.  Note first that $\aspint$ satisfies each of $\ared{\pi_{arg}(D)}{\aspint} = \pi_{arg}(D)$, $\ared{\pi_{\txt{lt}}'}{\aspint} = \pi_{\txt{lt}}'$, $\ared{\pi_{\txt{lne}}'}{\aspint} = \pi_{\txt{lne}}'$, $\ared{\pi_{\txt{prop}}'}{\aspint} = \pi_{\txt{prop}}'$ since all of the facts in each of these modules are in $\aspint$.  $\aspint$ also satisfies

\begin{align*}
\ared{\pi_{\txt{guess}}'}{\aspint} = \{\Pass{s, x} \derive \Parg{s}. \mid s \in S,
\Pass{s, y} \not\in \aspint, x \in \{\tvtt, \tvft\}, y \in (\{\tvtt, \tvft\} \ \{x\})\}
\end{align*}

\noindent by the fact that $\adfint \cong \aspint$.  

Assume now that $\aspint$ satisfies the body of some constraint in $\ared{\pi_{\txt{model}}(D)}{\aspint}$, i.e. there is a $s \in S$ and a substitution $\theta$ s.t. $\Pass{s,\theta(V_s)} \in \aspint$, $\Pass{t,\theta(V_t)} \in \aspint$ for each $t \in \prts{s}$, $\theta(\B(\ac{s})) \in \aspint$, and $\theta(V_s \neq V_{\ac{s}}) \in \aspint$.  This translates to $v(s) \neq v(\ac{s})$ which means $\adfint \not\in \mdl(D)$ and contradicts $v \in \stable(D)$.  Therefore $\aspint$ does not satisfy any of the constraints in $\ared{\pi_{\txt{model}}(D)}{\aspint}$.

Consider on the other hand that $\aspint$ satisfies the body of some rule in $\ared{\pi_{\txt{cm}}'(D)}{\aspint}$.  This means that there is a substitution $\theta$ such that first of all $\Pass{s,\theta(X_s)} \in \aspint$ as well as $\Plne{\theta(E_s),\theta(Y_s),\theta(X_s)} \in \aspint$ for every $s \in S$.  Also $\theta(\BT{E}(\vee_{s \in S}s)) \in \aspint$ and $\theta(E_{\vee_{s \in S}s}= 1) \in \aspint$.  All of this together means that $v'(s) <_i v(s)$ for the interpretation $v'$ defined as $v'(s) := \theta(Y_s)$.  Moreover, since $\aspint$ satisfies $\pi_{\txt{lt}}'$, there is an $s \in S$ s.t. $v(s) = \tvtt$ and $v'(s) = \tvut$.  This means that also $E_v \neq \emptyset$ and  $\projection{v'}{E_v}(s) <_i \projection{v}{E_v}(s)$ where $\projection{v}{E_v}$ and $\projection{v'}{E_v}$ are interpretations of the reduct $D^v$.

Secondly, for every $s \in S$ we have that $\PleqS{\theta(X_t),\theta(Y_t),\theta(V^{(t,s),1})} \in \aspint$ and it is also the case that $\PleqS{\theta(X_t),\theta(Y_t),\theta(V^{(t,s),2})} \in \aspint$ for every  $t \in \prts{s}$.  Consider the interpretations $w_i$ for $i \in \{1,2\}$ defined as $w_i(t) = \theta(V^{(t,s),i})$ for every $t \in \prts{s}$.  Then $w_i(t) \geq_i v'(t)$ whenever $v(t) = \tvtt$, but $w_i(t) = v(t) = \tvft$ if $v(t) = \tvft$.  This means that $w_i(\ac{s}) = \projection{w_i}{E_v}(\ac{s}')$ for $\ac{s}' = \ac{s}[b/\bot : v(b) = \tvft]$, and $\projection{w_i}{E_v} \in \ext{\projection{v'}{E_v}}$. Now from $\theta(\BT{V^{(t,s),1}}(\ac{s})) \in \aspint$ for every $t \in \prts{s}$, $\theta(\BT{V^{(t,s),2}}(\ac{s})) \in \aspint$ for every $t \in \prts{s}$, and the fact that $\PpropS{\theta(P_s),\theta(Y_s),\theta(V^{(t,s),1}_{\ac{s}}),\theta(V^{(t,s),2}_{\ac{s}}}) \in \aspint$ we have that $\theta(P_s) = \tvtt$ whenever $\projection{w_1}{E_v}$ and $\projection{w_2}{E_v}$ verify that $\projection{v'}{E_v}$ satisfies the properties for being a candidate for the grounded interpretation of $D^v$ w.r.t. $s \in E_v$. Otherwise $\theta(P_s) = \tvft$.  Moreover, from  $\theta(CX_s = 1 - X_s) \in \aspint$, $\theta(O_s = P_s?CX_s) \in \aspint$, and $\theta(O_s = 1) \in \aspint$ it follows that either $\theta(P_s) = \tvtt$ or $\theta(X_s) = v(s) = \tvft$ for every $s \in S$.

In other words, whenever $s \in E_v$ (remember: $E_v \neq \emptyset$) there are completions that verify that $\projection{v'}{E_v}$ satisfies the properties for being a candidate for the grounded interpretation of $D^v$ w.r.t. $s$.  This means that $\projection{v'}{E_v}$ is a counter-model to $\projection{v}{E_v}$ being the grounded interpretation of $D^v$.  This is a contradiction to $v \in \stb(D)$.  Therefore, $\aspint$ does not satisfy the body of any rule in $\ared{\pi_{\txt{cm}}'(D)}{\aspint}$ and, hence, satisfies $\ared{\pi_{\txt{cm}}'(D)}{\aspint}$.  Finally, since $\Pcm \not\in \aspint$, $\aspint$ does not satisfy the body of the constraint $\{\derive \Pcm.\} \in \ared{\pi_{\stb}(D)}{\aspint}$.  In conclusion, $\aspint$ satisfies $\ared{\pi_{\stb}(D)}{\aspint}$.

Now consider any other $\aspint'$ that satisfies $\ared{\pi_{\stb}(D)}{\aspint}$. Clearly, since $\aspint'$ satisfies all of the facts in $\ared{\pi_{\stb}(D)}{\aspint}$, we have that $\pi_{arg}(D) \cup \pi_{\txt{lt}}' \cup \pi_{\txt{lne}}' \cup \pi_{\txt{prop}}' \subseteq \aspint'$.  But also because of the form of $\ared{\pi_{\txt{guess}}'}{\aspint}$ (see above) and the fact that $\aspint'$ satisfies $\ared{\pi_{arg}(D)}{\aspint}$  it must be the case that $\{\Pass{s,x} \mid s \in S, v(s) = x \} \subset \aspint'$.  This means that in addition to $\aspint$ satisfying $\ared{\pi_{\stb}(D)}{\aspint}$ there is also no $\aspint' \subset \aspint$ that satisfies $\ared{\pi_{\stb}(D)}{\aspint}$; i.e. we have that $\aspint \in \AS(\pi_{\stb}(D))$.

We now turn to proving that for any $\aspint \in \AS(\pi_{\stb}(D))$ it holds that $\adfint \in  \stb(D)$ for
$\adfint \cong \aspint$.  Note first that for any such $\aspint$, since $\aspint$ satisfies $\ared{\pi_{arg}(D)}{\aspint} = \pi_{arg}(D)$ and $\ared{\pi_{\txt{guess}}'}{\aspint}$, $\adfint$ s.t. $\adfint \cong \aspint$ is well defined.  Now assume that $\adfint \not\in \stb(D)$.  Then either i) $v \not\in \mdl(D)$ or ii) $v \in \mdl(D)$ but $\projection{v}{E_v} \not\in \ground(D^v)$.

In the first case i) there must be a $s \in S$ s.t. $v(s) \neq v(\ac{s})$.  Consider hence the substitution $\theta$ defined as $\theta(V_s) = v(s)$ and $\theta(V_t) = v(t)$ for $t \in \prts{s}$.  This substitution is s.t. $\theta(B(r)) \subseteq \aspint$ for the constraint in $\pi_{\txt{model}}(D)$ corresponding to $s$.  This would mean that $\aspint$ does not satisfy $\ared{\pi_{\stb}(D)}{\aspint}$ which is a contradiction.  Therefore $v \in \mdl(D)$ which also means that $\projection{v}{E_v}$ is a candidate for the grounded interpretation of $D^v$ (as we argued in detail while explaining our encoding $\pi_{\stb}(D)$).

Consider now the case ii).  Since $\projection{v}{E_v}$ is a candidate for the grounded interpretation of $D^v$ but $\projection{v}{E_v} \not\in \ground(D^v)$ this means there is a counter-model $\projection{v'}{E_v}$ for $\projection{v}{E_v}$ being the right candidate for the grounded interpretation of $D^v$.  First define $v'$ to be s.t. $v'(s) =  \projection{v'}{E_v}(s)$ for $s \in E_v$ while $v'(s) = v(s)$ for $s \not\in E_v$.   Define then the substitution $\theta$ for which $\theta(X_s) = v(s)$ and $\theta(Y_s) = v'(s)$ for every $s \in S$.  Since $v' \neq v$ (because $\projection{v'}{E_v} \neq \projection{v}{E_v}$) there must be an $s \in E_v \subseteq S$ for which $v(s) \neq v'(s)$.  Set $\theta(E_s) = 1$ for all such $s \in S$, but $\theta(E_s) = 0$ whenever $v(s) = v'(s)$.  We thus have that $\theta(\BT{E}(\vee_{s \in S}s)) \in \aspint$ and $E_{\vee_{s \in S}s} = 1 \in \aspint$.  Hence, $\theta(\LE_D) \in \aspint$.

Now, since $\projection{v'}{E_v}$  is a counter-model to $\projection{v}{E_v}$ being the right candidate for the grounded interpretation of $D^v$ we have that for every $s \in E_v$ there are completions $w_{s,1}$ and $w_{s,2}$ of $\projection{v'}{E_v}$ that are witnesses for $\projection{v'}{E_v}$ satisfying the properties candidates for the grounded interpretation neeed to satisfy w.r.t. $s$.  Hence, we continue defining the substitution $\theta$ s.t. $\theta(V^{(t,s),i}) =  w_{s,i}(t)$ for every $s \in E_v$ and $t \in \prts{s} \cap E_v$.  On the other hand $\theta(V^{(t,s),i}) =  \tvft$ for $t \in \prts{s} \setminus E_v$.   Also, $\theta(P_s) = 1$ for every $s \in E_v$.

For $s \not\in E_v$ we on the other hand define  $\theta(V^{(t,s),i}) = w(t)$ ($i \in \{1,2\}$), $t \in \prts{s} \cap E_v$  for some arbitrary $w \in \ext{v'}$.  On the other hand $\theta(V^{(t,s),i}) =  \tvft$ for $t \in \prts{s} \setminus E_v$. Also, $\theta(P_s) = 1$ whenever $v'(s) = w(\ac{s}')$ ($\ac{s}' = \phi_s[b/\bot : v(b) = \tvft]$) and $\theta(P_s) = 0$ otherwise.  Then we have that $\theta(\BT{V^{(t,s),i}}(\ac{s})) \in \aspint$ for $s \in S$, $t \in \prts{s}$, ($i \in \{1,2\}$).  Also, $\theta(CX_s = 1 - X_s) \in \aspint$, $\theta(O_s = P_s?CX_s) \in \aspint$, and $O_s = 1 \in \aspint$ for every $s \in S$ ($\theta(P_s) = 1$ for $s \in E_v$, while $\theta(CX_s) = 1$ for $s \not\in E_v$).  I.e. $\theta(\CMl_{s,D}') \in \aspint$ for every $s \in S$.  Hence, since also $\theta(\LE_D) \in \aspint$, we have that the body of a rule in $\ared{\pi_{\txt{cm}}'(D)}{\aspint}$ is satisfied by $\aspint$ and, therefore, $\Pcm \in \aspint$.  This means that the constraint $\derive \Pcm. \in \ared{\pi_{\stb}(D)}{\aspint}$ is satisfied by $\aspint$ which contradicts $\aspint \in \AS(\pi_{\stb}(D))$.  Therefore, the case ii) is also not possible and $v \in \stb(D)$ must be the case.  \end{proof}

\begin{example} The encoding $\pi_{\stb}(D)$ for the ADF $D$ from Example~\ref{ex:adf} as implemented by our system \PNAME{YADF} (we slightly condense the encoding by generating some facts using rules) looks as follows:

\begin{verbatim}

arg(a).
arg(b).
arg(c).
val(u).
val(0).
val(1).
lt2(1,u,1).
lt2(1,u,0).
lt2(1,0,0).
lt2(1,1,1).
lt2(0,X,0) :- val(X).
lne(1,u,1).
lne(0,u,0).
lne(0,X,X):- val(X).
prop(1,1,1,1).
prop(1,0,0,0).
prop(1,u,1,0).
prop(1,u,0,1).
prop(0,X1,X2,X3) :- val(X1),val(X2),val(X3),not prop(1,X1,X2,X3).
asg(S,1) :- arg(S),not asg(S,0).
asg(S,0) :- arg(S),not asg(S,1).
:- asg(b,V0),V0!=V0.
:- asg(a,V1),asg(b,V0),V3=V2?V0,V3!=V1,V2=1-V0.
:- asg(c,V0),asg(b,V1),V3=V2?V1,V3!=V0,V2=1-V0.
cm :- asg(c,X0),asg(b,X1),asg(a,X2),lne(E0,Y0,XO),lne(E1,Y1,X1),
      lne(E2,Y2,X2),E20=E0?E1,E21=E2?E20,E21=1,lt2(X0,Y0,V1),
      lt2(X1,Y1,V2),V4=V3?V2,V3=1-V1,lt2(X0,Y0,V5),lt2(X1,Y1,V6),
      V7=1-V5,V8=V7?V6,prop(P0,Y0,V4,V8),CX0=1-X0,OR9=P0?CX0,OR9=1,
      lt2(X1,Y1,V10),lt2(X1,Y1,V11),prop(P1,Y1,V10,V11),CX1=1-X1,
      OR12=P1?CX1,OR12=1,lt2(X1,Y1,V13),V15=V14?V13,V14=1-V13,
      lt2(X1,Y1,V16),V17=1-V16,V18=V17?V16,prop(P2,Y2,V15,V18),
      CX2=1-X2,OR19=P2?CX2,OR19=1.
:- cm.
\end{verbatim}

An output of an ASP solver given this instance looks as follows:

\begin{verbatim}
UNSATISFIABLE
\end{verbatim}

\noindent and indicates that the ADF at hand does not possess any stable model.
\hfill$\diamond$
\end{example}

Concluding our presentation of dynamic encodings for ADFs, we note that also $\pi_{\stb}$, in addition to giving us a means of computing the stable interpretations of any ADF, provides us with a complexity-attuned mechanism to decide credulous and skeptical reasoning tasks via the corresponding ASP reasoning tasks (see Table \ref{tbl:complResGrappa}).

\section{Grappa encodings}\label{sec:grapenc}

We now illustrate how to extend the methodology used in our construction of dynamic encodings for ADFs to GRAPPA.  For this purpose we give ASP encodings for the admissible, complete, and preferred semantics.  Reflecting the relationship between ADFs and GRAPPA, structurally the encodings are very similar to those for ADFs; the main difference being in the encoding of the evaluation of the acceptance patterns.   

Also for our encodings for GRAPPA we make use of the correspondence $\cong$ between 3-valued interpretations (now for GRAPPA instances) and sets of ground atoms (interpretations of ASP programs) defined via ASP atoms $\Pass{s,x}$
for statements $s$ and $x \in \truthvals$.
Hence, we now strive for encodings $\pi_{\sigma}$ for $\sigma \in \{\adm,\com,\prf\}$ s.t.\
for every \grappa instance $G$ we get $\sigma(G) \cong \AS(\pi_\sigma(G))$ (see Definition \ref{defcorr} for the formal meaning of the latter overloaded use of $\cong$).  We will reuse several of the ASP fragments we defined for ADFs. Formally this amounts to extending the corresponding encoding functions to also admit \grappa instances as arguments.

%

%
%
%
%
%

Throughout this section, let $G=(S,E,L,\lambda,\alpha)$ be a \grappa instance with $S=\{s_1,\dotsc,s_k\}$.  As already hinted at, the main difference between the encodings for GRAPPA and ADFs is in the definition of the set of atoms $\B(\phi)$ (first defined for ADFs in Section~\ref{sec::admadf}) corresponding to the semantic evaluation of the acceptance conditions (now patterns) associated to the statements.   
%
%
%
%
%
%
%
The recursive function representing
the evaluation of patterns
needs a statement $s$ as an additional parameter and for the encoding of the basic patterns is defined as $\B_s(\phi):=$

%
%
%
%
\[
        \begin{array}{ll}
		\B_s(\phi_1) \ \cup \ 	
		\B_s(\phi_2) \ \cup \
		\{ V_\phi = V_{\phi_1} \& V_{\phi_2} \} &
			\mbox{if\ } \phi=\phi_1\wedge\phi_2
		\\
		\B_s(\phi_1) \ \cup \ 	
		\B_s(\phi_2) \ \cup \
		\{ V_\phi = V_{\phi_1} ? V_{\phi_2} \} &
			\mbox{if\ } \phi=\phi_1\vee\phi_2
		\\
		\B_s(\psi) \ \cup \
		\{ V_\phi = 1- V_{\psi} \} &
			\mbox{if\ } \phi=\neg \psi
		\\
	        P_s(\tau) \ \cup \
                \{ V_\phi = \operatorname{\#sum}\{1:V_{\tau} \bar{R} a\} \ \} &
			\mbox{if\ } \phi= \tau R a.
        \end{array}
\]

%
The difference between $\B_s$ and $\B$ (note the missing subscript $s$) as defined in
Section \ref{sec:encadf}
is in the last line where %
$P_s(\tau) \cup \{ V_\phi = \operatorname{\#sum}\{1:V_{\tau} \bar{R} a\} \}$ encodes the evaluation of a basic pattern $\phi= \tau R a$. Here we make use of the ASP aggregate $\operatorname{\#sum}$ as well as the simple function $\bar{R}:=\,\, <=$ (resp.\ $>=$, ${!}{=}$) if $R =\,\, \leq$ (resp.\ $\geq$, $\neq$) and $\bar{R}=R$ otherwise, relating \grappa and ASP syntax for relational operators.

The function $P_s(\tau)$ on the other hand gives us a set of atoms corresponding to the evaluation of a sum  $\tau$ of terms:

\[
P_s(\tau) \hspace{-1pt}:=
    \hspace{-2pt}\left\{\hspace{-6pt}
        \begin{array}{ll}
        P_s(\chi) \ \cup \ T_s(t) \ \cup \
        \{ V_{\tau} = a*V_t + V_{\chi} \}
& \hspace{-7pt} \mbox{if\ } \tau = at + \chi \\
        T_s(t) \ \cup \
        \{ V_{\tau} = a*V_t\}
& \hspace{-7pt} \mbox{if\ } \tau = at. \\
        \end{array}
      \right.
\]

The definition of $P_s$ in turn makes use of the function $T_s(t)$
that returns an atom representing a term $t$.
Here let $s \in S$ be fixed and
$\parent(s) = \{r_1,\ldots,r_q\}$,
$l_r = \lambda(r,s)$ for $r \in \parent(s)$,
and $\parent(s,l) = \{r \in \parent(s) \mid l_r = l\}$.
In order to define atoms corresponding to the evaluation of terms
depending
on the active labels
(those without subscript $t$)
we use the ASP aggregates
$\operatorname{\#sum}$, $\operatorname{\#min}$, $\operatorname{\#max}$, and $\operatorname{\#count}$,
as well as variables $Z_r$ corresponding to completions of the guessed assignment of statements $r \in S$.
Atoms corresponding to terms whose evaluation is independent of the active labels, on the other hand,
can be constructed based on the instance $G$ only.
We define $T_s(t)$ as

%
%
\begin{flalign*}
& \{ \ V_t = \operatorname{\#sum}\{Z_{r_{i_1}},r_{i_1} \mtt{;}\dots\mtt{;} Z_{r_{i_w}},r_{i_w}\} \  \} \\
& \mbox{ with } \{r_{i_1},\ldots,r_{i_w}\} = \parent(s,l)
\hspace{17pt} \mbox{if\ } t = \# l \mbox{ and } \parent(s,l) \neq \emptyset\\
& \{ \ V_t = N \ \} \mbox{ with } N = |\parent(s,l)|
\hspace{75pt} \mbox{if\ } t = \#_t l \\
& \{ \ V_t = \operatorname{\#min}\{l_{r_1}:Z_{r_1}=1\mtt{;}\dots\mtt{;}l_{r_q}:Z_{r_q}=1\} \ \} \\
& \hspace{195pt} \mbox{if\ } t = \operatorname{min} \\
& \{ \ V_t = N \ \} \mbox{ with } N = min\{l_{r_1},\dots,l_{r_q}\}
\hspace{40pt} \mbox{if\ } t = \operatorname{min_t} \\
& \{ \ V_t = \operatorname{\#max}\{l_{r_1}:Z_{r_1}=1\mtt{;}\dots\mtt{;}l_{r_q}:Z_{r_q}=1\} \ \}
\hspace{2pt} \mbox{if\ } t = \operatorname{max} \\
& \{ \ V_t = N \ \} \mbox{ with } N = max\{l_{r_1},\ldots,l_{r_q}\}
\hspace{37pt} \mbox{if\ } t = \operatorname{max_t} \\
& \{ \ V_t = \operatorname{\#sum}\{l_{r_1},r_1:Z_{r_1}=1\mtt{;}\dots\mtt{;}l_{r_q},r_q:Z_{r_q}=1\} \ \} \\
& \hspace{132pt} \mbox{if\ } t = \operatorname{sum} \mbox{ and } \parent(s) \neq \emptyset\\
& \{ \ V_t = N \ \} \mbox{ with } N = l_{r_1} + \ldots + l_{r_q}
\hspace{52pt} \mbox{if\ } t = \operatorname{sum_t} \\
& \{ \ V_t = \operatorname{\#count}\{l_{r_1}:Z_{r_1}=1\mtt{;}\dots\mtt{;}l_{r_q}:Z_{r_q}=1\} \ \} \\
& \hspace{127pt} \mbox{if\ } t = \operatorname{count} \mbox{ and } \parent(s) \neq \emptyset\\
& \{ \ V_t = N \ \} \mbox{ with } N = |\{l_r \mid r \in \parent(s)\}|
\hspace{32pt} \mbox{if\ } t = \operatorname{count_t} \\
& \{ \ V_t = 0  \}
\hspace{53pt} \mbox{if\ } t = \# l \mbox{ and } \parent(s,l) = \emptyset \\
& \hspace{90pt} \mbox{or } t = \operatorname{sum}, t = \operatorname{count} \mbox{ and } \parent(s) = \emptyset.
\end{flalign*}
%
\label{sec::admg}

For instance the first atom $V_t = \operatorname{\#sum}\{Z_{r_{i_1}},r_{i_1} \mtt{;}\dots\mtt{;} Z_{r_{i_w}},r_{i_w}\}$ corresponds to the computation of $val_o^s(\#l)$ when $\parent(s,l) = \{r_{i_1},\ldots,r_{i_w}\} \neq \emptyset$. Here $o= m^z_s$ is the multi-set of active labels of $s$ under $z$ and the assignments of truth values to statements of the ADF interpretation $z$ is captured via the variables $Z_{r_{i_j}}$ ($1 \leq j \leq z$).  An assumption of the encoding is that such variables $Z_{r_{i_j}}$ take only values $\tvt$ or $\tvf$, thus corresponding to two valued interpretations (e.g. completions).  This, as in the encodings for ADFs, is taken care of in the rules in which the atoms $\B_s$ occur (see the re-definition of the modules  $\pi_{\txt{sat}}$ and  $\pi_{\txt{sat2}}$ defined in Section  \ref{sec:encadf} for GRAPPA instances below).  

As pointed out earlier,
the encodings for \grappa instances
for $\sigma \in \{\adm,\com,\prf\}$
differ from the corresponding
\adf encodings
only in the fragments handling the evaluation of the acceptance patterns (under the completions of an interpretation). Hence, the encodings
$\pi_{\sigma}(G)$ for the \grappa instance $G$ boil down to the programs

\begin{flalign*}
\pi_{\adm}(G):=& \ \pi_{0}(G) \ \cup \ \pi_{\txt{guess}} \ \cup \ \pi_{\txt{sat}}'(G) \ \cup \\
&  \{ \derive \Parg{S}, \Pass{S,\tvt}, \Punsat{S}.\,
\derive \Parg{S}, \Pass{S,\tvf}, \Psat{S}. \}; \\
\pi_{\com}(G) :=& \ \pi_{\adm}(G) \ \cup \
\{ \derive \Parg{S}, \Pass{S,\tvu}, \Pnot\;\Punsat{S}.\\
&  \derive \Parg{S}, \Pass{S,\tvu}, \Pnot\;\Psat{S}. \}; \\
\pi_{\prf}(G) :=& \ \pi_{\adm}(G) \ \cup \ \pi_{\txt{guess2}} \ \cup \ \pi_{\txt{sat2}}'(G) \ \cup \ \pi_{\txt{check}}(G) \ \cup \
\pi_{\txt{saturate}} \ \cup \\
&\{ \derive \Pnot\;\Psaturate. \}.
\end{flalign*}

%
Here, the difference to the encoding for ADF semantics is the use of the program fragments

\begin{align*}
\pi_{\txt{sat}}'(G) := \{
& \Psat{s} \derive
\Bl_s, V_{\alpha(s)} = 1.\\
&
\Punsat{s} \derive
\Bl_s, V_{\alpha(s)} = 0.
\mid
s\in S
\}; \\[1pt]
\pi_{\txt{sat2}}'(G) := \{
& \PsatT{s} \derive
\Bl2_s, V_{\alpha(s)} = 1. \\
\,\,
& \PunsatT{s} \derive
\Bl2_s, V_{\alpha(s)} = 0.
\mid
s\in S
\}
\end{align*}

\noindent
where we make use of the shortcuts $\Bl_s$ and $\Bl2_s$.
In their definitions %
we in turn
use
the function
$\B_s(\phi)$ returning the %
atoms for evaluating a \grappa acceptance function:
%

\begin{align*}
\Bl_s :=& \{ \Pass{r,Y_r}, \Pleq{Y_r,Z_r} \mid
r\in\parent(s)\} \ \cup \ \B_s(\alpha(s)); \\
\Bl2_s :=& \{ \PassT{r,Y_r}, \Pleq{Y_r,Z_r} \mid
r\in\parent(s)\} \ \cup \ \B_s(\alpha(s)).
\end{align*}

\begin{propn}
\label{thm:admg}
\label{thm:prfg}
\label{thm:comg}
For $\sigma\in\{\adm,\com,\prf\}$
it holds for every \grappa instance $G$
that $\sigma(G) \cong \AS(\pi_\sigma(G))$.
\end{propn}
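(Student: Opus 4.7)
The plan is to show that each of the three encodings $\pi_\adm(G)$, $\pi_\com(G)$, and $\pi_\prf(G)$ differs from its corresponding ADF encoding $\pi_\adm(D)$, $\pi_\com(D)$, and $\pi_\prf(D)$ only through the replacement of the modules $\pi_{\txt{sat}}(D)$ and $\pi_{\txt{sat2}}(D)$ by $\pi_{\txt{sat}}'(G)$ and $\pi_{\txt{sat2}}'(G)$, together with the corresponding change from $\B(\ac{s})$ to $\B_s(\alpha(s))$. All other fragments ($\pi_{\txt{arg}}$, $\pi_{\txt{lt}}$, $\pi_{\txt{guess}}$, $\pi_{\txt{guess2}}$, $\pi_{\txt{check}}$, $\pi_{\txt{saturate}}$) remain unchanged. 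Consequently, the proofs of Propositions~\ref{thm:adm}, \ref{thm:com}, and~\ref{thm:prf} can be adapted with essentially a single technical modification in each direction: wherever the ADF proof argues that a substitution $\theta$ satisfying the atoms in $\Bl_s$ together with $\theta(V_{\ac{s}})=x$ corresponds to a completion $w$ with $w(\ac{s})=x$, we instead argue that such a substitution corresponds to a completion $w$ such that $\alpha(s)(m^w_s,s)=x$, where $m^w_s$ is the multiset of active labels of $s$ under $w$.

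The core technical step is therefore the following evaluation lemma for acceptance patterns. Let $G=(S,E,L,\lambda,\alpha)$ be a GRAPPA instance, $s\in S$ with $\parent(s)=\{r_1,\dots,r_q\}$, and $w$ a 2-valued interpretation of $S$. Let $\theta$ be the substitution with $\theta(Y_{r_i})=w(r_i)$ and $\theta(Z_{r_i})=w(r_i)$ for all $r_i\in\parent(s)$. Then, for any pattern $\phi$ appearing in $\alpha(s)$, $\theta(\B_s(\phi))$ is satisfied by a ground-atom set containing $\pi_\txt{lt}$ and the $\PassP$-atoms of $w$, and within it $V_\phi$ is bound to $\tvt$ iff $\phi$ evaluates to $\tvt$ under $m^w_s$, and to $\tvf$ otherwise. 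The proof proceeds by structural induction on $\phi$. The Boolean cases $\phi_1\wedge\phi_2$, $\phi_1\vee\phi_2$, $\neg\psi$ are direct, using the built-in arithmetic functions \texttt{\&}, \texttt{?}, \texttt{-} exactly as in the ADF case. The base case is the basic acceptance pattern $\tau R a$, which reduces via $P_s(\tau)$ and $T_s(t)$ to showing that each term $t$ is correctly evaluated: here one verifies, term-by-term, that the ASP aggregates \#sum, \#min, \#max, \#count restricted via guards like $Z_r=1$ precisely compute the functions $val_s^{m^w_s}$ defined in Section~2. For the label-insensitive variants ($\#_t l$, $\operatorname{min_t}$, $\operatorname{max_t}$, $\operatorname{sum_t}$, $\operatorname{count_t}$), the atom binds $V_t$ to a constant computed from $G$ alone, matching the definition directly; the edge cases where $\parent(s,l)=\emptyset$ or $\parent(s)=\emptyset$ are handled by the explicit $V_t=0$ atoms, matching the neutral-element convention.

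With this lemma in hand, each of the three proofs proceeds exactly as its ADF counterpart. For the ``only if'' direction of $\sigma=\adm$, given $v\in\adm(G)$, we construct $\aspint$ by analogy with the proof of Proposition~\ref{thm:adm}, placing $\Psat{s}$ (respectively $\Punsat{s}$) whenever some $w\in[v]_2$ makes $\alpha(s)(m^w_s,s)=\tvt$ (respectively $\tvf$), and check that $\aspint$ is a subset-minimal model of $\ared{\pi_\adm(G)}{\aspint}$; all steps carry over verbatim with $\B_s(\alpha(s))$ replacing $\B(\ac{s})$ via the lemma. The ``if'' direction likewise argues that any $\aspint\in\AS(\pi_\adm(G))$ determines a $v\cong\aspint$ which cannot have an $s\in S$ and $w\in[v]_2$ witnessing non-admissibility, since this would fire one of the two constraints. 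For $\sigma=\com$, the two additional constraints of $\pi_\com(G)$ are handled as in the sketch of Proposition~\ref{thm:com}. For $\sigma=\prf$, since $\pi_\prf(G)$ reuses $\pi_{\txt{guess2}}$, $\pi_{\txt{check}}(G)$, and $\pi_{\txt{saturate}}$ unchanged, the saturation argument of Proposition~\ref{thm:prf} transfers directly once we know that $\pi_{\txt{sat2}}'(G)$ evaluates $\alpha(s)$ correctly under the second guess.

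The main obstacle is the evaluation lemma itself, specifically verifying that the intricate aggregate expressions in $T_s(t)$ for the label-sensitive terms behave correctly under partial guesses and that corner cases (empty parent sets, non-numerical labels where $\operatorname{min}/\operatorname{max}/\operatorname{sum}$ become undefined, patterns mixing several terms) are faithfully captured. Once the lemma is settled, the adaptation of the three ADF proofs is mechanical: each reduct equation, each subset-minimality argument, and each saturation step goes through unchanged, since none of them relied on the propositional nature of $\ac{s}$ beyond the abstract correspondence between $\B$-atoms and 2-valued evaluation, which the lemma restores in the GRAPPA setting.
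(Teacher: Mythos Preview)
Your proposal is correct and takes essentially the same approach as the paper, which itself gives only a one-line sketch stating that the proofs are ``exactly as those of Propositions~\ref{thm:adm}, \ref{thm:com}, and~\ref{thm:prf}'' and differ only where the evaluation of acceptance patterns is referenced. Your write-up is considerably more detailed than the paper's own proof: in particular, your explicit evaluation lemma for $\B_s(\phi)$ and the term-by-term verification of $T_s(t)$ against $val_s^{m^w_s}$ make precise exactly the part the paper leaves implicit.
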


\begin{proof}(sketch)
The proofs are exactly as those of Propositions \ref{thm:adm}, \ref{thm:com}, and \ref{thm:prf}; they differ only in the parts in which reference is made to the encoding of the evaluation of the acceptance patterns.   
\end{proof}

\begin{example}
\label{ex:adm_grappa}
Consider the \grappa instance $G$ with
$S = \{a,b,c,d\}$, $E = \{(b,b),(a,c),(b,c),(b,d)\}$, $L = \{+,-\}$, $\lambda((b,b)) = +$,
$\lambda((a,c)) = +$, $\lambda((b,c)) = +$, $\lambda((b,d)) = -$, $\pi(s) = \#_t(+) - \#(+) = 0 \wedge \#(-) = 0$ for every $s \in S$.

The encoding $\pi_\adm(G)$ is as follows:

\begin{verbatim}
arg(a). arg(b). arg(c). arg(d).
lt(u,0). lt(u,1). lt(0,0). lt(1,1).

asg(S,0) :- not asg(S,1), not asg(S,u), arg(S).
asg(S,1) :- not asg(S,u), not asg(S,0), arg(S).
asg(S,u) :- not asg(S,0), not asg(S,1), arg(S).


sat(a) :-  Vbp1s2t = 0,Vbp1s2 = (-1)*Vbp1s2t,
           Vbp1s1t = 0, Vbp1s1 = Vbp1s1t + Vbp1s2,
           Vbp2s1t = 0, Vbp2s1 = Vbp2s1t,
           Vbp1 = #sum{1: Vbp1s1 = 0},
           Vbp2 = #sum{1: Vbp2s1 = 0}, Vp=Vbp1&Vbp2, Vp=1.
unsat(a) :-  Vbp1s2t = 0,Vbp1s2 = (-1)*Vbp1s2t,
             Vbp1s1t = 0, Vbp1s1 = Vbp1s1t + Vbp1s2,
             Vbp2s1t = 0, Vbp2s1 = Vbp2s1t,
             Vbp1 = #sum{1: Vbp1s1 = 0},
             Vbp2 = #sum{1: Vbp2s1 = 0}, Vp=Vbp1&Vbp2, Vp=0.
sat(b) :- asg(b,Y_b),lt(Y_b,Z_b),
          Vbp1s2t = #sum{Z_b,b},Vbp1s2 = (-1)*Vbp1s2t,
          Vbp1s1t = 1, Vbp1s1 = Vbp1s1t + Vbp1s2,
          Vbp2s1t = 0, Vbp2s1 = Vbp2s1t,
          Vbp1 = #sum{1: Vbp1s1 = 0},
          Vbp2 = #sum{1: Vbp2s1 = 0}, Vp=Vbp1&Vbp2, Vp=1.
unsat(b) :- asg(b,Y_b),lt(Y_b,Z_b),
            Vbp1s2t = #sum{Z_b,b},Vbp1s2 = (-1)*Vbp1s2t,
            Vbp1s1t = 1, Vbp1s1 = Vbp1s1t + Vbp1s2,
            Vbp2s1t = 0, Vbp2s1 = Vbp2s1t,
            Vbp1 = #sum{1: Vbp1s1 = 0},
            Vbp2 = #sum{1: Vbp2s1 = 0}, Vp=Vbp1&Vbp2, Vp=0.
sat(c) :- asg(a,Y_a),lt(Y_a,Z_a),asg(b,Y_b),
          lt(Y_b,Z_b),Vbp1s2t = #sum{Z_a,a;Z_b,b},
          Vbp1s2 = (-1)*Vbp1s2t,Vbp1s1t = 2, 
          Vbp1s1 = Vbp1s1t + Vbp1s2, Vbp2s1t = 0,
          Vbp2s1 = Vbp2s1t, Vbp1 = #sum{1: Vbp1s1 = 0},
          Vbp2 = #sum{1: Vbp2s1 = 0}, Vp=Vbp1&Vbp2, Vp=1.
unsat(c) :- asg(a,Y_a),lt(Y_a,Z_a),asg(b,Y_b),
            lt(Y_b,Z_b),Vbp1s2t = #sum{Z_a,a;Z_b,b},
            Vbp1s2 = (-1)*Vbp1s2t,Vbp1s1t = 2,
            Vbp1s1 = Vbp1s1t + Vbp1s2, Vbp2s1t = 0, 
            Vbp2s1 = Vbp2s1t, Vbp1 = #sum{1: Vbp1s1 = 0},
            Vbp2 = #sum{1: Vbp2s1 = 0}, Vp=Vbp1&Vbp2, Vp=0.
sat(d) :- asg(b,Y_b),lt(Y_b,Z_b),
          Vbp1s2t = 0,Vbp1s2 = (-1)*Vbp1s2t,
          Vbp1s1t = 0, Vbp1s1 = Vbp1s1t + Vbp1s2,
          Vbp2s1t = #sum{Z_b,b}, Vbp2s1 = Vbp2s1t,
          Vbp1 = #sum{1: Vbp1s1 = 0},
          Vbp2 = #sum{1: Vbp2s1 = 0}, Vp=Vbp1&Vbp2, Vp=1.
unsat(d) :- asg(b,Y_b),lt(Y_b,Z_b),
            Vbp1s2t = 0,Vbp1s2 = (-1)*Vbp1s2t,
            Vbp1s1t = 0, Vbp1s1 = Vbp1s1t + Vbp1s2,
            Vbp2s1t = #sum{Z_b,b}, Vbp2s1 = Vbp2s1t,
            Vbp1 = #sum{1: Vbp1s1 = 0}, 
            Vbp2 = #sum{1: Vbp2s1 = 0}, Vp=Vbp1&Vbp2, Vp=0.

:- arg(S), asg(S,1), unsat(S).
:- arg(S), asg(S,0), sat(S).
\end{verbatim}
\hfill$\diamond$
\end{example}

%
\label{sec::comg}

%
\label{sec::prefg}

\section{System \& overview of experiments}
\label{sec:exper}

We have implemented a system which, given an ADF, generates the encodings for the ADF presented in this work.  The system, \PNAME{YADF} (``Y'' for ``dynamic''), is publicly available (see the link provided in the introduction)
and currently (version 0.1.1) supports the admissible, complete, preferred, and stable semantics.  It is implemented in \PNAME{Scala} and can, therefore, be run as a \PNAME{Java} executable.

The input format for \PNAME{YADF} is the input format that has become the standard for ADF systems.  Each statement $x$ of the input ADF is encoded via the string $s(x)$ (alternatively, for legacy reasons, also $statement(x)$ can be used).  The acceptance condition $F$ of $x$ is specified in prefix notation via $ac(x,F)$. For example the acceptance conditions of the ADF from Example \ref{ex:adf} are encoded as follows:

\begin{verbatim}
s(a).
s(b).
s(c).
ac(a,or(neg(b),b)).
ac(b,b).
ac(c,imp(c,b)).
\end{verbatim}

\noindent Note the period at the end of each line. Here $or$, $imp$, $neg$ stand for $\vee$, $\rightarrow$, $\neg$ respectively. On the other hand $and$, $c(v)$, and $c(f)$ can be used for $\wedge$, $\top$, and $\bot$.

A typical call of \PNAME{YADF} (using a \PNAME{UNIX} command line) looks as follows:

\begin{verbatim}
java -jar yadf_0.1.1.jar -adm -cred a filename | \ 
      ./path/to/lpopt | ./path/to/clingo
\end{verbatim}

\noindent Here we ask \PNAME{YADF} for the encoding of credulous reasoning \wrt the admissible semantics for the ADF specified in the file specified via $filename$ and the statement $a$.  As hinted at in the introduction to this work, using the rule decomposition tool $\PNAME{lpopt}$\footnote{\url{https://www.dbai.tuwien.ac.at/research/project/lpopt/}}~\cite{BichlerMW16c} is recommended for larger ADF instances.  We have tested \PNAME{YADF} using the ASP solver \PNAME{clingo}~\cite{GebserKKLOORSSW18}.  We provide the complete usage (subject to change in future versions) of \PNAME{YADF}:

\begin{verbatim}
usage: yadf [options] inputfile
with options:
 -h              display this help
 -adm            compute the admissible interpretations
 -com            compute the complete interpretations
 -prf            compute the preferred interpretations
 -stb            compute the stable interpretations
 -cred s         check credulous acceptance of statement s
 -scep s         check sceptical acceptance of statement s
\end{verbatim}

A generator for encodings for \PNAME{GRAPPA} following the methodology outlined in this work are also available as part of the system \PNAME{GrappaVis}\footnote{\url{https://www.dbai.tuwien.ac.at/proj/adf/grappavis/}}.  The focus of this system is on providing graphical means of specifying and evaluating \PNAME{GRAPPA} instances and it includes means of generating static as well as dynamic encodings to ASP.

We reported on an empirical evaluation of the performance of \PNAME{YADF} \wrt the main alternative ADF systems available at that time in \cite{BrewkaDHLW17}.  The other ADF systems we considered then are the static ASP-based system \PNAME{DIAMOND}~\cite{EllmauthalerS2014} (version 0.9) as well as the QBF-based system \PNAME{QADF}~\cite{DBLP:conf/comma/DillerWW14} (version 0.3.2).  In these experiments we found \PNAME{YADF} to be competitive for credulous reasoning under the admissible semantics, while outperforming \PNAME{DIAMOND} and \PNAME{QADF} when carrying out skeptical reasoning under the preferred semantics.

Since our experimental evaluation for AAAI'17, there have been two particularly noteworthy experimental evaluations of ADF systems including our system \PNAME{YADF}~\cite{DillerZLW18,LinsMarNisWalWol18}.  All of these evaluations build on the experimental setup of~\cite{BrewkaDHLW17}; in particular, they also focus on credulous reasoning for the admissible semantics as well as skeptical reasoning for the preferred semantics. Yet, the experiments consider larger sets of (also larger) ADF instances\footnote{The more recent experiments do not consider, on the other hand, ADFs generated via the grid-based generator used in the experiments reported on in~\cite{stefanell12,DBLP:conf/comma/DillerWW14} and that we also considered in~\cite{BrewkaDHLW17}.  Note, nevertheless, that the grid-based generator offers less flexibility in generating ADFs than the graph-based generator used in subsequent experiments.}.  Also, the more recent experimental evaluations include the latest version of \PNAME{DIAMOND}, \PNAME{goDIAMOND}~\cite{StrassE17}, as well as a novel system for ADFs based on incremental SAT solving, \PNAME{k++ADF} (presented in~\cite{LinsMarNisWalWol18}).
We give a brief overview of the setup and results of the mentioned evaluations to then summarise what is currently known about the performance of \PNAME{YADF} vs. other existing ADF systems.

As already indicated, the experimental evaluations of~\cite{DillerZLW18,LinsMarNisWalWol18} build on the setup from~\cite{BrewkaDHLW17}.  In particular they make use of the graph-based ADF generator we introduced in~\cite{BrewkaDHLW17}\footnote{This generator underlies the subsequent version available at \url{https://www.dbai.tuwien.ac.at/proj/grappa/subadfgen/}.}.  This generator takes any desired directed
graph as input and generates an ADF inheriting the structure of the graph.  This means that the edges of the graph become links and the nodes become statements of the resulting ADF.  In the experiments in~\cite{DillerZLW18,LinsMarNisWalWol18} the generator was only modified to take an undirected rather than directed graph as input (providing more flexibility).  A probability controls whether an edge in the input graph will result in a symmetric link in the ADF (in the experiments a probability of 0.5 is used); in case of non-symmetric links the direction of the link is chosen at random.

For constructing the acceptance conditions of the ADFs, the graph based generator 
assigns each of the parents of a statement to one of 5 different groups (with
equal probability in the experiments).  This assignment determines
 whether the parent participates
in a subformula of the statement's acceptance condition
representing the notions of
attack, group-attack, support, or group-support
familiar from argumentation.
Also, the parents can
appear as literals connected by %
the exclusive-or connective ($\oplus$; this, in order to capture the full complexity of ADFs).  More precisely, if for a statement $s_0$, the parents $s_1,\ldots,s_n$
are assigned to the group for attack, the corresponding subformula for these parents in the acceptance condition of $s_0$ has the form 
 $\neg s_1 \wedge \ldots \wedge \neg s_n$.
The subformulas for group-attack, support, group-support, and the exclusive-or group on the other hand have
the form $\neg s_1 \vee \ldots \vee \neg s_n$, $s_1 \vee \ldots \vee s_n$, $s_1 \wedge \ldots \wedge s_n$, and $l_1 \oplus \ldots \oplus l_n$ respectively.  
  In the last suformula, $l_i$ ($1 \leq i \leq n$) is either $s_i$ or $\neg s_i$ with equal probability.  Also, for groups to which no parents are assigned, the corresponding subformulas are $\top$ or $\bot$ with identical probability.     
To generate the final acceptance condition of a statement, the subformulas for the different groups of parents of the statement are
connected via $\wedge$ or $\vee$; again, with equal probability.

In~\cite{DillerZLW18} (extending the evaluations from~\cite{Keshavarzi17}) the authors compare the performance of ADF systems on acyclic, being those whose underlying graph is acyclic, vs. non-acyclic ADFs (i.e. ADFs whose underlying
graph is not guaranteed to be acyclic).  The combinations of ADF and back-end systems used are as in~\cite{BrewkaDHLW17}, except that now also the system \PNAME{goDIAMOND} 
(version 0.6.6) is considered\footnote{The reason for not using the other more recent versions of \PNAME{DIAMOND}, versions 3.0.x implemented in \PNAME{C++}~\cite{EllmauthalerS2016}, is the decrease of performance to previous versions of \PNAME{DIAMOND} documented in~\cite{StrassE17}. }.  Thus \PNAME{QADF} is version 0.3.2
with
 the preprocessing tool \PNAME{bloqqer} 035~\cite{HeuleJLSB15} 
 and the QSAT solver 
 \PNAME{DepQBF} 4.0~\cite{LonsingB10,LonsingE17}
 .
%
\PNAME{YADF} is version 0.1.0 with the rule decomposition tool \PNAME{lpopt} version 2.0 
%
%
%
%
%
%
 and the ASP solver \PNAME{clingo} 4.4.0~\cite{GebserKKLOORSSW18}
 .  Version 0.1.0 of \PNAME{YADF} is identical to version 0.1.1
  except that it does not generate the encodings for the stable semantics.  The time-out set for the experiments reported on in~\cite{DillerZLW18} is 600 seconds.

The main difference of the experimental setup used in~\cite{DillerZLW18} w.r.t. that of~\cite{BrewkaDHLW17} is that now the benchmark set is generated from Dung argumentation frameworks (AFs) interpreted as (undirected) graphs obtained from benchmarks used at the second international competition of argumentation (ICCMA'17)~\cite{GagglLMW18}.  These result from encoding assumption-based argumentation problems into AFs (``ABA'')~\cite{lehwaljarv17}, encoding planning problems as AFs (``Planning'')~\cite{cergiaval17}, and a data-set of AFs generated from traffic networks (``Traffic'')~\cite{dill17}.
Specifically, based on preliminary experiments, for the experiments in~\cite{DillerZLW18} 100 AFs were selected at random from a subset of AFs having up to 150 arguments in the very dense AFs in the ``ABA'' data-set, and 100 AFs at random from a subset of AFs having up to 300 arguments in each of the ``Planning'' and ``Traffic'' benchmarks. From the resulting 300 AFs interpreted as undirected graphs, 300 acyclic and 300 non-acyclic ADFs were generated using the graph-based ADF generator.

In the study reported on in~\cite{LinsMarNisWalWol18} the authors generate reasoning problems from the same set of ADFs used in~\cite{DillerZLW18}\footnote{The exact encodings used differ nevertheless because the choice of the statements whose acceptability is checked may differ.} but also consider the new ADF system \PNAME{k++ADF} they implement.
  The authors also make use of novel versions for the back-end systems w.r.t. previous experiments. Thus for \PNAME{goDIAMOND} version 
  0.6.6 is still used but now with \PNAME{clingo} 5.2.1.  For \PNAME{QADF} version 0.3.2\footnote{The paper mistakenly reports use of version 2.9.3 which does not exist (\PNAME{QADF} version 0.3.2 is implemented in version 2.9.3 of the programming language \PNAME{Scala}).}  with \PNAME{bloqqer} 037 and \PNAME{DepQBF} 6.03 is considered. Finally, for \PNAME{YADF} version 0.1.0 is taken in account, but now with \PNAME{lpopt} 2.2  and \PNAME{clingo} 5.2.1.  The version of \PNAME{k++ADF} was version 2018-07-06; the SAT solver used is \PNAME{MiniSAT}~\cite{EenS03} version 2.2.0.  The variation of back-end systems used in~\cite{LinsMarNisWalWol18} w.r.t. previous experiments leads to some variation in the results obtained, in particular for \PNAME{YADF}\footnote{Especially for the admissible semantics.  E.g. \PNAME{YADF} has 47 and 21 time-outs on the ``Traffic'' and ``Planning'' data-sets, while in the study reported on in~\cite{DillerZLW18} the time-outs were 2 and 0 respectively.  We have determined (via tests carried out using the different versions of \PNAME{lpopt} on instances for which there were time-outs in the study from~\cite{LinsMarNisWalWol18}) 
the cause of this to be the use of \PNAME{lpopt} version 2.2, which seems to have problems in generating the rule-decompositions of some of the encodings obtained via \PNAME{YADF} in a timely-manner; while versions previous to 2.2. (we also tried 2.0 and 2.1) don't have this issue.}.   For the experiments reported on in~\cite{LinsMarNisWalWol18} the time-out is also larger than for previous experiments: 1800 seconds.

  We summarise the results obtained in the studies from~\cite{DillerZLW18} and~\cite{LinsMarNisWalWol18} in bullet-point fashion (we of course refer to the alluded works for details).  As already hinted at, these studies build on (and largely confirm the results obtained in) previous studies, mainly that of~\cite{BrewkaDHLW17}.  When making reference to~\cite{DillerZLW18} we refer to the studies on the (possibly) non-acyclic ADF instances since these can be compared to the study of~\cite{LinsMarNisWalWol18} which does not consider acyclic ADFs (nevertheless, some comments on the performance of especially \PNAME{YADF} on the acyclic instances follow).  In the summary, when mentioning results for a particular solver, we use the best of the results for that solver obtained in the different studies. Also, as a reminder to the reader, in this discussion when we refer to the solvers \PNAME{k++ADF}, \PNAME{goDIAMOND}, \PNAME{YADF}, and \PNAME{QADF}, except if stated otherwise, these are versions 2018-07-06, 0.6.6, 0.1.0, and 0.3.2 respectively.  In particular, we again note that \PNAME{YADF} version 0.1.0 is identical to the newer version detailed in this work (0.1.1) for the encodings considered in the experiments we allude to\footnote{We note also that there is meanwhile a newer version of \PNAME{QADF} (see \url{https://www.dbai.tuwien.ac.at/proj/adf/qadf/}) (version 0.4.0) which includes link-information-sensitive encodings (see Section 3.1.3 of~\cite{dil19}), but is otherwise (i.e. when not using the link-information-sensitive variants of the encodings) identical to version 0.3.2 considered in the experiments reported on in~\cite{DillerZLW18,LinsMarNisWalWol18}.}.

\begin{itemize}
\item For credulous reasoning w.r.t. the admissible semantics each of \PNAME{k++ADF}  (when using the link information sensitive variant \PNAME{ADM-K-BIP}, rather than \PNAME{ADM-2}), \PNAME{goDIAMOND}, and \PNAME{YADF} (making use of \PNAME{lpopt} version 2.0 as in the experiments from~\cite{BrewkaDHLW17,
  DillerZLW18}), have rather acceptable performance on the ``Traffic'' and ``Planning'' data-sets.  (The same holds for \PNAME{DIAMOND} version 0.9 on a small set of ADFs generated from metro-networks~\cite{BrewkaDHLW17,Keshavarzi17}.)  The order in which we mention the solvers reflects the improvement in performance, with \PNAME{k++ADF} being the clear ``winner''.  The system \PNAME{QADF} (even in the more advantageous configuration with \PNAME{bloqqer} version 035 and \PNAME{DepQBF} version 4.0 from~\cite{BrewkaDHLW17,DillerZLW18}) on the other hand already has quite a few time-outs on the ``Traffic'' and ``Planning'' instances.  We remind the reader that the ``Traffic'' and ``Planning'' data-sets include ADFs with 10 to 300 statements resulting from the underlying graphs obtained from representing transportation-networks as AFs~\cite{dill17} and encoding planning problems into AFs~\cite{cergiaval17} respectively.  
\begin{itemize}
\item Thus \PNAME{k++ADF} (in the link-information-sensitive variant \PNAME{ADM-K-BIP}) had 0 time-outs (1800 seconds) and 0.05 seconds mean running time, \PNAME{goDIAMOND} 0 time-outs and 6.42 seconds mean running time in the experiments reported on in~\cite{LinsMarNisWalWol18} on the ``Traffic'' data-set.  \PNAME{YADF} had 2 time-outs (600 seconds) and 5.68 seconds mean running time (disregarding time-outs) in the experiments reported on in~\cite{DillerZLW18}.  The system \PNAME{k++ADF} (implementing \PNAME{ADM-K-BIP}) had 0 time-outs and 0.14 seconds mean running time, \PNAME{goDIAMOND} 0 time-outs and 6.72 seconds mean running time in the experiments reported on in~\cite{LinsMarNisWalWol18} on the ``Planning'' data-set.  \PNAME{YADF} had 0 time-outs and 13.20 seconds mean running time in the experiments reported on in~\cite{DillerZLW18}.  \PNAME{QADF} had 25 time-outs and 2.15 seconds mean running time on the ``Traffic'' instances and 59 time-outs and 14.63 seconds mean running time on the ``Planning'' instances~\cite{DillerZLW18}.
\end{itemize}
\item For credulous reasoning w.r.t. the admissible semantics, but now on the ``ABA'' data-set; here all ADF systems have some time-outs, yet again the results for \PNAME{k++ADF} are the most promising.  We remind the reader that the ``ABA'' data set consists in 100 very dense ADFs having between 10 to 150 statements resulting from the underlying graphs of encoding problems for assumption-based-argumentation frameworks to AF reasoning problems~\cite{lehwaljarv17}.  
\begin{itemize}
\item Thus \PNAME{k++ADF} (now in the \PNAME{ADM-2} variant) had 12 time-outs (1800 seconds) and mean running time of 16.12 seconds in the experiments of~\cite{LinsMarNisWalWol18}.  Interestingly, for the ``ABA'' data-set \PNAME{QADF} (with \PNAME{bloqqer} version 035 and \PNAME{DepQBF} version 4.0) gets ``second-place'' having 30 time-outs (600 seconds) and 8.15 seconds mean running time in the experiments from~\cite{DillerZLW18}.  The system \PNAME{goDIAMOND} has 52 time-outs and \PNAME{YADF} 56 time-outs in the experiments from~\cite{DillerZLW18}. 
\end{itemize}
\item For the preferred semantics, the performance of \PNAME{YADF} and \PNAME{QADF} (as well as version 0.9 of \PNAME{DIAMOND} on ADFs resulting from traffic networks~\cite{BrewkaDHLW17,Keshavarzi17}) worsens considerably on the ``Traffic'' and ``Planning'' problems (w.r.t. results for the admissible semantics), while \PNAME{k++ADF} (particularly in the link-information-sensitive variant \PNAME{PRF-K-BIB-OPT}, but not in the variant \PNAME{PRF-3}) and \PNAME{goDIAMOND} have much better performance.
\begin{itemize}
\item Thus \PNAME{YADF} (\PNAME{lpopt} version 2.0) has 36 time-outs on the ``Traffic'' instances and 71 time-outs on the ``Planning'' instances in the study from~\cite{DillerZLW18}.  \PNAME{QADF} has 80 and 100 time-outs on the ``Traffic'' and ``Planning'' benchmarks (again, study from~\cite{DillerZLW18}).  On the other hand, \PNAME{goDIAMOND} has 0 time-outs on both data-sets with 28.42 seconds and 17.52 seconds mean running times on the ``Traffic'' and ``Planning'' instances respectively~\cite{LinsMarNisWalWol18}.  The system \PNAME{k++ADF} (in the \PNAME{PRF-K-BIB-OPT} variant) manages having only 1 time-out on the ``Traffic'' instances and 3 on the ``Planning'' instances with 17.18 and 11.14 seconds mean running time respectively~\cite{LinsMarNisWalWol18}.
\end{itemize}
\item All ADF systems also have time-outs when solving skeptical acceptance w.r.t the preferred semantics on the ``ABA'' data-set, with \PNAME{k++ADF} in the \PNAME{PRF-K-BIB-OPT} variant having the least (16 time-outs~\cite{LinsMarNisWalWol18}) and \PNAME{QADF} the most (81 time-outs~\cite{DillerZLW18}).
\begin{itemize}
\item Thus \PNAME{k++ADF} in the \PNAME{PRF-K-BIB-OPT} variant has 16 time-outs and 25.90 seconds mean running time~\cite{LinsMarNisWalWol18}, while \PNAME{QADF} has 81 time-outs (with 32.73 seconds running time on the remaining instances)~\cite{DillerZLW18}.  \PNAME{YADF} has 57 time-outs and 39.46 seconds mean running time, while \PNAME{goDIAMOND} has 52 time-outs and 27.67 seconds mean running time~\cite{DillerZLW18}.     
\end{itemize}
\end{itemize}

To conclude, while our experiments from~\cite{BrewkaDHLW17} (on the instances obtained via the grid-based generator first used in~\cite{stefanell12} and ADFs constructed from a limited set of traffic networks also used in subsequent experiments) suggested \PNAME{YADF} to be the better performing of the then considered systems (including \PNAME{DIAMOND} version 0.9 and \PNAME{QADF} 0.3.2), this picture has changed with subsequent experiments~\cite{
  DillerZLW18,LinsMarNisWalWol18} involving the new systems \PNAME{goDIAMOND} and \PNAME{k++ADF} as well as more (and larger) data-sets.  In particular, the clearly overall best performing approach for ADF systems seems to be, at current moment, the incremental SAT-based approach implemented in the system \PNAME{k++ADF} (despite the fact that even this system still has quite a few time-outs for the preferred semantics on the ABA data-set).  But even just considering ASP-based systems, while competitive for the admissible semantics, \PNAME{YADF} is clearly behind in performance w.r.t. \PNAME{goDIAMOND} for the preferred semantics on the ``Traffic'' and ``Planning'' data-sets.

Some reason for nevertheless sticking to the dynamic ASP based approach presented in this work (vs. static encodings) is provided by the results on the performance of \PNAME{YADF} on the ABA data-set (in the configurations from~\cite{DillerZLW18}).  Here the constraint built into \PNAME{goDIAMOND} of not supporting ADFs with statements having more than 31 parents is reflected in the constant number of time-outs (52; and similar mean running times: ca. 21 seconds for admissible, 27 seconds for preferred) on all reasoning tasks (admissible and preferred) and for acyclic as well as non-acyclic instances (the latter in the experiments from~\cite{DillerZLW18}).  Indeed, the constraint built in to \PNAME{goDIAMOND} of not supporting ADFs with statements having more than 31 parents is due to the fact that this system (as previous versions of \PNAME{DIAMOND}) needs to convert acceptance conditions of ADFs into a boolean function representation (with a potential exponential explosion), which our dynamic encoding strategy allows to circumvent. Thus, while \PNAME{YADF} still has many time-outs (in fact, a few more than \PNAME{goDIAMOND}) there is some (slight) improvement on the acyclic instances: 54 time-outs with 7.38 seconds mean running time vs. 56 time-outs with 31.39 seconds mean running time for the admissible semantics and 54 time-outs with 16.77 seconds mean running time vs. 57 time-outs with 39.46 seconds mean running time for the preferred semantics.  These results suggest room for improvement as well as, in accordance with the theoretical considerations motivating our dynamic ASP-based approach, a potential niche for the use of (a further optimised) \PNAME{YADF} vs. e.g. \PNAME{goDIAMOND}.

\nop{

  \section{Experiments}
\label{sec:exper}

\todo{MD@All: How about eliminating this section and refer to results in our AAAI'17 as well as COMMA'18 papers in the discussion instead?}

Papers to cite (in discussion?):  \cite{BrewkaDHLW17} and \cite{DillerZLW18}.

We implemented all encodings for ADFs and GRAPPA presented in this
work.
To make use of the encodings for reasoning, these
need to be fed to an ASP solver such as \PNAME{clingo} \cite{DBLP:journals/aicom/GebserKKOSS11}.
We carried out experiments to compare the performance of our approach with existing
systems for ADFs. %
Specifically, we compared the performance of our prototype system,
\PNAME{YADF} (``Y'' stands for ``dynamic''), with that of
\PNAME{DIAMOND} and the QBF based system \PNAME{QADF}.  We focused on credulous and skeptical
reasoning for admissible and preferred semantics, respectively.

To generate ADFs, we first used a ``grid-based'' ADF generator
which has been employed in previous evaluations \cite{DBLP:conf/comma/DillerWW14}.
Here statements have as parents a subset of 8 possible neighbors of a randomly generated grid of width 7.  Acceptance conditions are
generated by connecting parents via $\wedge$ or $\vee$. Probabilities determine the choice of these connectives and
whether parents appear negated or are replaced by truth constants.

We also wrote our own graph-based generator which takes a directed
graph as input and generates an ADF inheriting the structure of the graph.
Each parent of a statement is
assigned to one of 5 different groups (with
equal probability in our experiments),
determining whether the parent participates
in a subformula of the statement's acceptance condition
representing the notions of
attack, group-attack, support, or group-support
familiar from argumentation.
Also, the parents can
appear as literals connected by %
XOR (in order to capture the full complexity of ADFs).
The subformulas are
connected via $\wedge$ or $\vee$ with equal probability.  In our experiments the input graphs
 represent public transport networks of 8 different
cities.

\begingroup
\begin{table}[t!]
\resizebox{0.47\textwidth}{!}{
\begin{tabular}{p{1.1cm}p{1.5cm}p{1.1cm}p{1.1cm}|p{1.48cm}p{1.45cm}p{1.45cm}}
\hline
\hline
 & \multicolumn{3}{c}{Cred-$\adm$} & \multicolumn{3}{c}{Skept-$\prf$}    \\
\cline{2-4} \cline{5-7}
& \PNAME{DIAMOND} & \PNAME{QADF} &  \PNAME{YADF}  & \PNAME{DIAMOND} & \PNAME{QADF}  & \PNAME{YADF} \\
\hline
Gri-10 & 0.11 (0) & 0.62 (0)  & 0.66 (0) &   0.31 (0)   &  0.9 (0) &  0.75 (0)  \\
Gri-20 & 0.35 (0) & 0.8 (0) & 0.96 (0) &   51.17 (20)   & 41.53 (0)  &  1.26 (0)   \\
Gri-30 & 0.9 (0) & 1.01 (0) &  1.13 (0) &  51.48 (38)  & 497.4 (39)  &  1.76 (0)   \\
Gri-40 & 1.64 (0) & 1.21 (0) & 1.34 (0)&  - (40)  & - (40) &  2.68 (0)    \\
Gri-50 & 2.8 (0) & 1.47 (0)  &   1.52 (0) &  - (40)  & - (40) &  4.83 (0)    \\
Gri-60 & 4.3 (0) & 2.08 (0)  & 1.86 (0) &  - (40)  & - (40) &  9.6 (0)   \\
Gri-70 & 6.52 (0) & 3.52 (0) &  2.08 (0) &  - (40) & - (40) &  68.48 (1)  \\
Gri-80 & 8.83 (0) & 3.08 (1) &  2.37 (0) &  - (40) & - (40) &   84.37 (6)  \\
\hline
Metro & 5.7 (0) & 5.86 (7) &  1.6 (0)  & - (40) & - (40) & 43.01 (11) \\
\hline
\hline
\end{tabular}
}
\vspace{-2.5mm}
\caption{%
Mean running times in seconds
for credulous reasoning under
$\adm$
and
skeptical reasoning %
under
$\prf$
on ADF instances
generated by the grid-based (Gri-X = ADFs with X statements) and
graph-based generator
(5 ADFs per city).
Number of time-outs (out of 40 instances; with time-out of 600 seconds) in parentheses.
Mean running times are computed disregarding time-outs.}
\label{table:C1}
\end{table}

\begin{figure}[t!]
\centering
\includegraphics{plot.pdf}
\vspace{-2.5mm}
\caption{%
Number of instances solved in running time less
than $x$ seconds
for credulous reasoning under admissible %
and
skeptical reasoning %
under preferred semantics.
All grid-based and graph-based instances were considered ($360$ total).}
\label{fig:plot}
\end{figure}
\endgroup

Experiments were carried out on a 48 GB \PNAME{Debian} (8.5) machine with 8 \PNAME{Intel Xeon} processors (2.33 GHz).
Due to known problems with the latest available versions\footnote{%
The version of \PNAME{DIAMOND} reported in \cite{EllmauthalerS2016} was not yet available
at time of submission.}
of \PNAME{DIAMOND}, 2.0.2 and 2.0.0,
we report on our results with version 0.9 (modified to support credulous and skeptical reasoning).
For \PNAME{QADF} we used version 0.3.2
with
\PNAME{bloqqer} 035 \cite{Biere:2011:BCE:2032266.2032276}
and
\PNAME{DepQBF} 4.0 \cite{DBLP:journals/jsat/LonsingB10}.
\PNAME{YADF} is version 0.1.0 with the rule decomposition tool \PNAME{lpopt} \cite{BichlerMW16}
and \PNAME{clingo} 4.4.0.

As can be seen from Table~\ref{table:C1} and Figure~\ref{fig:plot},
our system performed comparably to \PNAME{DIAMOND} and \PNAME{QADF} for credulous reasoning under $\adm$, somewhat
better on the public transport (or ``metro'') based   instances.
There is a clear advantage in
the use of our encodings over \PNAME{DIAMOND} and \PNAME{QADF} for skeptical reasoning under $\prf$,
although there are 7 time-outs on the grid based instances with 70 and 80
statements as well as on 11 of the metro-based instances.
Experiments on very dense randomly generated graphs
(not in the table)
showed slightly better performance of \PNAME{DIAMOND}
compared to \PNAME{YADF}
for credulous reasoning.

}


\section{Discussion}\label{sec:disc}

In this work, we developed novel ASP encodings
for advanced reasoning problems in argumentation that
reach up to the third level of the polynomial hierarchy.
Compared to previous work, we rely on translations that
make a single call to an ASP-solver sufficient.
The key idea is to reduce one dimension of complexity to ``long'' rule
bodies, exploiting the fact that checking whether such a rule fires
is already $\NP$-complete (as witnessed by the respective complexity
of conjunctive queries \cite{stoc:ChandraM77}); see also 
\cite{BichlerMW2016} who advocated this idea as a programming technique in the world of ASP.

We implemented our approach for %
ADF and GRAPPA.  
  Our experiments show the potential of our approach.  Still,
the number of statements we can handle
is somewhat limited.  Our encodings thus might also be
interesting benchmarks for ASP competitions.  
Nontheless, there are certain aspects which have to be considered in future versions of our system.
In fact, a crucial aspect for the programming technique 
due to \cite{BichlerMW2016} is the possible decomposition of long
rules, since grounders have severe problem  with such rules. As reported in a recent paper~\cite{BichlerMW18}
that also employs this technique, the actual design of long rules can strongly influence the runtime.
We shall thus analyse our encodings in the light of the findings in~\cite{BichlerMW18}
in order to allow for better decomposition whenever possible.

Beyond boosting performance of our system, future work is to apply our approach to alternative ADF semantics~\cite{Polberg14} as well as more recent generalizations of ADFs and GRAPPA such as weighted ADFs~\cite{BrewkaSWW18} 
and multi-valued GRAPPA~\cite{BrewkaPW19}; for dealing with possibly infinitly many values in this context, 
recent advances in ASP~\cite{JanhunenKOSWS17} might prove useful.
Also the application of other recent ASP techniques (e.g.\ \cite{BogaertsJT16,Redl17})
that allow for circumventing the problem of an exponential blow-up when problems beyond
the second level of the polynomial are treated is of interest.

\bibliographystyle{acmtrans}
\bibliography{ecai2}

\label{lastpage}
\end{document}